\documentclass[twocolumn,twoside]{IEEEtran}
\usepackage{url}
\usepackage{etex}
\usepackage{tabularx}
\usepackage[normalem]{ulem}
\reserveinserts{28}
% Common commands and definitions
\usepackage{amsmath,graphicx}
\usepackage{amssymb}
\usepackage{algorithmic}
\usepackage{algorithm}
\usepackage{multirow,multicol}
\usepackage{pdfpages}
\usepackage{epstopdf}
\usepackage{subfigure}
\usepackage{balance}

\usepackage{cite}

\usepackage{amsthm}

\newtheorem{proposition}{Proposition}

\newtheorem{remark}{Remark}

\newtheorem{definition}{Definition}
\newtheorem{example}{Example}

\newenvironment{densitemize}%    Densely printed itemized list,
{\begin{list}               %    with flush left bullets.
    {$\bullet$ \hfill}{
        \setlength{\leftmargin}{\parindent}
        \setlength{\parsep}{0.04\baselineskip}
        \setlength{\itemsep}{0.5\parsep}
        \setlength{\labelwidth}{\leftmargin}
        \setlength{\labelsep}{0em}}
    }
{\end{list}}

\providecommand{\eref}[1]{\eqref{#1}}  % call \eqref from amstex
\providecommand{\cref}[1]{Chapter~\ref{#1}}

\providecommand{\fref}[1]{Figure~\ref{#1}}

\providecommand{\R}{\ensuremath{\mathbb{R}}}

\providecommand{\E}{\ensuremath{\mathbb{E}}}
\providecommand{\N}{\ensuremath{\mathbb{N}}}

\providecommand{\Pb}{\ensuremath{\mathbb{P}}}

\providecommand{\bydef}{\overset{\text{def}}{=}}

\renewcommand{\vec}[1]{\ensuremath{\boldsymbol{#1}}}
\providecommand{\mat}[1]{\ensuremath{\boldsymbol{#1}}}

% Some calligraphic letters

\providecommand{\calB}{\mathcal{B}}

\providecommand{\calQ}{\mathcal{Q}}

% Common matrices and vectors

\providecommand{\mG}{\mat{G}}

\providecommand{\mI}{\mat{I}}

\providecommand{\vc}{\vec{c}}

% Greek Matrix bold

% Greeks Vector bold

\providecommand{\vtheta}{\vec{\theta}}

% Tilde

% Tilde Matrix / Vector

% Hat

\providecommand{\chat}{\widehat{c}}

% Hat Matrix / Vector

\providecommand{\vchat}{\boldsymbol{\widehat{c}}}

% Star

% Standard 0 and 1

% Operators

\providecommand{\Var}{\mathrm{Var}}

% Texts

\newcommand{\argmax}[1]{\mathop{\underset{#1}{\mbox{argmax}}}}

%\newgray{Gray08}{0.8}
%\newgray{Gray07}{0.7}
%\newgray{Gray06}{0.6}
%\newgray{Gray05}{0.5}

%\input{customFigureMacros}

\begin{document}

\title{Optimal Threshold Design for Quanta Image Sensor}
\author{Omar A. Elgendy,~\IEEEmembership{Student Member,~IEEE} and Stanley H. Chan,~\IEEEmembership{Member,~IEEE}
\thanks{The authors are with the School of Electrical and Computer Engineering, Purdue University, West Lafayette, IN 47907, USA. Email: \texttt{\{ oelgendy, stanchan\}@purdue.edu}. The work was supported, in part, by the U.S. National Science Foundation
under Grant CCF-1718007. A preliminary version of this paper was presented at the 2016 IEEE International Conference on Image Processing (ICIP), Phoenix, AZ.}
\thanks{This paper follows the concept of reproducible research. All the results and examples presented in the paper are reproducible using the code and images available online at http://engineering.purdue.edu/ChanGroup/.}}

\markboth{IEEE TRANSACTIONS ON COMPUTATIONAL IMAGING,~Vol.~X, No.~X, XXX~20XX}{Elgendy-Chan: Optimal Threshold Design for Quanta Image Sensor}

\graphicspath{{pix/}}

\maketitle

%------------------------------------------------------------------------------------------------------------------
%			Abstract
%------------------------------------------------------------------------------------------------------------------
\begin{abstract}
\textcolor{black}{Quanta Image Sensor (QIS) is a binary imaging device envisioned to be the next generation image sensor after CCD and CMOS. Equipped with a massive number of single photon detectors, the sensor has a threshold $q$ above which the number of arriving photons will trigger a binary response ``1'', or ``0'' otherwise. Existing methods in the device literature typically assume that $q = 1$ uniformly. We argue that a spatially varying threshold can significantly improve the signal-to-noise ratio of the reconstructed image. In this paper, we present an optimal threshold design framework. We make two contributions. First, we derive a set of oracle results to theoretically inform the maximally achievable performance. We show that the oracle threshold should match exactly with the underlying pixel intensity. Second, we show that around the oracle threshold there exists a set of thresholds that give asymptotically unbiased reconstructions. The asymptotic unbiasedness has a phase transition behavior which allows us to develop a practical threshold update scheme using a bisection method. Experimentally, the new threshold design method achieves better rate of convergence than existing methods.}
\end{abstract}
\begin{IEEEkeywords}
Quanta image sensor, single-photon imaging, high dynamic range, binary quantization, maximum likelihood.
\end{IEEEkeywords}
\IEEEpeerreviewmaketitle

%================%
%         Introduction            %
%================%
\section{Introduction}
\label{sec:intro}

\subsection{Threshold Design for Quanta Image Sensor}
Quanta Image Sensor (QIS) is a class of solid-state image sensors envisioned to be the next generation imaging device after CCD and CMOS. Originally proposed by Eric Fossum in 2005 \cite{Fossum_2005}, the sensor has gained significant momentum in the past decade, both in terms of hardware design \cite{Ma_Hondongwa_Fossum_2014,Ma_Fossum_2015_1,Ma_Anzagira_Fossum_2016} and image processing \cite{Yang_Lu_Sbaiz_2010, Yang_Lu_Sbaiz_2012, Chan_Lu_2014, Elgendy_Chan_2016, Chan_Elgendy_Wang_2016}. The advantage of QIS over the mainstream CCD and CMOS is attributed to its high spatial resolution (e.g., $10^9$ pixels per sensor  with $200$nm pitch per pixel \cite{Fossum_Ma_Masoodian_2016}) and high speed (e.g., 100k fps as reported in \cite{Antolovic_Burri_Bruschini_2016}). {\color{black} However, in order to simplify circuit, minimize power and reduce data transfer, QIS is operated in a binary mode: When the number of photons arriving at the sensor exceeds a threshold $q$, the sensor generates a binary bit ``1''. When the number of photons is less than $q$, the sensor generates a ``0''. The goal of this paper is to address the question of how to optimally choose $q$ to maximize the signal-to-noise ratio of the reconstructed image.}

\begin{figure}[t]
\centering
\begin{tabular}{ccc}
    \includegraphics[width=2.8cm]{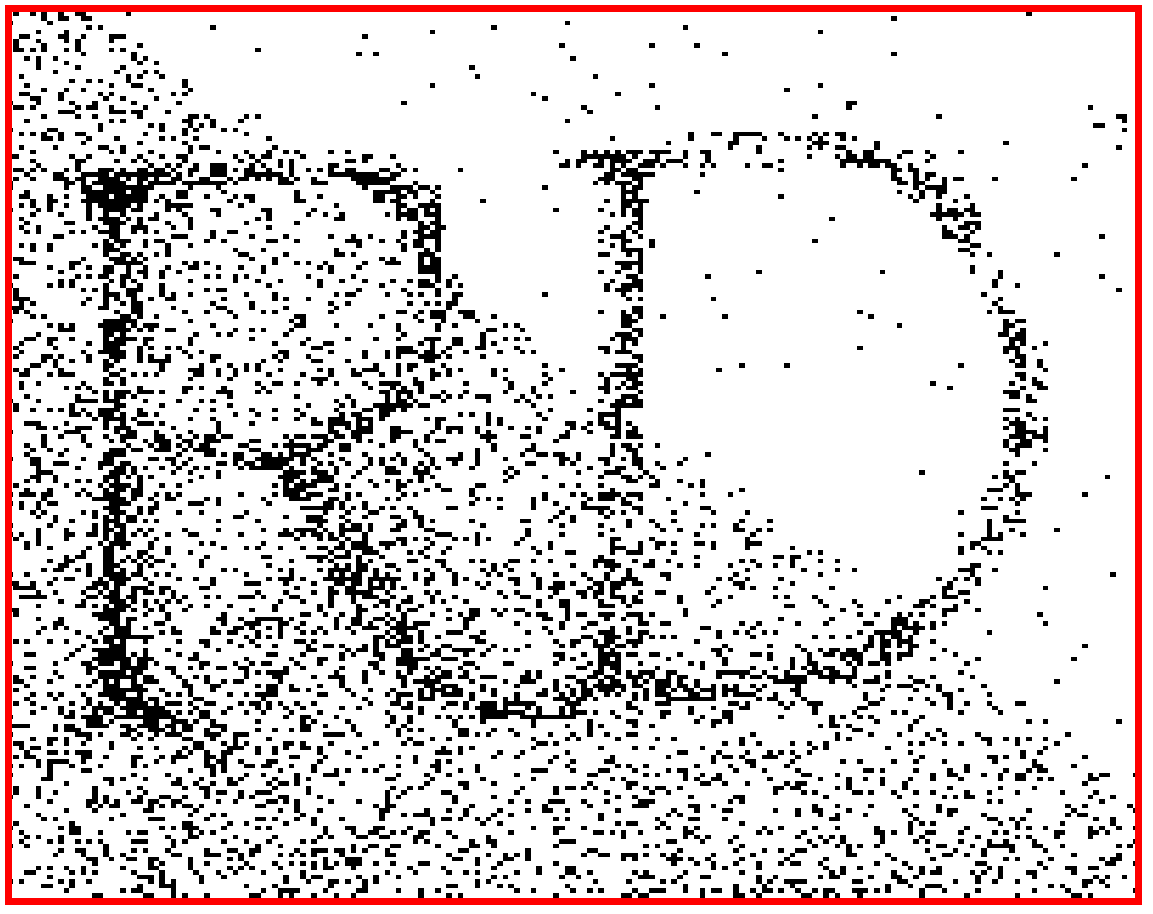} &
    \hspace{-1.5ex}\includegraphics[width=2.8cm]{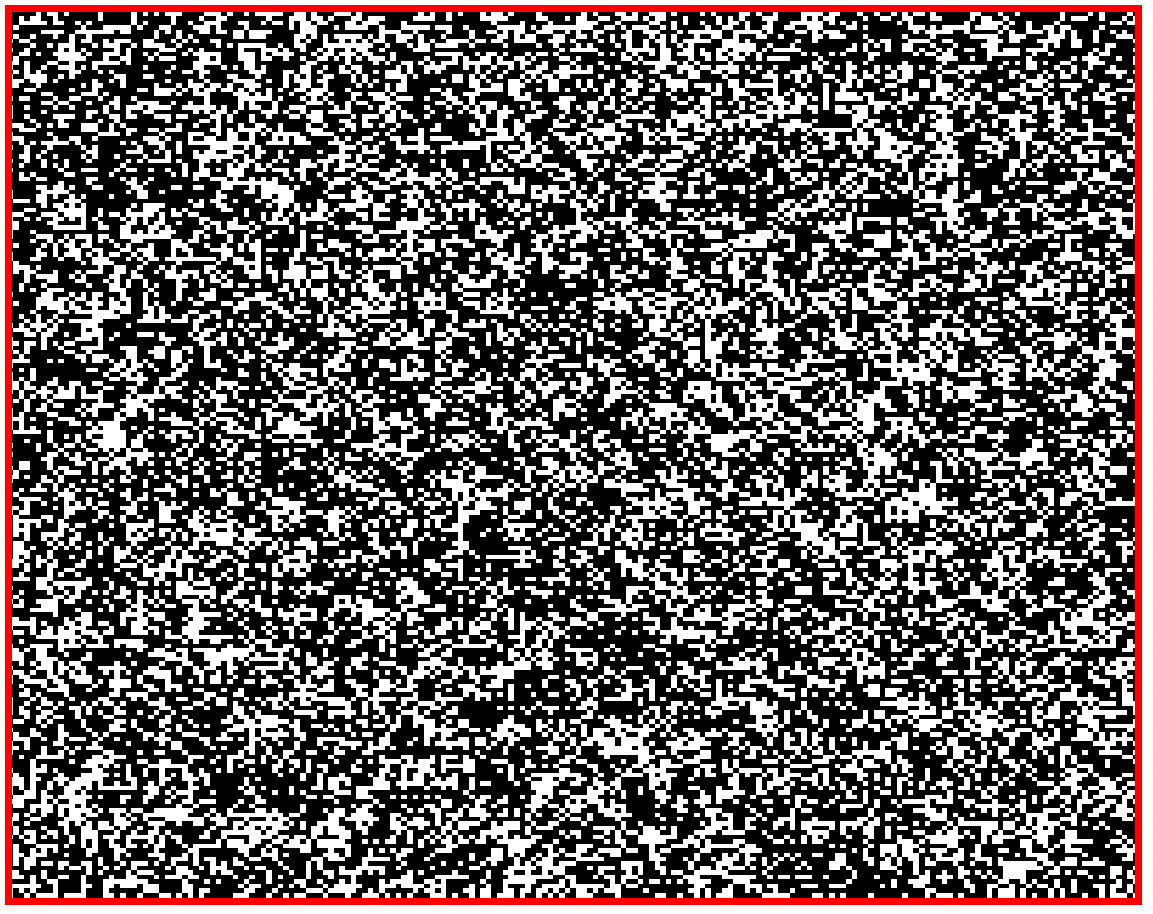} &
    \hspace{-1.5ex}\includegraphics[width=2.8cm]{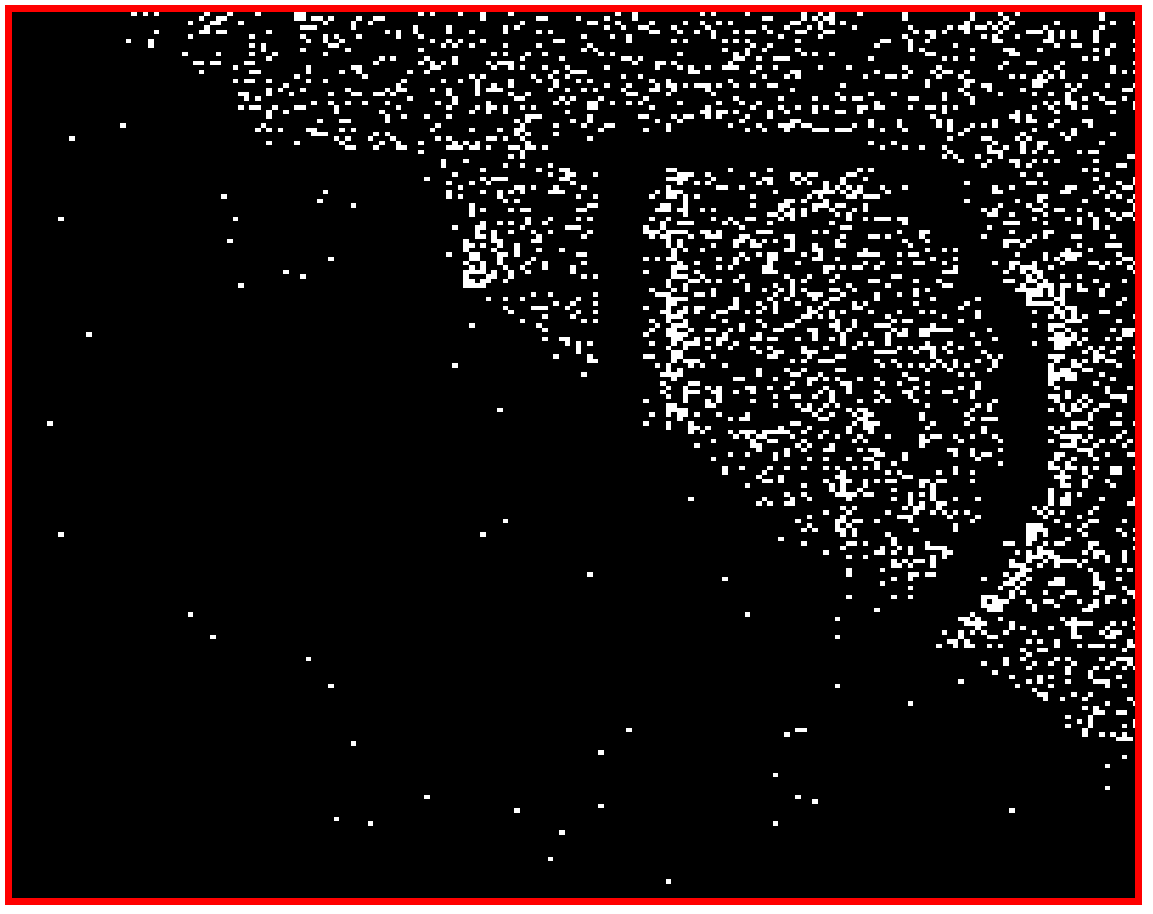} \vspace{-1.0ex}\\
  \footnotesize (a) Observed, $q=3$ &
 \hspace{-1.5ex}\footnotesize (b) Observed, $q=q^*(c)$ &
  \hspace{-1.5ex}\footnotesize (c) Observed, $q=12$ \\
\end{tabular}
\vspace{-1.0ex}
\begin{tabular}{cc}
  \includegraphics[width=4.3cm]{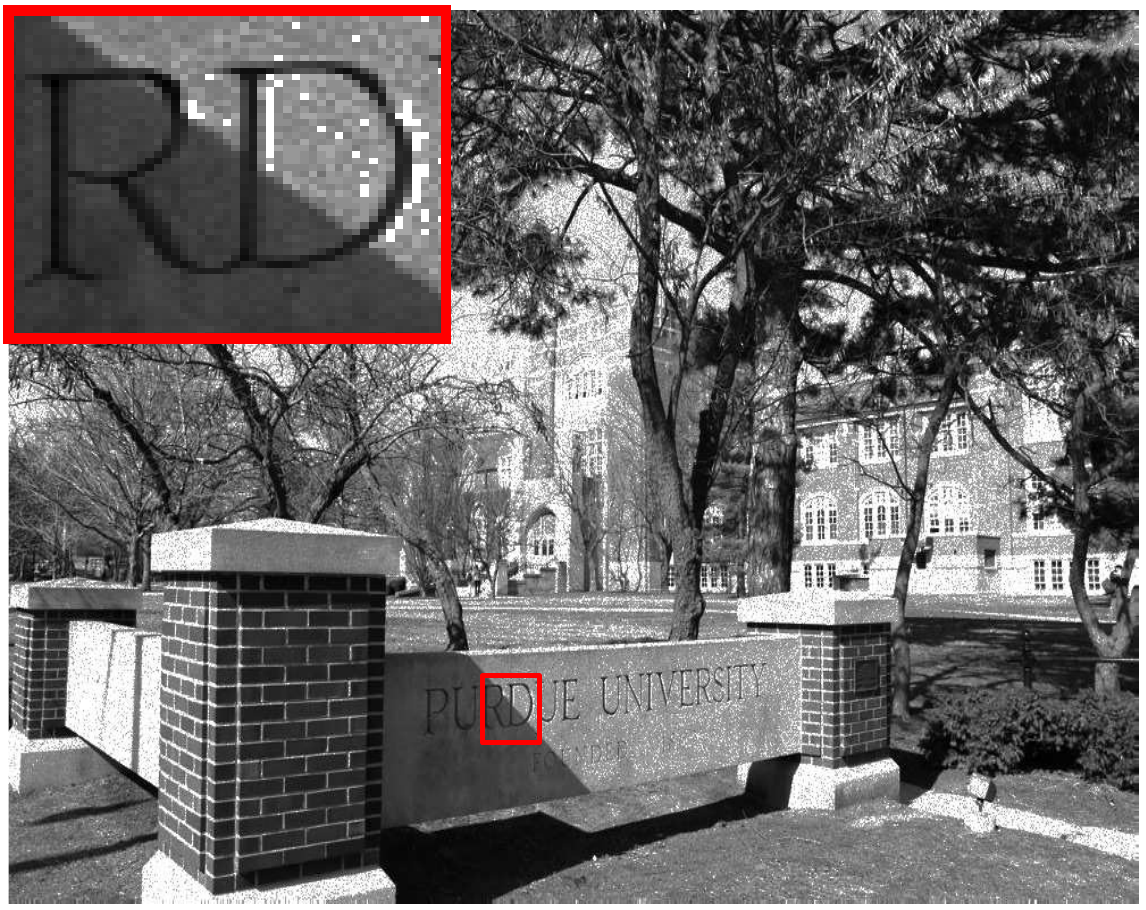} &
  \hspace{-1.5ex}\includegraphics[width=4.3cm]{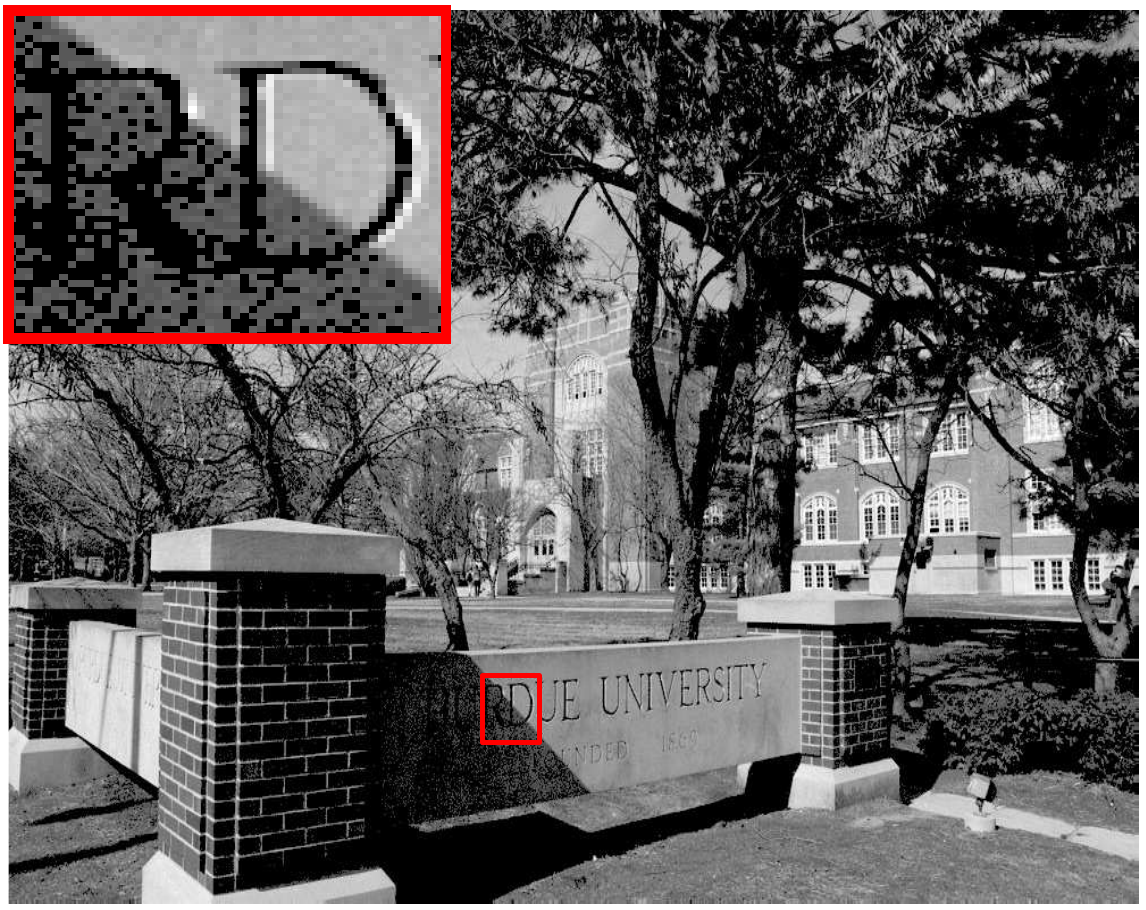}\vspace{-1.0ex} \\
  \footnotesize (d) Reconstruction, $q=3$ &
  \hspace{-1.5ex}\footnotesize (e) Reconstruction, $q=12$ \\
  \includegraphics[width=4.3cm]{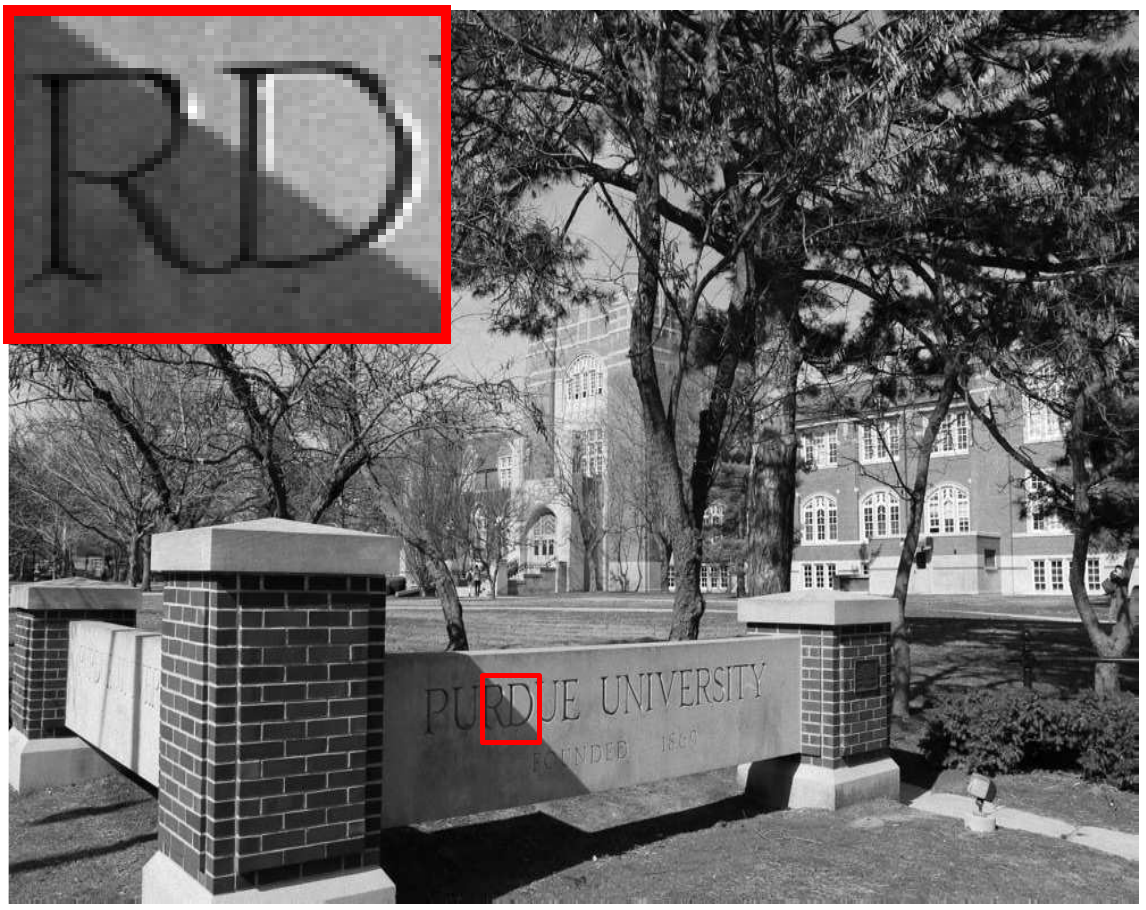} &
  \hspace{-1.5ex}\includegraphics[width=4.3cm]{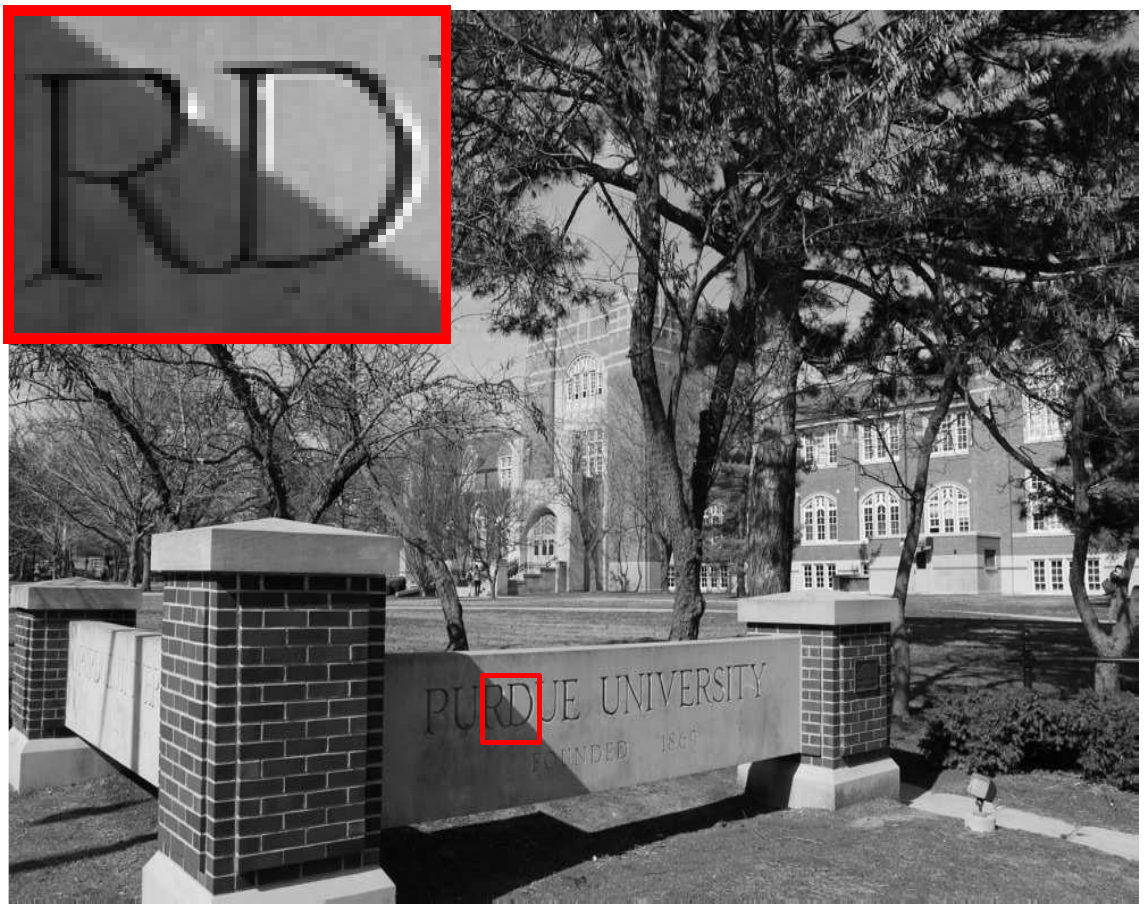} \vspace{-1.0ex}\\
  \footnotesize (d) Reconstruction, $q=q^*(c)$ &
  \hspace{-1.5ex}\footnotesize (e) Ground Truth
\end{tabular}
\vspace{-0.5ex}
\caption{Simulated QIS data and the reconstructed gray-scale images using different thresholds. Top row: The binary measurements obtained using thresholds $q = 3$, $q = q^*(c)$, and $q = 12$. Bottom figures: The maximum likelihood estimates obtained from the binary measurements, with comparison to the ground truth. The results show that our spatially varying threshold $q^*(c)$ offers the best reconstruction. In this experiment, we spatially oversample each pixel by $K = 2
\times 2$ binary bits and we use $T = 25$ independent temporal measurements. }
\label{fig:result1}
\vspace{-2ex}
\end{figure}

\textcolor{black}{Optimal threshold design for QIS is important as it directly affects the dynamic range of an image. \fref{fig:result1} illustrates an example where we simulate the raw binary data acquired by a QIS using a uniform threshold $q$. When $q$ is low, most of the bits in the raw input are ``1''. The reconstructed image is therefore an over-exposed image. On the other hand, when $q$ is high, most of the bits in the raw input are ``0''. The reconstructed image is then under-exposed. In both cases, it is evident from the simulation that a uniform threshold has limited performance. A better way is to allow $q$ to vary spatially so that a pixel (or a group of pixels) has its own threshold value. The result in Figure~\ref{fig:result1}(d) shows the reconstruction result using a spatially varying threshold obtained from our proposed technique, which is clearly better than the uniform thresholds.}

\vspace{-2ex}
\subsection{Scope and Contributions}
The goal of this paper is to present an optimal threshold design methodology and provide theoretical justifications. The two major contributions are summarized as follows.

\begin{table*}
 \centering
 \caption{List of QIS Prototypes and Parameters}
\def\arraystretch{1.3}%  1 is the default, change whatever you need
\begin{tabular}{c|c|c|c|c|c|c}
  			\hline
  			Camera & Canon 5D CMOS & EMCCD \cite{Andor_2016} & GMAPD \cite{Aull_Schuette_Young_2016} & SPC SPAD \cite{Dutton_Gyongy_Parmesan_2016_1} & SwissSPAD \cite{Antolovic_Burri_Bruschini_2016} &  Fossum QIS \cite{Masoodian_Rao_Ma_2016}   \\
  			\hline
			\hline
			Price & $\$5,000$ & $\$20,000$ & Prototype & Prototype & Prototype  & Prototype   \\
			\hline
  			Resolution & $4096\times2160$ & $1024\times1024$ & $256\times256$ & $320\times240$ & $512\times128$ & $1376\times768$ \\
  			\hline
  			Pixel Pitch ($\mu$m) & {\color{black}$2.3$} & $13$ & 25 &  $8$ & 24 & $3.6$  \\
  			\hline
  			Full-well capacity  & 69 ke- (@ISO100) & {\color{black}180} ke- & - & $56-125$ e- & - & $1-250$ e- \\
  			\hline
  			Frames per second (fps) & 6 & $26-92$ & $8\times 10^3$ &  $2\times 10^4$ & $1.56\times 10^5$ & $1\times 10^3$ \\
  			\hline
  			Sensor data rate & $88.6$ Mbps & 0.48 Gbps & 0.52 Gbps & 1.54 Gbps & 10.24 Gbps & 1 Gbps \\
  			\hline
  			% SPAD: The analogue readout is connected to external 14bit differential ADC.
		\end{tabular}
		 \label{table:QIS}
		 	\vspace{-3.0ex}
\end{table*}

First, we provide a rigorous theoretical analysis of the performance limit of the image reconstruction as a function of the threshold. These results form the basis of our subsequent discussions of the threshold update scheme. Some results are known, e.g., the signal-to-noise ratio is a function of the Fisher Information \cite{Yang_2012, Lu_2013}, but a number of new results are shown. In particular, we show that (i) the maximum likelihood estimate has a closed-form expression in terms of the incomplete Gamma function (Section III.B); (ii) the oracle threshold can be derived in closed-form by maximizing the signal-to-noise ratio (Section III.C); (iii) the image reconstruction has a phase transition behavior (Section IV.A - Section IV.D).

Second, we propose an efficient threshold update scheme based on our theoretical results. The new scheme is a bisection method which iteratively updates the threshold \emph{without} the need of reconstructing the image. By checking whether the proportion of one's and zero's approaches 0.5 in a spatial-temporal block, the threshold is guaranteed to be near optimal. Compared to other existing threshold update schemes such as \cite{Hu_Lu_2012} and \cite{Vogelsang_Stork_2012,Vogelsang_Guidash_Xue_2013,Vogelsang_Stork_Guidash_2014}, the new scheme offers significantly faster rate of convergence (Section IV.E). We also demonstrate how the dynamic range can be extended for high dynamic range (HDR) imaging (Section IV.F).

A preliminary version of this paper was presented in ICIP 2016 \cite{Elgendy_Chan_2016}. This journal version contains significantly more details including complete proofs of major results, more comprehensive comparisons with existing methods, and discussions of HDR imaging.

\section{Background}
\label{sec:background}

\subsection{Current State of QIS}
\label{subsec:currentQIS}
{\color{black}Quanta Image Sensor (QIS) belongs to the family of photon-counting devices. These photon-counting devices have been known for a long time. Some better-known examples are the electron-multiplying charge-coupled device (EMCCD) \cite{Hynecek_2001,Robbins_Hadwen_2003}, single-photon avalanche diode (SPAD) \cite{Dutton_Gyongy_Parmesan_2016, Dutton_Gyongy_Parmesan_2016_1, Antolovic_Burri_Bruschini_2016}, Geiger-mode avalanche photodiode (GMAPD) \cite{Aull_Schuette_Young_2016}, etc. The common feature of these devices is their single photon sensitivity, which makes them useful in medical imaging \cite{Liang_Shen_Camilli_2014,Braga_Gasparini_Grant_2014,Poland_Krstaji_Monypenny_2015}, astronomy \cite{Grubbs_Michell_Samara_2016}, defense \cite{Seitz_Theuwissen_2011}, nuclear engineering \cite{Meng_2006}, depth and reflectivity reconstruction \cite{Shin_Xu_Venkatraman_2016}, ultra-fast low-light tracking \cite{Gyongy_Abbas_Dutton_2017}, and recently in quantum random number generation used in cryptography \cite{Burri_Maruyama_Michalet_2014,Amri_Felk_Stucki_2016}.}

The concept of QIS was first proposed by Fossum in 2005 as a solution for sub-diffraction limit pixels. The sensor was called the digital film sensor, and later the quanta image sensor \cite{Fossum_2011,Ma_Fossum_2015,Masoodian_Rao_Ma_2016}. After the introduction of QIS, researchers in EPFL developed a similar concept called the Gigavision camera \cite{Sbaiz_Yang_Charbon_2009,Yang_Sbaiz_Charbon_2010,Yang_Lu_Sbaiz_2012}. Recently, teams at the University of Edingburgh \cite{Dutton_Parmesan_Holmes_2014,Dutton_Gyongy_Parmesan_2016_1,Dutton_Gyongy_Parmesan_2016} and EPFL \cite{Burri_Maruyama_Michalet_2014,Antolovic_Burri_Hoebe_2016} have made new progresses in QIS using binary single photon detectors. In the industry, Rambus Inc. (Sunnyvale, CA) has developed binary image sensors for high dynamic range imaging \cite{Vogelsang_Stork_2012,Vogelsang_Guidash_Xue_2013,Vogelsang_Stork_Guidash_2014}. Table~\ref{table:QIS} lists several recent QIS prototypes that are available or are currently being developed. As a comparison we also show a Canon 5D Mark III CMOS camera. Among many different features, the most noticeable is the frame rate. For example, SPS SPAD can be operated at 20k fps. SwissSPAD can even achieve 156k fps. Both are significantly faster than a standard CMOS camera.

\subsection{Related Work on Threshold Design}
\label{sub:related}
Existing work on QIS threshold design study can be summarized into three classes of methods.
\begin{itemize}
\item Markov Chain \cite{Hu_Lu_2012}. The Markov Chain method developed by Hu and Lu \cite{Hu_Lu_2012} is a time-sequential update scheme. A Markov Chain probability is used to control how easy the threshold should be increased or decreased. While the method has provable convergence, the threshold of each single photon detector of the QIS has to be updated sequentially in time. In contrast, our proposed method allows a group of single photon detectors to share the same threshold. As a result, our proposed method has significantly faster rate of convergence.
\item Conditional Reset \cite{Vogelsang_Stork_2012,Vogelsang_Guidash_Xue_2013,Vogelsang_Stork_Guidash_2014}. The conditional reset method is a hardware solution proposed by Vogelsang and colleagues. The idea is to take a sequence of images with ascending (or descending) thresholds, and digitally integrate the sequence to form an image. The drawback of the method, besides the additional hardware cost of the per-pixel reset transistors, is the limited quality of the reconstructed image. For the same number of frames, our proposed method produces better images.
{\color{black}
\item Checkerboard Threshold \cite{Yang_2012}. This method constructs a checkerboard of thresholds by alternating two threshold values $q_1$ and $q_2$. The optimality criterion of $q_1$ and $q_2$ is based on minimizing the Cram\'{e}r-Rao lower bound (CRLB) integrated over a range of light intensities, which is essentially an average case result. Our proposed method obtains the optimal threshold for each pixel. This per-pixel optimization has higher reconstruction performance compared to checkerboard threshold.}
\end{itemize}

\subsection{QIS Imaging Model}
\label{subsec:model}
In this subsection we provide an overview of the QIS imaging model. The model has been previously discussed in several papers, e.g., \cite{Yang_Lu_Sbaiz_2012,Chan_Lu_2014,Elgendy_Chan_2016,Chan_Elgendy_Wang_2016}. Readers interested in details can refer to these papers for further explanations.

\subsubsection{Spatial Oversampling} We denote the discrete version of the light intensity as a vector $\vc=[c_0,\ldots,c_{N-1}]^T$, where $n = 0,\ldots,N-1$ specify the spatial coordinates. We assume that $c_n$ is normalized to the range $[0,1]$ for all $n$ so that there is no scaling ambiguity. To model the actual light intensity, we multiply $c_n$ by a constant $\alpha$ to yield $\alpha c_n$, where $\alpha > 0$ is a fixed scalar constant.

Given the $N$-dimensional vector $\vc$, QIS uses $M \gg N$ tiny pixels called \emph{jots} to sample $\vc$. The ratio $K \bydef M/N$ is known as the spatial oversampling factor. The oversampling process is illustrated in \fref{fig:sigprocess}, where it first upsamples the vector $\vc$ by a factor of $K$, and then filters the output by a lowpass filter $\{g_k\}$. Mathematically, the process can be expressed as
\begin{equation}
\vtheta = \alpha \mG\vc,
\end{equation}
where $\vtheta = [\theta_0,\ldots,\theta_{M-1}]^T$ denotes the light intensity sampled at the $M$ jots, and the matrix $\mG$ is defined as
\begin{equation}
\mG=\frac{1}{K} \mI_{N\times N}  \otimes \mathbf{1}_{K \times 1},
\label{eq:G}
\end{equation}
where $\mathbf{1}_{K \times 1}$ is a vector of all ones and $\otimes$ denotes the Kronecker product. Note that the choice of $\mG$ in \eref{eq:G} is the result of simplifying the model by assuming that the lowpass filter is $g_k = 1/K$ for all $k$. \textcolor{black}{This assumption is typically reasonable, because on each QIS jot there is a micro-lens to focus the incident light. Although previous papers, e.g., \cite{Yang_Lu_Sbaiz_2012,Chan_Lu_2014}, do not make such assumption, in this paper we decide to use a simplified $\mG$, for otherwise the theoretical analysis will become very complicated. Nevertheless, in the Supplementary Material we show comparison between a general $\mG$ and the simplified $\mG$. The gap is usually insignificant.}

\begin{figure}[t]
\includegraphics[width=1.0\linewidth]{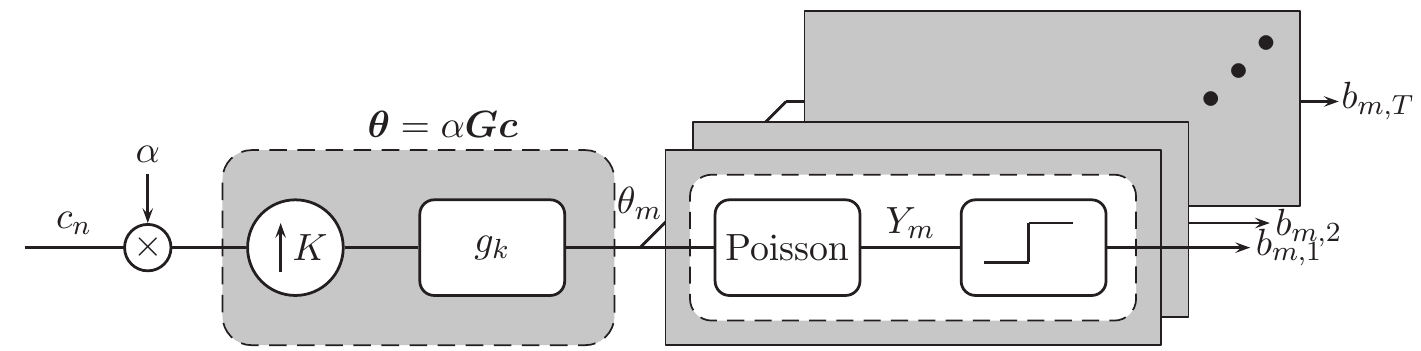}
\caption{Block diagram illustrating the image formation process of QIS.}
\label{fig:sigprocess}
\vspace{-3.0ex}
\end{figure}

\subsubsection{Truncated Poisson Process}
We assume that the operating speed of QIS is significantly faster than the scene motion. Therefore, for a given scene $\vc$ (and also $\vtheta$), we are able to acquire a set of $T$ independent measurements. We illustrate this using the $T$ channels in \fref{fig:sigprocess}.

The oversampled signal $\vtheta$ generates a sequence of Poisson random variables according to the distribution
\begin{equation}\label{eq:prob_Ym}
\Pb(Y_{m,t}=y_{m,t})=\frac{\theta_m^{y_{m,t}} e^{-\theta_m}}{{y_{m,t}}!},
\end{equation}
where $m = 0,1,\ldots,M-1$ denotes the $m$-th jot of the QIS and $t = 0,1,\ldots,T-1$ denotes the $t$-th independent measurement in time. Denoting $q \in \N$ as the quantization threshold, the final observed binary measurement $B_{m,t}$ is a truncation of $Y_{m,t}$:
\begin{equation*}
B_{m,t} =
\begin{cases}
  0, & \mbox{if } Y_{m,t} < q.\\
  1, & \mbox{if } Y_{m,t} \geq q
\end{cases}
\end{equation*}
The probability mass function of $B_{m,t}$ is given by
\begin{equation}
\Pb(B_{m,t}=b_{m,t})
=
\begin{cases}
  \sum\limits_{k=0}^{q-1} \; \frac{\theta_m^k e^{-\theta_m}}{k!},    & \mbox{if } b_{m,t} = 0, \\
  \sum\limits_{k=q}^{\infty} \; \frac{\theta_m^k e^{-\theta_m}}{k!}, & \mbox{if } b_{m,t} = 1.
\end{cases}
\label{eq:prob bm}
\end{equation}

{\color{black} The goal of image reconstruction is to recover the underlying image $\vc$ from the binary measurements $\calB = \{B_{m,t} \;|\; m = 0,\ldots,M-1, \mbox{and}\; t= 0,\ldots,T-1\}$. A pictorial illustration of the reconstruction is shown in Figure~\ref{fig:QIScube}.}

\subsubsection{Properties of Truncated Poisson Processes}
{\color{black} The probability mass function of $B_{m,t}$ in \eref{eq:prob bm} is Bernoulli. However, the right hand side of \eref{eq:prob bm} involves infinite sums which are difficult to interpret. To simplify the equations, we consider the upper incomplete Gamma function ${\Psi_q:\R^{+}\rightarrow[0,1]}$ defined in \cite{Abramowitz_Stegun_1964} as:
\begin{equation*}
\Psi_q(\theta) \bydef \frac{1}{\Gamma{(q)}} \int_{\theta}^{\infty}  t^{q-1} e^{-t} dt, \quad \mathrm{for}\; \theta > 0,\; q \in \N.
\end{equation*}
where $\Gamma(q) = (q-1)!$ is the standard Gamma function. The incomplete Gamma function allows us to rewrite the infinite sums in \eref{eq:prob bm} using the following identity \cite{Abramowitz_Stegun_1964}:}
\begin{equation}
\Psi_q(\theta) = \sum\limits_{k=0}^{q-1} \; \frac{\theta^k}{k!} e^{-\theta}.
\label{eq:Psi Poisson}
\end{equation}
Consequently, the probabilities in \eref{eq:prob bm} become
\begin{align}
\Pb(B_{m,t}=0) &= \Psi_q(\theta_m), \notag\\
\Pb(B_{m,t}=1) &= 1-\Psi_q(\theta_m). \label{eq:prob bm 1}
\end{align}

\begin{figure}[t]
\centering
\includegraphics[width=0.5\textwidth]{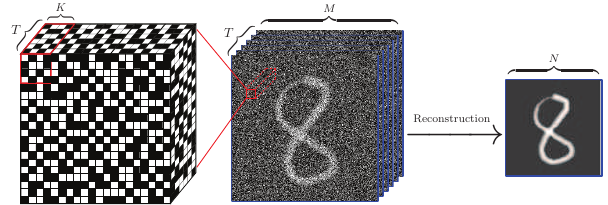}
%\vspace{-2ex}
\caption{{\color{black} Image reconstruction of QIS data. Given the binary bit planes, the reconstruction algorithm estimates the gray-scale image shown on
the right.}}
\label{fig:QIScube}
\vspace{-3.0ex}
\end{figure}

\begin{example}
In the special case of $q=1$, we obtain:
\begin{align*}
\Pb(B_{m,t}=0) &= \frac{1}{\Gamma{(1)}} \int_{\theta_m}^{\infty}  t^{0} e^{-t} dt = e^{-\theta_m},
\end{align*}
which coincides with the results shown in \cite{Yang_Lu_Sbaiz_2012} and \cite{Chan_Lu_2014}.
\end{example}

The incomplete Gamma function $\Psi_q(\theta)$ is a decreasing function of $\theta$ because the first order derivative of $\Psi_q(\theta)$ with respect to $\theta$ is negative:
\begin{equation}\label{eq:diff}
\frac{d}{d\theta}\Psi_q(\theta)=\frac{-\theta^{q-1} e^{-\theta}}{\Gamma(q)} < 0,\quad \forall q\in\N,\textrm{ and } \theta>0.
\end{equation}
The limiting behavior of $\Psi_q(\theta)$ is important. For a fixed $q$, the function $\Psi_q(\theta) \rightarrow 1$ as $\theta \rightarrow 0$ and $\Psi_q(\theta) \rightarrow 0$ as $\theta \rightarrow \infty$. While $\Psi_q^{-1}$ still exists in these situations because $\Psi_q$ is monotonically decreasing, for a given $z$ the value $\Psi_q^{-1}(z)$ could be numerically very difficult to evaluate. To characterize the sets of $\theta$ and $q$ that $\Psi_q$ is (numerically) invertible, we define the \emph{$\theta$-admissible set} and the \emph{$q$-admissible set}.
\begin{definition}
\label{def:admissible set}
The \emph{$\theta$-admissible set} and \emph{$q$-admissible set} of the incomplete Gamma function are
\begin{align}
\Theta_q        &\bydef \{ \theta \;|\; \varepsilon \le \Psi_q(\theta) \le  1-\varepsilon\}, \notag \\
\calQ_\theta   &\bydef \{q \;|\; \varepsilon \le \Psi_q(\theta) \le  1-\varepsilon\},
\end{align}
respectively, where $0 < \varepsilon < \frac{1}{2}$ is a constant.
\end{definition}
More discussions of the incomplete Gamma function can be found in the Supplementary Material.

{\color{black}
\begin{remark}
In this paper, we assume that QIS is noise-free, i.e., the only source of randomness is the truncated Poisson random variable. In real sensors, there will be readout noise, photo-response non-uniformity caused by conversion gain variation, dark count rate (a.k.a. dark current), optical crosstalk and electronic crosstalk. See \cite{Fossum_2016} for details.
\end{remark}}

\section{Optimal Threshold: Theory}

\subsection{Image Reconstruction by MLE}\label{subsec:MLE}
\textcolor{black}{We begin the optimal threshold design by discussing image reconstruction because the optimality of the threshold is measured with respect to the reconstructed image. However, since QIS is a new device, the number of reconstruction methods is limited. A few examples that can be found in the literature are the gradient descent \cite{Yang_Lu_Sbaiz_2012}, dynamic programming \cite{Yang_Sbaiz_Charbon_2009}, ADMM \cite{Chan_Lu_2014}, and Transform-Denoise method \cite{Chan_Elgendy_Wang_2016}, and neural network \cite{Rojas_Luo_Murray_2017}. In this paper, we shall focus on the maximum likelihood estimation (MLE) approach as it provides closed-form expressions.}

Given $\calB$, MLE solves the following optimization problem:
\begin{align}
\vchat
&\overset{(a)}{=} \argmax{\vc} \;\; \prod_{t=0}^{T-1}\prod_{m=0}^{M-1} \Pb[B_{m,t} = 1 \,;\, \theta_{m}]^{b_{m,t}} \notag \\
&\hspace{3.5cm} \times \Pb[B_{m,t} = 0 \,;\, \theta_m]^{1-b_{m,t}} \notag \\
&\overset{(b)}{=} \argmax{\vc} \;\; \sum_{t=0}^{T-1}\sum_{m=0}^{M-1} \Big\{ b_{m,t} \log (1-\Psi_q(\theta_m)) \notag \\
&\hspace{3.5cm} + (1-b_{m,t})\log \Psi_q(\theta_m)  \Big\},
\label{eq:MLE 1}
\end{align}
subject to the constraint that $\vtheta = \alpha\mG\vc$. Here, the right hand side of $(a)$ is the likelihood function of a Bernoulli random variable, and $(b)$ follows from taking the logarithm. With the $\mG$ defined in \eref{eq:G}, we can partition $\calB$ into $N$ blocks $\{\calB_{1},\ldots,\calB_N\}$ where each block is
$$\calB_n \bydef \{B_{Kn+k,t} \;|\; k = 0,\ldots,K-1, t=0,\ldots,T-1\}.$$
Then, the pixel $\chat_n$ can be estimated as follows.
\begin{proposition}[Closed-form ML Estimate]
\label{prop:mle solution}
The solution of the MLE in \eref{eq:MLE 1} is
\begin{equation}
\chat_n = \frac{K}{\alpha}\Psi_q^{-1}\left(1-\frac{S_n}{KT}\right),
\label{eq:mle solution}
\end{equation}
where $S_n \bydef \sum_{t=0}^{T-1}\sum_{k=0}^{K-1} B_{Kn+k,t}$ is the sum of bits in the $n$-th block $\calB_n$.
\end{proposition}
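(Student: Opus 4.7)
The plan is to exploit the block-constant structure of $\mG$ to decouple the MLE into $N$ independent scalar problems, reduce each block to a Bernoulli parameter-estimation, and then invert the link function $\Psi_q$.

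First I would use the fact that $\mG = \frac{1}{K}\mI_{N\times N}\otimes \mathbf{1}_{K\times 1}$ forces $\theta_{Kn+k} = \frac{\alpha}{K} c_n$ for every $k = 0,\ldots,K-1$; hence all $K$ jots within the $n$-th block share a common Poisson rate $\theta_n \bydef \alpha c_n / K$. Substituting this into \eref{eq:MLE 1} collapses the double sum over $(m,t)$ into
\begin{equation*}
\sum_{n=0}^{N-1}\Big\{ S_n \log\bigl(1-\Psi_q(\theta_n)\bigr) + (KT-S_n)\log \Psi_q(\theta_n) \Big\},
\end{equation*}
where $S_n$ is the block sum from the statement. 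The objective is now a sum of independent scalar functions of $\theta_n$ (equivalently of $c_n$), so the joint maximization decouples into $N$ one-dimensional problems.

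Next, I would recognize each scalar problem as a Bernoulli MLE with success probability $p_n \bydef 1-\Psi_q(\theta_n)$: conditional on $\theta_n$, the $KT$ bits in block $n$ are i.i.d.\ Bernoulli$(p_n)$, $S_n$ is a sufficient statistic, and the standard Bernoulli log-likelihood is strictly concave in $p_n$ with unique maximizer $\widehat p_n = S_n/(KT)$. Alternatively, differentiating the expression above with respect to $\theta_n$ and cancelling the common nonzero factor $\Psi_q'(\theta_n)$ (nonzero for $\theta_n > 0$ by \eref{eq:diff}) yields the first-order condition
\begin{equation*}
-\frac{S_n}{1-\Psi_q(\theta_n)} + \frac{KT-S_n}{\Psi_q(\theta_n)} = 0,
\end{equation*}
which simplifies directly to $\Psi_q(\theta_n) = 1 - S_n/(KT)$. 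Because $\Psi_q$ is a strictly monotone smooth reparametrization of $p_n$, the uniqueness of $\widehat p_n$ transfers to uniqueness of $\widehat\theta_n$.

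Finally, \eref{eq:diff} shows that $\Psi_q$ is strictly decreasing on $(0,\infty)$, so $\Psi_q^{-1}$ is well-defined on the admissible set of Definition~\ref{def:admissible set}. Inverting gives $\theta_n = \Psi_q^{-1}(1 - S_n/(KT))$, and back-substituting $c_n = K \theta_n / \alpha$ yields exactly \eref{eq:mle solution}. The main obstacle, conceptually, is not the calculus but the endpoint behavior: when $S_n \in \{0, KT\}$ the optimum pushes $\theta_n$ to $0$ or $\infty$ and $\Psi_q^{-1}$ is numerically ill-defined. This is precisely why the $\theta$- and $q$-admissible sets were introduced, and I would phrase the closed-form \eref{eq:mle solution} as the MLE on the interior of the admissible regime, flagging the two degenerate counts as a separate matter that does not affect the derivation of the stated formula.
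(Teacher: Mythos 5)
Your proof is correct and self-contained: the decoupling via the Kronecker structure of $\mG$, the reduction to a Bernoulli MLE with sufficient statistic $S_n$ and unique maximizer $\widehat{p}_n = S_n/(KT)$, and the inversion of the strictly decreasing $\Psi_q$ are exactly the standard route, and your explicit treatment of the degenerate counts $S_n \in \{0, KT\}$ is a precision the statement itself glosses over. Note that this paper does not actually prove the proposition in-text --- it defers entirely to \cite{Chan_Elgendy_Wang_2016} --- and your argument matches the derivation given there, so there is no substantive divergence to report.
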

\begin{proof}
See \cite{Chan_Elgendy_Wang_2016}.
\end{proof}

\subsection{Signal-to-Noise Ratio of ML Estimate}
In order to determine the optimal threshold, we need to quantify the performance of the ML estimate. The performance metric we use is the signal-to-noise ratio of the ML estimate at every pixel $\chat_n$. Considering each $\chat_n$ individually is allowed here because they are independently determined according to \eref{eq:mle solution}. For notation simplicity we drop the subscript $n$ in the subsequent discussions.

\begin{definition}
The signal-to-noise ratio (SNR) of the ML estimate $\chat$ is defined as
\begin{equation}\label{eq:snr}
  \mathrm{SNR}_q(c) \bydef 10\log_{10}\frac{c^2}{\E[(\chat - c)^2]},
\end{equation}
where the expectation is taken over the probability mass function of the binary measurements in \eref{eq:prob bm 1}.
\end{definition}

The difficulty of working with $\mathrm{SNR}_q(c)$ is that it does not have a simple closed-form expression. In view of this, Lu \cite{Lu_2013} showed that the SNR is asymptotically linear to the log of the Fisher Information.
\begin{proposition}
\label{prop:snr}
{\color{black} As $KT \rightarrow \infty$,
\begin{equation} \label{eq:SNR}
\mathrm{SNR}_q(c) \approx 10\log_{10}\big(c^2 I_q(c)\big) + 10\log_{10} KT,
\end{equation}
where $I_q(c)\bydef \E_{B}\left[\frac{-\partial^2}{\partial c^2} \mathrm{log} \; \Pb(B=b;\theta)\right]$ is the Fisher Information measuring the amount of information that the random variable $B$ carries about the unknown value $c$.}
\end{proposition}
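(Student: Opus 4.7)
The plan is to reduce the claim to a standard asymptotic statement about the maximum likelihood estimator and then plug into the definition of SNR. Concretely, I will establish that as $KT\to\infty$,
\begin{equation*}
\E\big[(\chat-c)^2\big] \;\approx\; \frac{1}{KT\,I_q(c)},
\end{equation*}
after which the claim follows immediately by
\begin{equation*}
\mathrm{SNR}_q(c)
=10\log_{10}\frac{c^2}{\E[(\chat-c)^2]}
\approx 10\log_{10}\!\big(c^2 I_q(c)\big)+10\log_{10}KT.
\end{equation*}

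First, I would exploit the closed form given in Proposition~\ref{prop:mle solution}. Fix a pixel and drop the subscript $n$. The quantity $S=\sum_{k,t}B_{k,t}$ is a sum of $KT$ i.i.d.\ Bernoulli variables with success probability $p=1-\Psi_q(\theta)$, where $\theta=\alpha c/K$. By the classical central limit theorem, $\sqrt{KT}\!\left(S/(KT)-p\right)\Rightarrow \mathcal{N}\!\left(0,\,p(1-p)\right)$. Since $\chat=(K/\alpha)\Psi_q^{-1}(1-S/(KT))$ and $\Psi_q$ is continuously differentiable with nonvanishing derivative on the admissible set $\Theta_q$ (equation~\eref{eq:diff}), the delta method yields
\begin{equation*}
\sqrt{KT}(\chat-c)\;\Rightarrow\;\mathcal{N}\!\left(0,\;\sigma^2(c)\right),
\end{equation*}
with $\sigma^2(c)=(K/\alpha)^2\!\left[\Psi_q'(\theta)\right]^{-2} p(1-p)$.

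Second, I would show $\sigma^2(c)=1/I_q(c)$ by direct computation. Differentiating $\log \Pb(B=b;\theta)$ from \eref{eq:prob bm 1} twice with respect to $c$ and using the chain rule $\theta=\alpha c/K$, a short calculation gives
\begin{equation*}
I_q(c)=\Big(\tfrac{\alpha}{K}\Big)^{\!2}\,\frac{[\Psi_q'(\theta)]^2}{\Psi_q(\theta)\,(1-\Psi_q(\theta))}
=\frac{1}{\sigma^2(c)},
\end{equation*}
where in the first equality I use the Bernoulli identity $\E[(\partial_c \log \Pb)^2]=[\partial_c p]^2/[p(1-p)]$. Combining the two displays gives the advertised asymptotic variance $1/(KT\,I_q(c))$.

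The main obstacle is passing from convergence in distribution to convergence of the second moment, which is what the SNR actually needs. I would handle this in two steps. (i) Argue that $\chat$ is asymptotically unbiased: the event $\{S\in\{0,KT\}\}$ (on which $\Psi_q^{-1}$ is ill-defined and $\chat$ is clipped) has probability at most $(1-p)^{KT}+p^{KT}$, which decays exponentially in $KT$ provided $\theta\in\Theta_q$; on the complementary event a second-order Taylor expansion of $\Psi_q^{-1}$ around $1-p$ shows that the bias is of order $1/(KT)$, hence negligible compared to the variance of order $1/(KT)$. (ii) Establish uniform integrability of $KT(\chat-c)^2$ by using the boundedness of $\chat$ (after clipping to $[0,1]$) together with the exponential tail bound above, which upgrades the delta-method convergence to convergence of second moments. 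With $\E[(\chat-c)^2]\sim \sigma^2(c)/(KT)=1/(KT\,I_q(c))$, substituting into the definition of $\mathrm{SNR}_q(c)$ completes the proof.
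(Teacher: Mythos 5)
Your proof is correct in outline, but be aware that the paper itself offers no argument for this proposition: its ``proof'' is the citation \cite{Lu_2013}, where the result is obtained as a general asymptotic-efficiency statement for the MLE (SNR asymptotically linear in the log of the Fisher Information). Your route is therefore genuinely different --- a self-contained derivation specialized to this model: you exploit the closed form $\chat = \frac{K}{\alpha}\Psi_q^{-1}\left(1-\frac{S}{KT}\right)$ from Proposition~\ref{prop:mle solution}, apply the CLT to the binomial sum $S$ and the delta method to $\Psi_q^{-1}$, and verify via the Bernoulli identity that the asymptotic variance is $1/(KT\,I_q(c))$; your intermediate expression for $I_q(c)$ indeed reduces to the paper's Proposition~\ref{prop:Fisher} once $[\Psi_q'(\theta)]^2 = \theta^{2q-2}e^{-2\theta}/\Gamma^2(q)$ is substituted, which is a useful consistency check the paper never makes explicit. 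The step you flag as the main obstacle --- upgrading convergence in distribution to convergence of second moments --- is where your sketch needs tightening, and it is in fact unavoidable rather than a technicality: since $\Pb[S=KT]=p^{KT}>0$ and $\Psi_q^{-1}(0)=\infty$, the MSE of the raw MLE is literally infinite for every finite $KT$, so the proposition only makes sense for a regularized estimator (clipped, or conditioned on $0<S<KT$); your step (i) adopts this convention implicitly and should state it as part of the setup, since clipping changes the estimator of Proposition~\ref{prop:mle solution}. For the uniform-integrability step (ii), boundedness after clipping alone gives only $KT(\chat-c)^2 \le KT$, which does not suffice; you additionally need a Chernoff/Hoeffding bound showing $\Pb\left[\left|S/(KT)-p\right|>\epsilon\right]$ decays exponentially in $KT$, so that the tail region contributes nothing to $KT\,\E[(\chat-c)^2]$, while on the complementary event a uniform bound on the second derivative of $\Psi_q^{-1}$ near $1-p$ controls both the $O(1/(KT))$ bias and the Taylor remainder (this is where the admissibility condition $q\in\calQ_\theta$ earns its keep, by keeping the inversion well conditioned and the degenerate events exponentially rare). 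With those details written out your argument is complete; what it buys over the paper's citation is an explicit, checkable proof tied to the incomplete Gamma function, at the cost of the brevity and generality that deferring to \cite{Lu_2013} provides.
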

\begin{proof}
See \cite{Lu_2013}.
\end{proof}

While the asymptotic result shown in Proposition~\ref{prop:snr} has significantly simplified the SNR, we still need to determine the Fisher Information. The following proposition gives a new result of the Fisher Information with arbitrary $q$.
\begin{proposition}\label{prop:Fisher}
The Fisher Information $I_q(c)$ of the probability mass function in \eref{eq:prob bm 1} under a threshold $q$ is:
\begin{equation}
I_q(c)=\left(\frac{\alpha }{K}\right)^2\frac{e^{-2\left(\frac{\alpha c}{K}\right)}\left(\frac{\alpha c}{K}\right)^{2q-2}}{\Gamma^2(q)\Psi_q\left(\frac{\alpha c}{K}\right)\left(1-\Psi_q\left(\frac{\alpha c}{K}\right)\right)}.
\label{eq:fisher}
\end{equation}
\end{proposition}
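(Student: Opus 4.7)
The plan is to reduce the computation to the standard Fisher information formula for a Bernoulli random variable and then invoke the derivative identity for $\Psi_q$ that was already stated in~\eref{eq:diff}. Concretely, after the spatial averaging by $\mG$, each jot belonging to pixel $n$ has intensity $\theta = \alpha c / K$, so the binary measurement $B$ is Bernoulli with $\Pb(B=1) = 1 - \Psi_q(\alpha c/K)$ by \eref{eq:prob bm 1}.

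First I would note the general fact that for a Bernoulli random variable with success probability $p(c)$, differentiating $\log \Pb(B=b;c) = b\log p(c) + (1-b)\log(1-p(c))$ twice and taking expectation gives
\begin{equation*}
I(c) \;=\; \frac{(p'(c))^2}{p(c)\,(1-p(c))}.
\end{equation*}
This is a textbook identity and I would just state it without re-derivation. Setting $p(c) = 1-\Psi_q(\alpha c/K)$, the denominator immediately supplies the factor $\Psi_q(\alpha c/K)(1-\Psi_q(\alpha c/K))$ that appears in \eref{eq:fisher}.

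Next I would compute $p'(c)$ by the chain rule, using \eref{eq:diff} with $\theta = \alpha c/K$:
\begin{equation*}
p'(c) \;=\; -\frac{\alpha}{K}\,\frac{d}{d\theta}\Psi_q(\theta)\Big|_{\theta=\alpha c/K} \;=\; \frac{\alpha}{K}\cdot\frac{(\alpha c/K)^{q-1}\,e^{-\alpha c/K}}{\Gamma(q)}.
\end{equation*}
Squaring this produces exactly the numerator $(\alpha/K)^2 (\alpha c/K)^{2q-2} e^{-2\alpha c/K}/\Gamma^2(q)$ in \eref{eq:fisher}. Dividing by $p(c)(1-p(c))$ then yields the claimed expression.

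The derivation is essentially mechanical; there is no substantive obstacle. The only things that require a little care are (i) tracking the factor of $\alpha/K$ coming from the chain rule and the definition $\theta_m = \alpha c/K$ induced by the simplified $\mG$ in \eref{eq:G}, and (ii) noting that using $\Pb(B=1)$ versus $\Pb(B=0)$ as $p$ does not matter because both $(p')^2$ and $p(1-p)$ are invariant under $p \mapsto 1-p$. These are the only bookkeeping points worth highlighting in the actual write-up.
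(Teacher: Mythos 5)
Your proof is correct, and it reaches \eref{eq:fisher} by a genuinely more economical route than the paper's. The paper works directly from the definition $I_q(c) = \E_{B}\left[\frac{-\partial^2}{\partial c^2} \log \Pb(B=b;\theta,q)\right]$: after extracting the $(\alpha/K)^2$ chain-rule factor, it explicitly computes the second derivatives of $\log(1-\Psi_q(\theta))$ and $\log \Psi_q(\theta)$, writing $R = e^{-\theta}\theta^{q-1}$ and carrying the terms involving $R' = \partial R/\partial \theta$ through the weighted sum over $\Pb(B=1)$ and $\Pb(B=0)$, where they cancel. You instead invoke the textbook score-form identity $I(c) = (p'(c))^2/\left(p(c)(1-p(c))\right)$ for a Bernoulli family, which packages exactly that cancellation (the second-derivative terms drop out once the expectation is taken), leaving only a single chain-rule application together with \eref{eq:diff}. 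The two arguments are identical in substance --- same chain-rule factor, same use of the derivative of $\Psi_q$ --- but yours trades the paper's explicit $R$, $R'$ bookkeeping for a cited standard lemma, which buys brevity and fewer opportunities for algebraic slips; this is harmless here since the family is smooth in $c$ wherever $0 < \Psi_q(\theta) < 1$, so the usual regularity conditions equating the two forms of Fisher information hold. Your closing observation that the invariance of both $(p')^2$ and $p(1-p)$ under $p \mapsto 1-p$ makes the choice of labeling immaterial is correct and is implicitly reflected in the symmetry of the paper's final expression.
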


\begin{proof}
See Appendix~\ref{proof:Fisher}.
\end{proof}

Substituting \eref{eq:fisher} into \eref{eq:SNR}, we observe that the SNR can be approximated as
\begin{align}
\mathrm{SNR}_q(c) &\approx 10\log_{10} \frac{KT e^{-2\left(\frac{\alpha c}{K}\right)}\left(\frac{\alpha c}{K}\right)^{2q}}{\Gamma(q)^2\Psi_q\left(\frac{\alpha c}{K}\right)\left(1-\Psi_q\left(\frac{\alpha c}{K}\right)\right)},
\label{eq:SNR approx}
\end{align}
which is characterized by the unknown pixel value $c$, the threshold $q$, the spatial oversampling ratio $K$ and the number of temporal measurements $T$. To understand the behavior of \eref{eq:SNR approx}, we show in Figure~\ref{fig:Fisher} $\mathrm{SNR}_q(c)$ as a function of $c$ for different thresholds $q\in\{1,\ldots,16\}$. For a fixed $q$, $\mathrm{SNR}_q(c)$ is a convex function with a unique maximum. The goal of optimal threshold design is to determine a $q$ which maximizes $\mathrm{SNR}_q(c)$ for a fixed $c$.

\begin{figure}[t]
\centering
\includegraphics[width=0.40\textwidth]{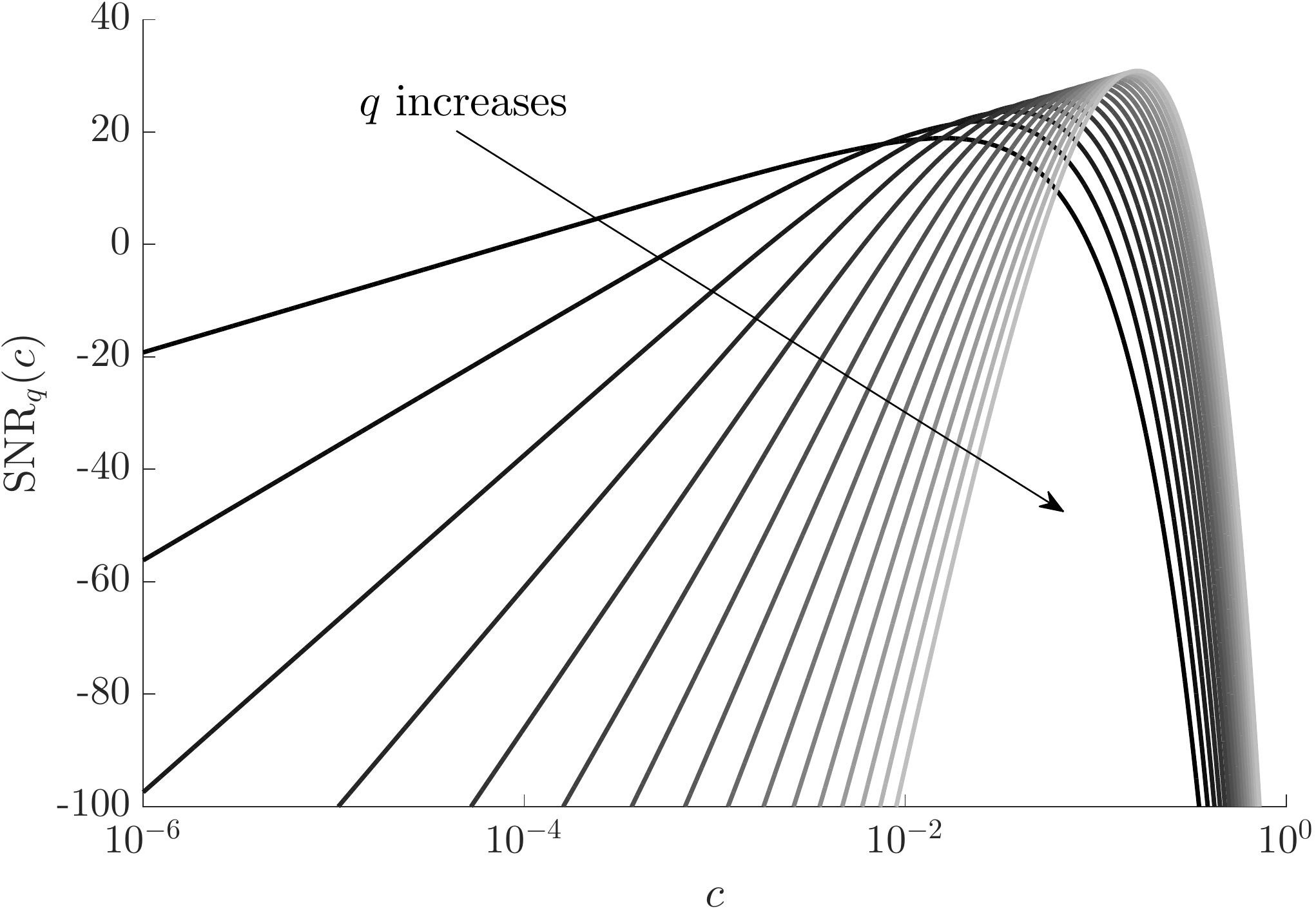}
\vspace{-2ex}
\caption{$\mathrm{SNR}_q(c)$ for different thresholds $q\in\{1,\ldots,16\}$. In this experiment, we set $\alpha=400$, $K=4$, and $T=30$. For fixed $q$, $\mathrm{SNR}_q(c)$ is always a convex function.}
\label{fig:Fisher}
\vspace{-2.0ex}
\end{figure}

\begin{remark}
The $\mathrm{SNR}_q(c)$ in \eref{eq:SNR approx} can also be derived from a concept in the device literature called the \emph{exposure-referred SNR} \cite{Fossum_2013}. See Supplementary Material for discussions.
\end{remark}

\subsection{Oracle Threshold}
We now discuss the optimal threshold design in the oracle setting. We call the result oracle because the optimal threshold depends on the unknown pixel intensity $c$. The practical threshold design scheme will be discussed in Section~\ref{sec:practical threshold}.

Using the definition of the signal-to-noise ratio, the optimal threshold is determined by maximizing $\mathrm{SNR}_q(c)$ with respect to $q$:
\begin{equation}
q^{\ast} = \argmax{q\in\N} \;\; \textrm{SNR}_q(c) = \argmax{q\in \N} \;\; \log(c^2 I_q(c)).
\label{eq:q oracle optimization problem}
\end{equation}
The second equality follows from Proposition~\ref{prop:snr}. Substituting \eref{eq:fisher} yields an expression of the right hand side of \eref{eq:q oracle optimization problem}. To further simplify the expression we derive the following lower bound.

\begin{proposition}\label{prop:lowerbound}
The function $\log(c^2 I_q(c))$ is lower bounded as follows.
\begin{equation*}%\label{eq:lb}
\log(c^2 I_q(c)) \ge \underset{ \bydef L_q(c)}{\underbrace{2\left(\log 2-\frac{\alpha c}{K} + q \log \frac{\alpha c}{K} - \log \Gamma(q)\right)}}.
\end{equation*}
\end{proposition}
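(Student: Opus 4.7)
The plan is to substitute the closed-form of $I_q(c)$ from Proposition~3 directly into $\log(c^2 I_q(c))$, separate out the explicit terms that already appear in $L_q(c)$, and reduce the inequality to a one-line scalar bound on $\Psi_q$.

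First I would compute
\begin{equation*}
c^2 I_q(c) = \frac{e^{-2(\alpha c/K)}\,(\alpha c/K)^{2q}}{\Gamma(q)^2\,\Psi_q(\alpha c/K)\bigl(1-\Psi_q(\alpha c/K)\bigr)},
\end{equation*}
which is simply Proposition~3 multiplied by $c^2$; the $c^2$ combines with the $(\alpha/K)^2 (\alpha c/K)^{2q-2}$ factor to give $(\alpha c/K)^{2q}$. Taking logarithms,
\begin{equation*}
\log(c^2 I_q(c)) = -2\tfrac{\alpha c}{K} + 2q\log\tfrac{\alpha c}{K} - 2\log\Gamma(q) - \log\!\bigl[\Psi_q(\alpha c/K)\bigl(1-\Psi_q(\alpha c/K)\bigr)\bigr].
\end{equation*}
Comparing with $L_q(c)$, the first three terms match exactly what appears inside the bracket of $L_q(c)$ (once the leading factor of $2$ is distributed), so the inequality to prove reduces to
\begin{equation*}
-\log\!\bigl[\Psi_q(\tfrac{\alpha c}{K})\bigl(1-\Psi_q(\tfrac{\alpha c}{K})\bigr)\bigr] \;\ge\; 2\log 2.
\end{equation*}

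Exponentiating, this is equivalent to $\Psi_q(\alpha c/K)\bigl(1-\Psi_q(\alpha c/K)\bigr) \le \tfrac{1}{4}$. Writing $p = \Psi_q(\alpha c/K)$, recall from \eqref{eq:prob bm 1} that $p = \Pb(B_{m,t}=0) \in [0,1]$, so $p$ is a valid probability. The elementary fact $p(1-p) \le \tfrac{1}{4}$ for $p\in[0,1]$ (maximized at $p=1/2$ by AM--GM, or by completing the square on the quadratic $p - p^2$) then closes the argument.

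There is essentially no obstacle here; the content of the proposition is the algebraic identification of the three explicit terms in $\log(c^2 I_q(c))$ with those in $L_q(c)$, together with the observation that the only remaining piece depends on $\Psi_q$ solely through the scalar bound $p(1-p)\le 1/4$. The only minor care needed is to verify that $\alpha c/K$ lies in the $\theta$-admissible set $\Theta_q$ from Definition~1 so that $0<\Psi_q(\alpha c/K)<1$ and the logarithm is finite; outside that range the Fisher Information factor $\Psi_q(1-\Psi_q)$ degenerates and the bound still holds trivially (as a limit), so the statement is unaffected.
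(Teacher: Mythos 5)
Your proposal is correct and follows essentially the same route as the paper's proof: substitute the closed form of $I_q(c)$ from Proposition~3, absorb $c^2$ into $(\alpha c/K)^{2q}$, and reduce the inequality to the elementary bound $\Psi_q(\theta)\bigl(1-\Psi_q(\theta)\bigr)\le \tfrac{1}{4}$, which is exactly the step the paper uses. Your additional remark about the degenerate cases where $\Psi_q$ approaches $0$ or $1$ is a small extra care the paper omits, but it does not change the argument.
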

\begin{proof}
See Appendix~\ref{proof:lowerbound}.
\end{proof}
Using this lower bound, we can derive the optimal threshold $q$ as follows \footnote{\textcolor{black}{Straightly speaking, the result shown in Proposition~\ref{prop:optimal q} is a ``near-optimal'' result because we are minimizing the lower bound. From our experience, the gap between the near-optimality and the exact optimality is typically insignificant.}}.
\begin{proposition}
\label{prop:optimal q}
The optimal threshold $q^*(c)$ is
\begin{equation}
q^*(c) = \argmax{q \in \N} \; L_q(c) = \left\lfloor \frac{\alpha c}{K} \right\rfloor +1,
\end{equation}
where $\lfloor \cdot \rfloor$ denotes the flooring operator that returns the largest integer smaller than or equal to the argument.
\end{proposition}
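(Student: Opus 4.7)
The plan is to treat the optimization in Proposition~\ref{prop:optimal q} as a one-dimensional discrete problem over $q \in \N$ and exploit the fact that the first forward difference $L_{q+1}(c) - L_q(c)$ has a very clean form. To keep the algebra tidy, I would first introduce the shorthand $\lambda \bydef \alpha c/K$, so that (dropping the constants that do not depend on $q$) maximizing $L_q(c)$ is equivalent to maximizing
\begin{equation*}
f(q) \;\bydef\; q \log \lambda \,-\, \log \Gamma(q), \qquad q \in \N.
\end{equation*}

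Next I would compute the forward difference. Using $\Gamma(q+1) = q\,\Gamma(q)$,
\begin{equation*}
f(q+1) - f(q) \;=\; \log \lambda \,-\, \log q \;=\; \log\!\left(\tfrac{\lambda}{q}\right).
\end{equation*}
This identity is the crux of the argument: the sign of $f(q+1)-f(q)$ depends only on whether $q$ is smaller or larger than $\lambda$. Specifically, $f(q+1) \ge f(q)$ iff $q \le \lambda$, and $f(q+1) < f(q)$ iff $q > \lambda$. Hence $f$, viewed as a sequence in $q$, is non-decreasing up to $q = \lfloor \lambda \rfloor$ and strictly decreasing from $q = \lfloor \lambda \rfloor + 1$ onward. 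The unique integer maximizer is therefore $q^* = \lfloor \lambda \rfloor + 1 = \lfloor \alpha c/K \rfloor + 1$, which is exactly the claimed expression.

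Finally I would dispose of two bookkeeping items. First, one has to comment on the boundary case where $\lambda$ is itself a positive integer: then $f(\lambda) = f(\lambda+1)$, and both $q = \lambda$ and $q = \lambda + 1$ attain the maximum; choosing $q^*(c) = \lfloor \lambda \rfloor + 1$ is consistent with the stated formula (and the ambiguity occurs only on a measure-zero set of intensities). Second, because Proposition~\ref{prop:lowerbound} only gives a lower bound on $\log(c^2 I_q(c))$, I would note, as the paper already flags in its footnote, that we are strictly speaking maximizing the bound $L_q(c)$ rather than the exact SNR; the derivation above is therefore a near-optimality result and no additional work is required.

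The main (and only real) obstacle is recognising that the messy $\log \Gamma(q)$ term collapses under a first difference via the recursion $\Gamma(q+1) = q \Gamma(q)$; once that observation is in hand, the monotonicity argument and the identification of the maximizer are immediate, so I would not anticipate any deeper difficulty.
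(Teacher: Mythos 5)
Your proof is correct and is essentially the paper's own argument in telescoped form: the paper expands $\log\Gamma(q)$ as $\sum_{k=1}^{q-1}\log k$ and checks the sign of each summand $\log(\theta/k)$, which is exactly your forward difference $f(q+1)-f(q)=\log(\lambda/q)$ obtained from $\Gamma(q+1)=q\,\Gamma(q)$. Your handling of the integer-$\lambda$ tie and the near-optimality caveat are fine additions but change nothing substantive.
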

\begin{proof}
See Appendix~\ref{proof:oracle q}.
\end{proof}
The result of Proposition~\ref{prop:optimal q} is important as it states that the oracle threshold is \emph{exactly the same} as the light intensity $\alpha c / K$. The flooring operation and the addition of a constant 1 are not crucial here because they are only used to ensure that $q$ is an integer. In \cite{Hu_Lu_2012}, a special where $\alpha = 1$ was demonstrated experimentally. Proposition~\ref{prop:optimal q} now provides a theoretical justification.

\section{Optimal Threshold: Practice}
\label{sec:practical threshold}
{\color{black}The oracle threshold derived in the previous section provides a theoretical foundation but is practically infeasible as it requires knowledge of the ground truth $c$. In this section, we present an alternative solution by relaxing the optimality criteria. Our strategy is to consider a set of thresholds which are close to the oracle threshold $q^*(c)$, and show that they are asymptotically unbiased when the number of observed bits approaches infinity (Section IV.A). This result will allow us to characterize the estimate $\widehat{c}$ (Section IV.B). We will then show that there exists a phase transition region where the asymptotic unbiasedness is maintained as $q$ stays within a certain range around $q^*(c)$, and is lost rapidly as $q$ falls outside this range (Section IV.C - IV.D). Based on these observations, we will present a practical threshold update scheme (Section IV.E).}

\subsection{Asymptotic Unbiasedness}
\label{subsec:phaseq}
\textcolor{black}{In order to derive an alternative threshold that does not require the ground truth, we start by reconsidering the ML estimate $\chat$ in Proposition~\ref{prop:mle solution}. For a spatial-temporal block $\calB=\{B_{k,t}\,|\,0\leq k < K-1, 0 \leq t < T-1\}$, the ML estimate $\chat$ satisfies the condition
\begin{equation}
\Psi_q\left(\frac{\alpha \chat}{K}\right) = 1-\frac{S}{KT},
\label{eq:Psi equation}
\end{equation}
where $S = \sum_{k,t} B_{k,t}$ is the sum of bits in $\calB$. The right hand side of this equation is an important quantity. We denote it as
\begin{equation}
\gamma_q(c) \bydef 1-\frac{S}{KT}.
\label{eq:bit density}
\end{equation}
In the device literature (e.g., \cite{Fossum_2013}), the term $1-\gamma_q(c)$ is known as the \emph{bit-density} as it is the proportion of ones in $\calB$. Note that $\gamma_q(c)$ is a random variable because $S$ is the sum of $KT$ i.i.d. random binary bits. Therefore, if we want to understand \eref{eq:Psi equation}, we must first derive the the mean and variance of $\gamma_q(c)$.}
\begin{proposition}
\label{prop:mean and var of gamma}
The mean and variance of $\gamma_q(c)$ are
\begin{align}
\E[\gamma_q(c)] &= \Psi_q\left(\frac{\alpha c}{K}\right), \;\; \mbox{and} \notag \\
\Var[\gamma_q(c)] &= \frac{1}{KT}\Psi_q\left(\frac{\alpha c}{K}\right)\left[1-\Psi_q\left(\frac{\alpha c}{K}\right)\right],
\end{align}
respectively.
\end{proposition}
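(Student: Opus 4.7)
The plan is to identify the distribution of $S = \sum_{k,t} B_{k,t}$ as a Binomial random variable and then push the result through the affine transformation $\gamma_q(c) = 1 - S/(KT)$ using linearity of expectation and the usual quadratic scaling of variance.

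First I would argue that within a single spatial-temporal block $\calB$, the bits $\{B_{k,t}\}$ are i.i.d. Independence across the temporal index $t$ follows from the assumption in Section~\ref{subsec:model} that the $T$ measurements are independent, and independence across the spatial index $k$ follows because each jot generates its own Poisson count $Y_{k,t}$ independently and then applies the same deterministic threshold $q$. The identically-distributed part is the key structural observation: because $\mG$ in \eref{eq:G} is the Kronecker product $\frac{1}{K}\mI_{N\times N}\otimes \mathbf{1}_{K\times 1}$, all jots in the $n$-th block share the common intensity $\theta_m = \alpha c/K$. Combined with \eref{eq:prob bm 1}, each $B_{k,t}$ is therefore Bernoulli with success probability $1-\Psi_q(\alpha c/K)$, and consequently $S \sim \mathrm{Binomial}\bigl(KT,\,1-\Psi_q(\alpha c/K)\bigr)$.

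Once this is established, the moments follow from the textbook Binomial formulas: $\E[S] = KT\bigl(1-\Psi_q(\alpha c/K)\bigr)$ and $\Var[S] = KT\,\Psi_q(\alpha c/K)\bigl(1-\Psi_q(\alpha c/K)\bigr)$. Since $\gamma_q(c) = 1 - S/(KT)$ is affine in $S$, linearity gives $\E[\gamma_q(c)] = 1 - \E[S]/(KT) = \Psi_q(\alpha c/K)$, and the variance transforms as $\Var[\gamma_q(c)] = (1/(KT))^2\Var[S] = \frac{1}{KT}\Psi_q(\alpha c/K)\bigl(1-\Psi_q(\alpha c/K)\bigr)$, which matches the claim.

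There is no real obstacle here; the calculation is essentially one line once the Binomial structure is recognized. The only conceptual care needed is to justify the common Bernoulli parameter across every bit in the block, which rests entirely on the simplified uniform filter $g_k = 1/K$ hidden inside $\mG$. If one were to repeat the derivation with a general low-pass filter $\{g_k\}$ as mentioned in the remark following \eref{eq:G}, the within-block bits would no longer share a single $\theta$; $S$ would then be a Poisson-binomial sum, and one would have to track a weighted combination of $\Psi_q$ values rather than a single evaluation, so the clean closed form in the proposition genuinely uses the uniform-filter assumption.
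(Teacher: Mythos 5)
Your proof is correct and follows exactly the same route as the paper: identify $S$ as a $\mathrm{Binomial}\bigl(KT,\,1-\Psi_q(\alpha c/K)\bigr)$ random variable and push the standard moments through the affine map $\gamma_q(c) = 1 - S/(KT)$. Your justification of the i.i.d.\ Bernoulli structure (and the closing remark on the uniform-filter assumption behind $\mG$) is more explicit than the paper's one-line assertion, but it is the same argument.
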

\begin{proof}
See Appendix~\ref{proof:gamma mean}.
\end{proof}
{\color{black} We can now look at the asymptotic behavior of $\gamma_q(c)$ to see if it offers any insight about the optimal threshold.
Applying the strong law of large number to $S/KT$, we can show that as $KT \rightarrow \infty$,
\begin{equation}
\gamma_q(c)=1-S/KT  \overset{a.s.}{\rightarrow} 1-\E[B_{k,t}] = \Psi_q(\alpha c /K).
\label{eq:gamma q c converge}
\end{equation}
Going back to \eref{eq:Psi equation}-\eref{eq:bit density}, the ML estimate $\widehat{c}$ should have the expectation:
\begin{align}
\E[\,\chat\,]
&\overset{(a)}{=} \frac{K}{\alpha}\E\left[\Psi_q^{-1}\left(\gamma_q(c)\right)\right] \notag \\
&\overset{(b)}{\rightarrow}  \frac{K}{\alpha}\Psi_q^{-1} \Psi_q\left(\frac{\alpha c}{K}\right)
\overset{(c)}{=} c. \label{eq:E chat converge to c}
\end{align}
where (a) follows from the definition of $\chat$, (b) follows from \eref{eq:gamma q c converge}, and (c) holds because $\Psi_q$ and $\Psi_q^{-1}$ cancels each other.}

{\color{black} What is the implication of \eref{eq:E chat converge to c}? It shows that the ML estimate $\chat$ is asymptotically unbiased. That is, as the number of independent measurements grows, the estimate $\chat$ approaches to the ground truth $c$. In other words, as long as $KT$ is large enough, the random variable $\chat$ would be an accurate estimate of the ground truth. How can this be used to determine the threshold $q$? Let us look at $\calQ_\theta$.}

{\color{black}
\subsection{Set of Admissible Thresholds $\calQ_{\theta}$}
The result in \eref{eq:Psi equation}-\eref{eq:E chat converge to c} shows that for a given $S$ (or equivalently $\gamma_q(c)$), the ML estimate can be found by
\begin{equation}
\widehat{c} = \frac{K}{\alpha} \Psi_q^{-1}\left(\gamma_q(c)\right).
\label{eq:chat1}
\end{equation}
When this happens, the $\widehat{c}$ given by \eref{eq:chat1} is asymptotically unbiased. However, the inversion $\Psi_q^{-1}$ is not always allowed. There is a set of $q$'s that can make $\Psi_q$ invertible, which is defined as $\calQ_\theta$ in Definition~\ref{def:admissible set}. The following proposition relates $\calQ_\theta$ to $\gamma_q(c)$.

\begin{proposition}
\label{prop:delta}
Let $0 < \delta < 1$ be a constant. Then, for any
\begin{equation}
q \in \calQ_\theta \bydef \left\{q \;\Big|\; 1-\left(\frac{\delta}{2}\right)^{\frac{1}{KT}} \le \Psi_q(\theta) \le \left(\frac{\delta}{2}\right)^{\frac{1}{KT}}\right\},
\label{eq:delta}
\end{equation}
the random variable $\gamma_q(c)$ will not attain 0 or 1 with probability at least $1-\delta$, i.e.,
\begin{equation*}
\Pb[0 < \gamma_q(c) < 1] > 1-\delta.
\end{equation*}
In this case, the ML estimate $\widehat{c}$ is uniquely defined by \eref{eq:chat1}.
\end{proposition}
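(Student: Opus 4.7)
The plan is to reduce the event $\{\gamma_q(c) \in \{0,1\}\}$ to two extreme events for the binomial sum $S$, bound each tail probability directly, then invoke monotonicity of $\Psi_q$ for the uniqueness claim.

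First I would observe that since $S=\sum_{k,t}B_{k,t}$ ranges over the integers $0,1,\dots,KT$, the random variable $\gamma_q(c)=1-S/(KT)$ attains the value $1$ exactly when $S=0$ (all bits are zero) and the value $0$ exactly when $S=KT$ (all bits are one). Because the bits $B_{k,t}$ are i.i.d.\ Bernoulli with $\Pb(B_{k,t}=0)=\Psi_q(\theta)$ and $\Pb(B_{k,t}=1)=1-\Psi_q(\theta)$ (by \eref{eq:prob bm 1}, with $\theta=\alpha c/K$), independence immediately gives
\begin{equation*}
\Pb[\gamma_q(c)=1] = \Psi_q(\theta)^{KT}, \qquad \Pb[\gamma_q(c)=0] = \bigl(1-\Psi_q(\theta)\bigr)^{KT}.
\end{equation*}

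Next I would plug in the admissibility condition $q\in\calQ_\theta$. The upper bound $\Psi_q(\theta)\le(\delta/2)^{1/KT}$ immediately yields $\Psi_q(\theta)^{KT}\le\delta/2$, while the lower bound $\Psi_q(\theta)\ge 1-(\delta/2)^{1/KT}$ rearranges to $1-\Psi_q(\theta)\le(\delta/2)^{1/KT}$ and hence $(1-\Psi_q(\theta))^{KT}\le\delta/2$. A union bound over the two disjoint boundary events then gives
\begin{equation*}
\Pb[\gamma_q(c)\in\{0,1\}] \;\le\; \tfrac{\delta}{2}+\tfrac{\delta}{2} \;=\; \delta,
\end{equation*}
so $\Pb[0<\gamma_q(c)<1]>1-\delta$, which is the main claim. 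I would also briefly note that $\calQ_\theta$ is nonempty precisely when $(\delta/2)^{1/KT}\ge 1/2$, i.e.\ for $KT$ large enough that $\delta\ge 2^{1-KT}$, which is a mild condition.

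For the uniqueness statement I would invoke \eref{eq:diff}: since $d\Psi_q/d\theta<0$ for all admissible $\theta>0$, the map $\Psi_q$ is strictly decreasing on $(0,\infty)$ and so $\Psi_q^{-1}$ is a well-defined single-valued function on $(0,1)$. Whenever the event $\{0<\gamma_q(c)<1\}$ holds, the expression $\widehat c=(K/\alpha)\Psi_q^{-1}(\gamma_q(c))$ in \eref{eq:chat1} therefore returns a unique real number, confirming that the ML estimate is uniquely defined in this regime.

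There is no serious obstacle; the argument is essentially a one-line union bound once the boundary events are identified. The only mildly subtle point is recognizing that the seemingly two-sided inequality defining $\calQ_\theta$ is really the correct pair of conditions to pin down both tails simultaneously, and checking that the interval is nonempty for the relevant values of $KT$ and $\delta$.
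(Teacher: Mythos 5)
Your proof is correct and takes essentially the same route as the paper's: identify the boundary events $\{S=0\}$ and $\{S=KT\}$, compute their probabilities as $\Psi_q(\theta)^{KT}$ and $\left(1-\Psi_q(\theta)\right)^{KT}$ using the i.i.d.\ Bernoulli structure, and use the two sides of the admissibility condition \eref{eq:delta} to bound each by $\delta/2$. The only differences are cosmetic --- you phrase the last step as a union bound over the two disjoint events where the paper subtracts both probabilities from one directly, and you add harmless extras (the nonemptiness remark and the uniqueness argument via the strict monotonicity of $\Psi_q$ from \eref{eq:diff}); note that both you and the paper pass from the non-strict bounds $\le \delta/2$ to the strict conclusion $>1-\delta$, which is only an issue at the degenerate boundary of $\calQ_\theta$.
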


\begin{proof}
See Appendix \ref{proof:delta}.
\end{proof}

Before we proceed, let us look at some rough magnitude of the parameters in the following example.
\begin{example}
\label{example:q admissible set}
Let the ground truth pixel value be $c = 0.5$. The sensor parameters are set as $T = 50$, $K = 4$, $\alpha = 300$. For a constant $\delta = 2\times 10^{-4}$, the tolerance level is $\varepsilon = 1-(\delta/2)^{1/KT} = 0.045$. Therefore, as long as $q \in \{q \,|\, 0.045 \le \Psi_q(\theta) \le 1-0.045\}$, which is the set $\{q \;|\; 28 \le q \le 48\}$, the probability that $\gamma_q(c)$ equals to 0 or 1 is upper bounded by $\delta = 2\times 10^{-4}$.
\end{example}
}

\begin{figure}[t]
\centering
\includegraphics[width=0.4\textwidth]{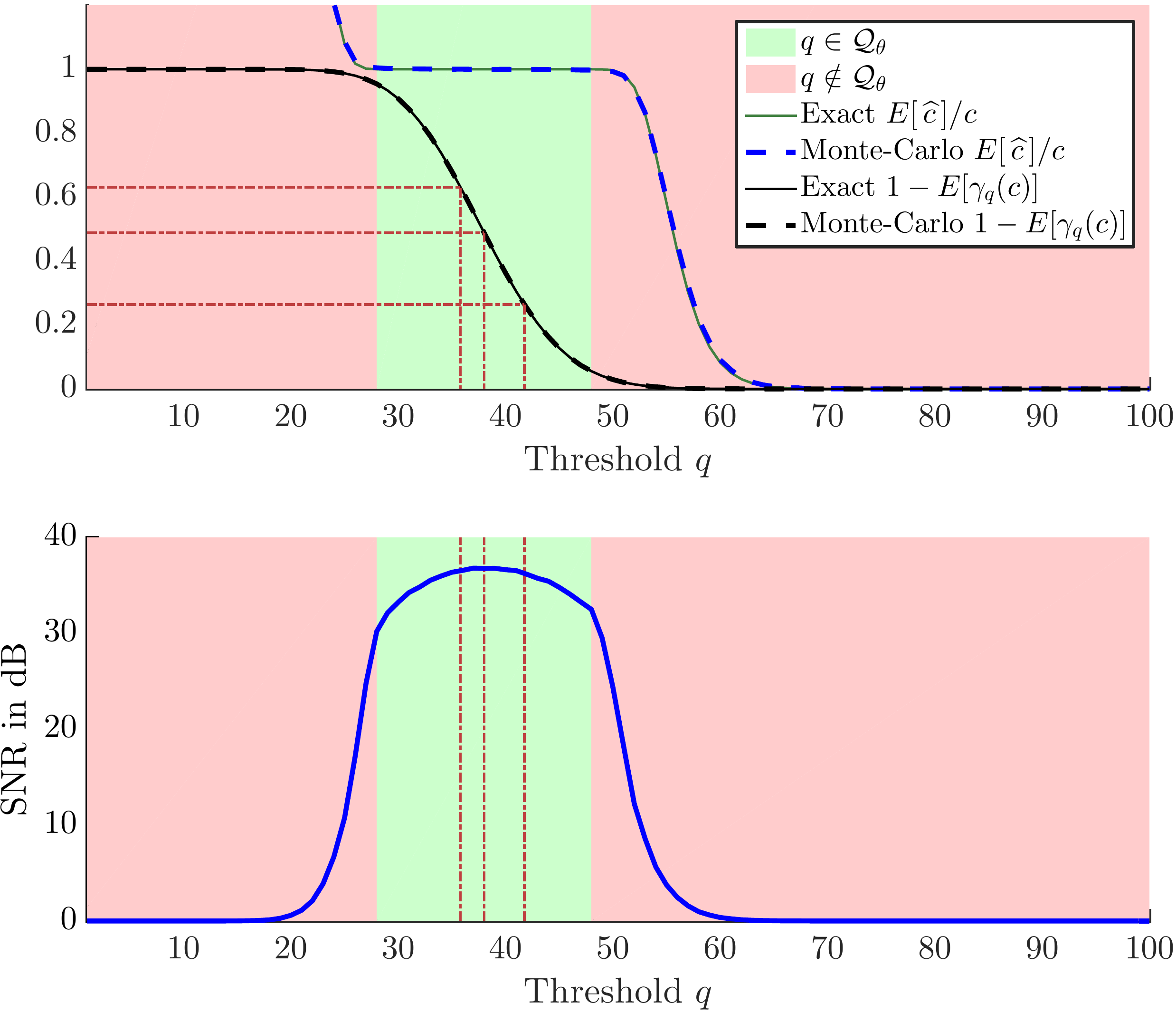}
\caption{Phase transition of the ML estimate and its relationship to the average bit density $1-\E[\gamma_q(c)]$. The red region is where it is impossible to recover $c$, whereas the green region is where we can have perfect recovery. }
\label{fig:c_q}
\vspace{-3.0ex}
\end{figure}

\vspace{-2ex}
\subsection{Gap between $\calQ_\theta$ and $q^*$}
\textcolor{black}{The result in the previous subsection shows that as long as $q \in \calQ_{\theta}$, the ML estimate is asymptotic unbiased. However, how is a $q \in \calQ_{\theta}$ compared to the oracle threshold $q^*$? We answer this question in three parts.}

\textcolor{black}{First, does an asymptotically unbiased estimate maximize the SNR? The answer is no, because Proposition~\ref{prop:optimal q} states that if $q^*$ is the optimal threshold, then $\mathrm{SNR}_{q^*}(c) \ge \mathrm{SNR}_q(c)$ for any $q \not= q^*$. Therefore, moving from the exact optimal $q^*$ to an asymptotically unbiased threshold is a relaxation of the optimality criteria.}

\textcolor{black}{If asymptotic unbiasedness is a relaxed optimality criteria, how much SNR drop will there be if we choose a $q \in \calQ_\theta$ but not necessarily $q = q^*$? We show in \fref{fig:c_q} the plot of a typical experiment with setup discussed in Example~\ref{example:q admissible set}. As shown in the figure, the green zone is the set $\calQ_{\theta} = \{q \;|\; 28 \le q \le 48\}$, or equivalently $\calQ_\theta = \{q \,|\, 0.045 \le \Psi_q(\theta) \le 0.9955\}$. For any $q$ in this $\calQ_{\theta}$, the reconstruction has a SNR at least 30dB. If we further tighten $\calQ_{\theta}$ so that $\calQ_{\theta} = \{q \;|\; 35 \le q \le 42\}$, or equivalently $\calQ_\theta = \{q \,|\, 0.25 \le \Psi_q(\theta) \le 0.6\}$, the SNR stays in the range $36.15\mathrm{dB} \le \mathrm{SNR}_q(c) \le 36.65\mathrm{dB}$, which is reasonably narrow. }

\textcolor{black}{How tight should $\calQ_\theta$ be? Ideally we want $\calQ_\theta$ to be as tight as possible. But knowing the fact that the incomplete Gamma function has a rapid transition (See the black line in \fref{fig:c_q}), $\calQ_\theta$ can be much wider. In fact, we can choose $\calQ_\theta$ such that $1-\gamma_q(c)$ stays close to 0.5, so that we are guaranteed to obtain a near optimal threshold. From an information theoretic point of view, $1-\gamma_q(c) \approx 0.5$ is where the bit density attains the maximum information --- if $q$ is too high then most bits become 0 whereas if $q$ is too low then most bits become 1. It is maximum when $q$ leads to 50\% zeros and 50\% ones. \footnote{\textcolor{black}{The exact optimal value of $1-\gamma_q(c)$ at $q^*$ is slightly lower than 0.5 due to the nonlinearity of the Gamma function. See Supplementary Material for additional discussion.}}
}

\subsection{Phase Transition Phenomenon}
{\color{black} We can now point out a very interesting phenomenon in \fref{fig:c_q}. In the upper plot of \fref{fig:c_q} we show two sets of curves: blue curves (solid and dotted), and black curves (solid and dotted). The black curves represent the ratio $\E[\,\chat\,]/c$, and the black curves represent the average bit density $1-\E[\gamma_q(c)]$. For both sets of curves, we use dotted lines to illustrate the Monte-Carlo simulation using 10,000 random samples, where each sample refers to a spatial-temporal block $\calB_n$ containing $KT = 200$ binary bits. Notice that these dotted lines overlap exactly with their expectations, and hence \eref{eq:Psi equation}-\eref{eq:E chat converge to c} are valid.

Let us take a closer look at the blue curve $\E[\,\chat\,]/c$. Let $\calQ_\theta = \{q \;|\; q_L \le q \le q_H\}$, where $q_L$ and $q_H$ are the smallest and the largest integers in $\calQ_\theta$ respectively. There are three distinct phases:
\begin{densitemize}
\item When $q< q_L$, the threshold is low and so most bits become 1. Therefore, $\gamma_c(q) \rightarrow 0$ and hence $\chat \rightarrow \infty$. Thus, $\E[\,\chat\,]/c \rightarrow \infty$  as $q$ decreases.
\item When $q > q_H$, the threshold high and so most bits become 0. Therefore, $\gamma_c(q) \rightarrow 1$ and hence $\chat \rightarrow 0$. Thus, $\E[\,\chat\,]/c \rightarrow 0$ as $q$ increases.
\item When $q_L \le q \le q_H$, the ML estimate $\chat$ is asymptotically unbiased. Therefore, $\E[\,\chat\,]/c = 1$.
\end{densitemize}
Essentially, \fref{fig:c_q} demonstrates a phase transition behavior of the threshold. Such phase transition exists because $\Psi_q$ is only invertible when $q \in \calQ_\theta$.}

\subsection{Bisection Threshold Update Scheme}\label{subsec:bisection}

{\color{black}Now we present a practical threshold update scheme. As we discussed in Section IV.C, the oracle threshold $q^*$ can be obtained when bit density $\gamma_q(c)$ is close to 0.5. Therefore, a practical procedure to determine $q$ is to sweep through a range of $q$ until the bit density reaches 0.5. To achieve this objective, we propose a bisection method illustrated in \fref{fig:bisection} and Algorithm~\ref{alg:Bisection}. Starting with initial thresholds $q_A$ and $q_B$, we check whether the bit density satisfies $1-\gamma_{q_A} > 0.5$ and $1-\gamma_{q_B} < 0.5$. If this is the case, then we find a mid point $q_M = (q_A+q_B)/2$ and check whether $1-\gamma_{q_M}$ is greater or less than 0.5. If $1-\gamma_{q_M} > 0.5$, we replace $q_A$ by $q_M$, otherwise we replace $q_B$ by $q_M$. The process repeats until $1-\gamma_{q_M}$ is sufficiently close to 0.5.

\begin{figure}[t]
\centering
\begin{tabular}{cccc}
\multicolumn{4}{c}{\includegraphics[width=0.375\textwidth]{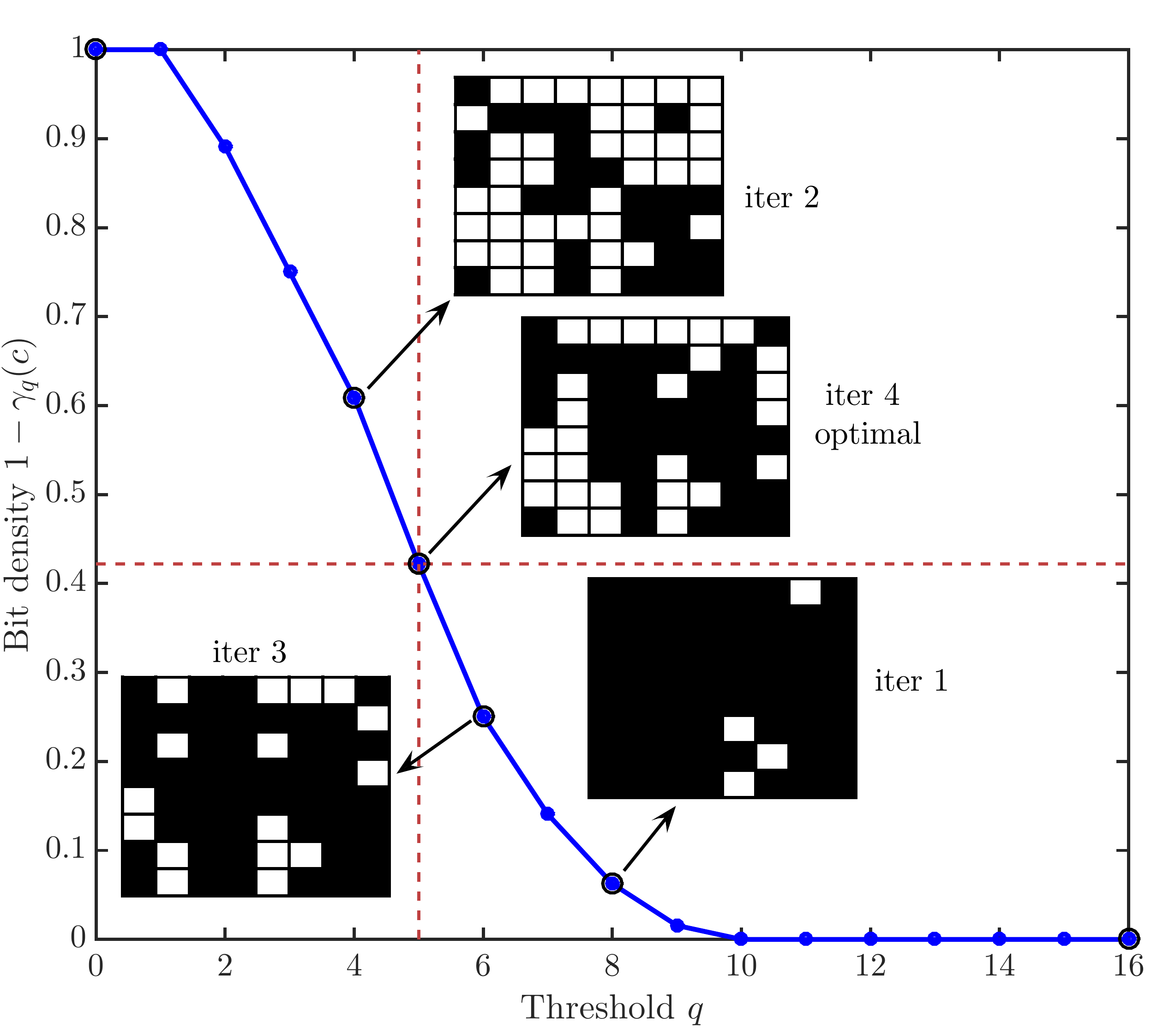}}\\
\hspace{-2ex}\includegraphics[width=0.23\linewidth,clip]{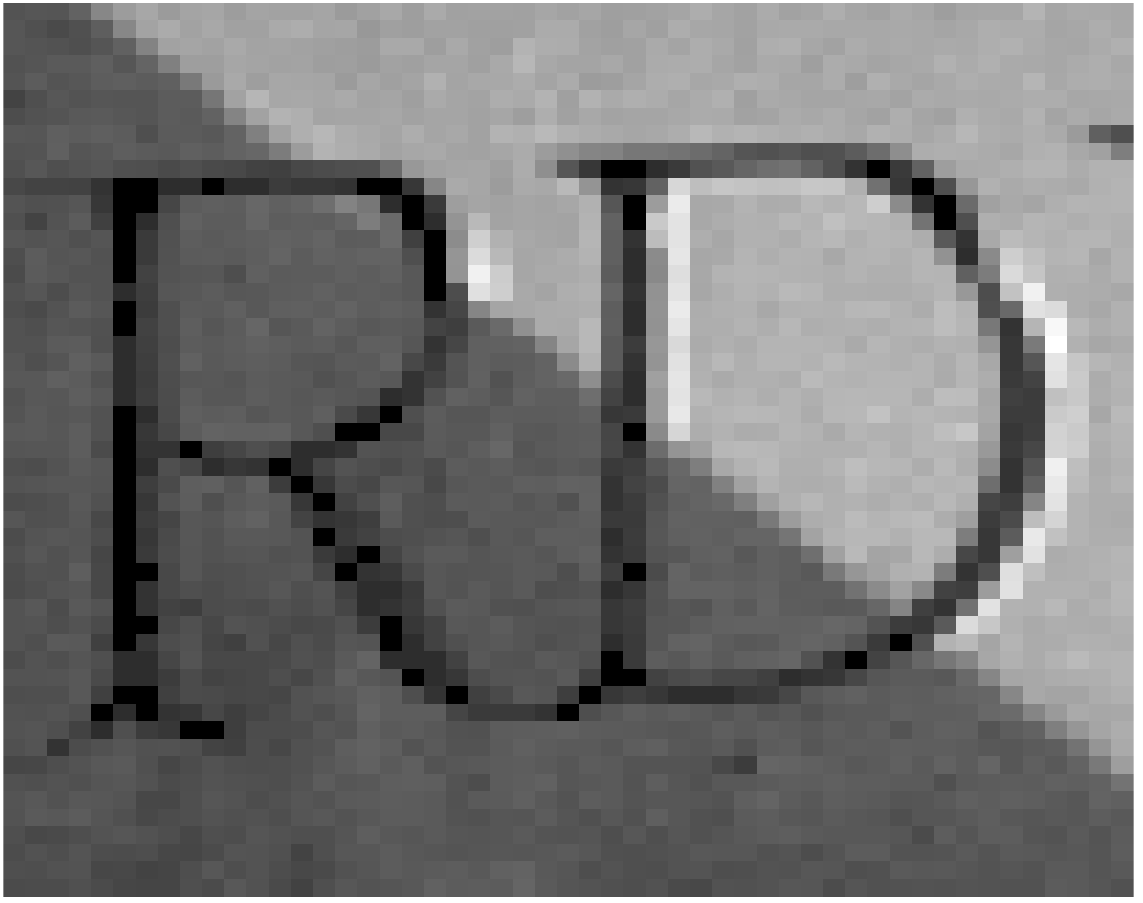}&
\hspace{-2ex}\includegraphics[width=0.23\linewidth,clip]{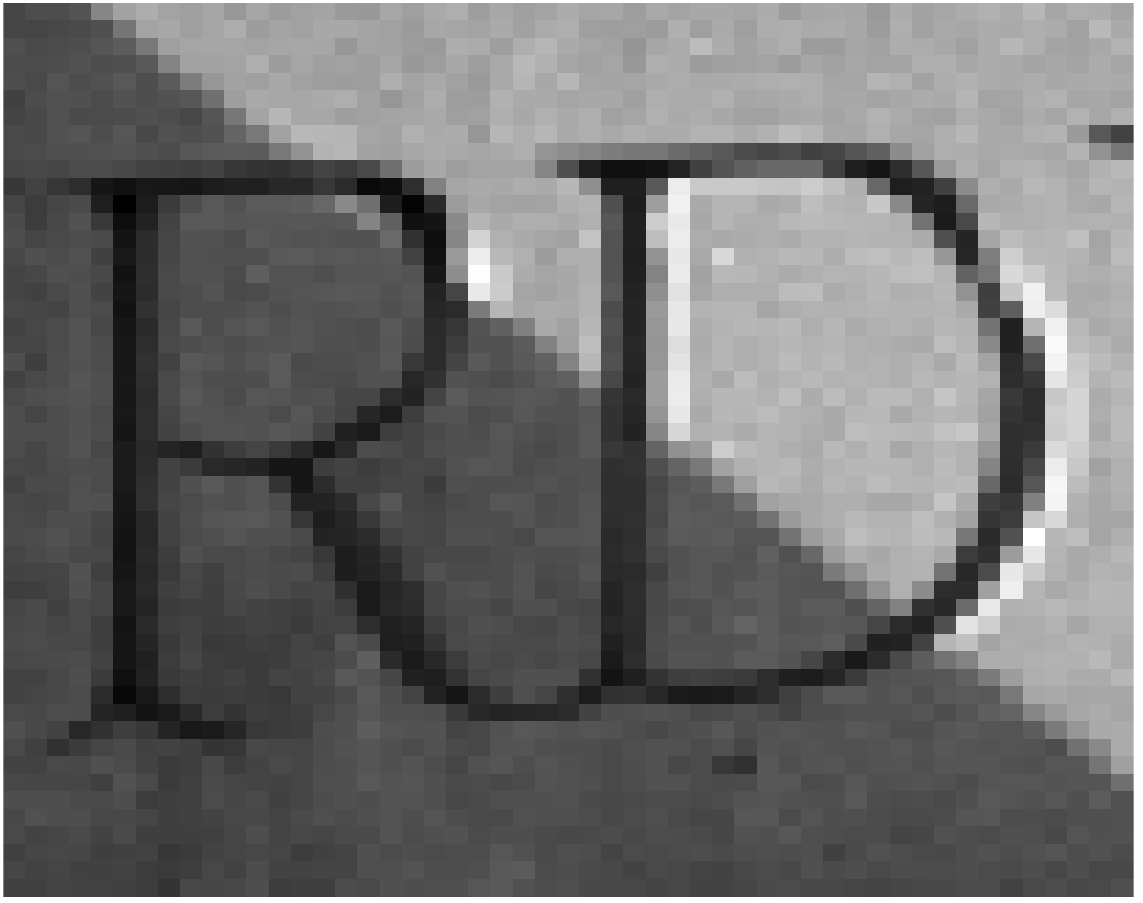}&
\hspace{-2ex}\includegraphics[width=0.23\linewidth,clip]{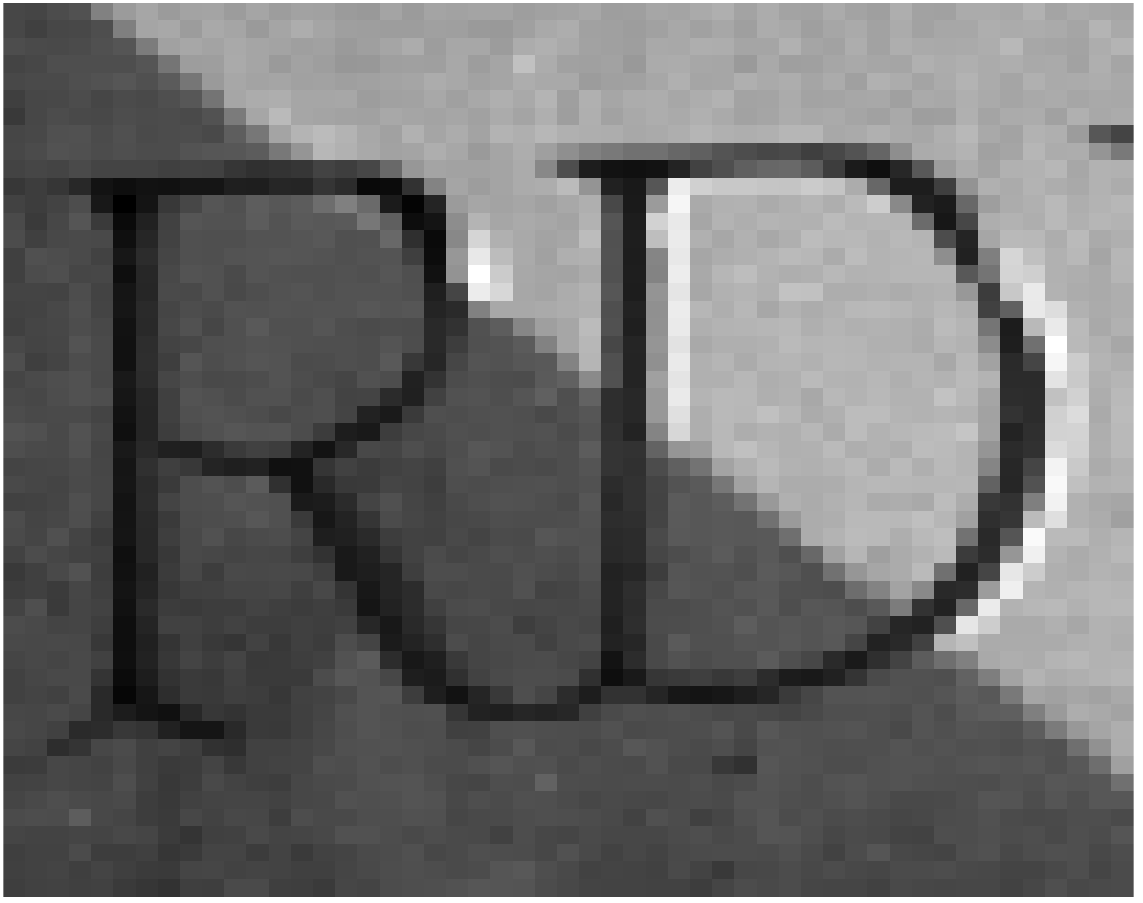}&
\hspace{-2ex}\includegraphics[width=0.23\linewidth,clip]{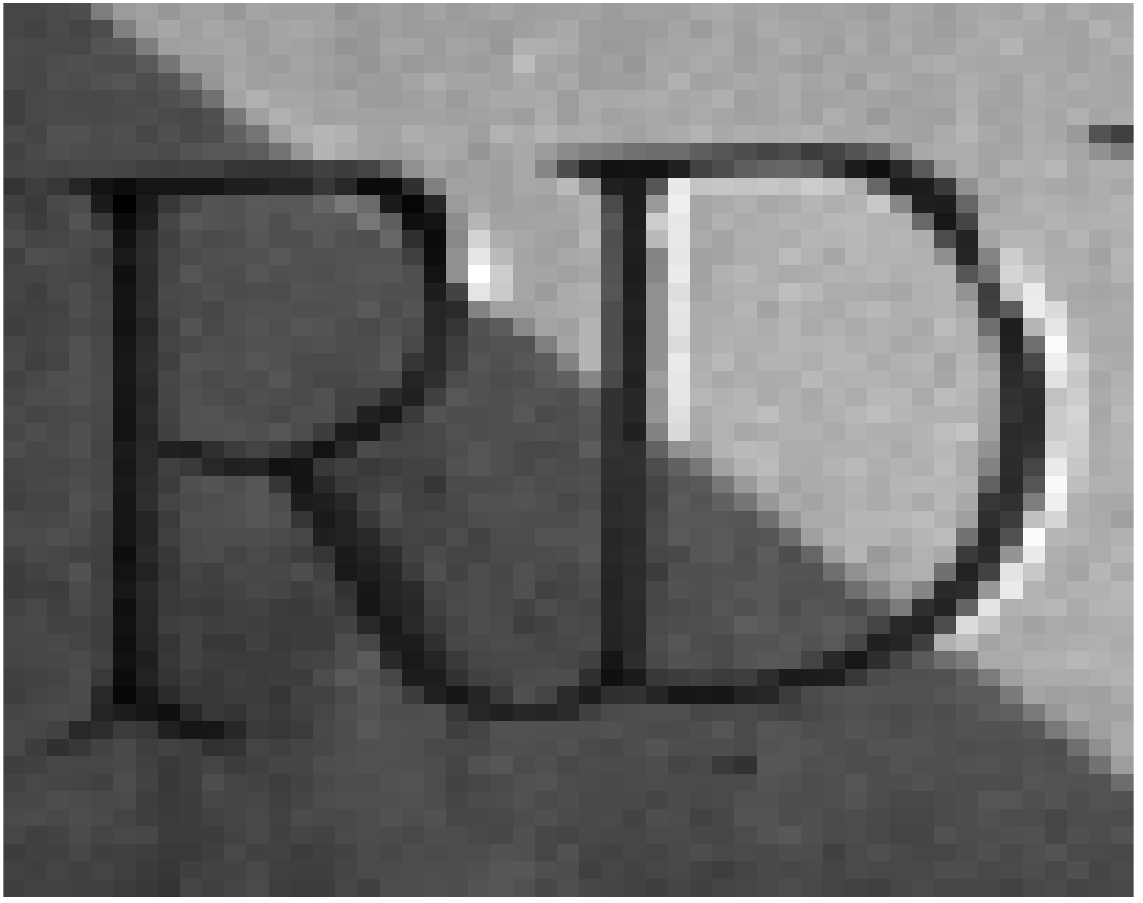}\\
\hspace{-2ex}\footnotesize{iter 1, $27.4$ dB} &
\hspace{-2ex}\footnotesize{iter 2, $37.1$ dB} &
\hspace{-2ex}\footnotesize{iter 3, $38.8$ dB} &
\hspace{-2ex}\footnotesize{iter 4, $39.1$ dB}
\end{tabular}
\caption{The proposed bisection update scheme adjusts the threshold $q$ such that the bit density $1-\gamma_q(c)$ approaches 0.5. The upper graph illustrates the bisection steps. Bottom row shows cropped patches from reconstructed images using threshold maps at different iterations and the PSNRs.}
\label{fig:bisection}
\vspace{-3ex}
\end{figure}

\begin{algorithm}[!t]
	\caption{Bisection Threshold Update Scheme}
	\label{alg:Bisection}
	\begin{algorithmic}
    \STATE Initial thresholds $q_A$ and $q_B$ such that $1-\gamma_{q_A} > 0.5$ and $1-\gamma_{q_B} < 0.5$.
	\STATE Compute $q_M=\lceil (q_A+q_B)/2 \rceil$, where $\lceil \cdot \rceil$ denotes the ceiling operator.
    \WHILE{$|\gamma_{q_M} - 0.5| < \mbox{\texttt{tol}}$}
    \STATE If $\gamma_{q_M} < 0.5$, then set $q_A = q_M$. Else, set $q_B = q_A$.
    \STATE Compute $q_M=\lceil (q_A+q_B)/2 \rceil$.
	\ENDWHILE
    \RETURN $q_M$
	\end{algorithmic}

\end{algorithm}

\begin{figure}[t]
\vspace{-2.0ex}
\centering
\includegraphics[width=0.375\textwidth]{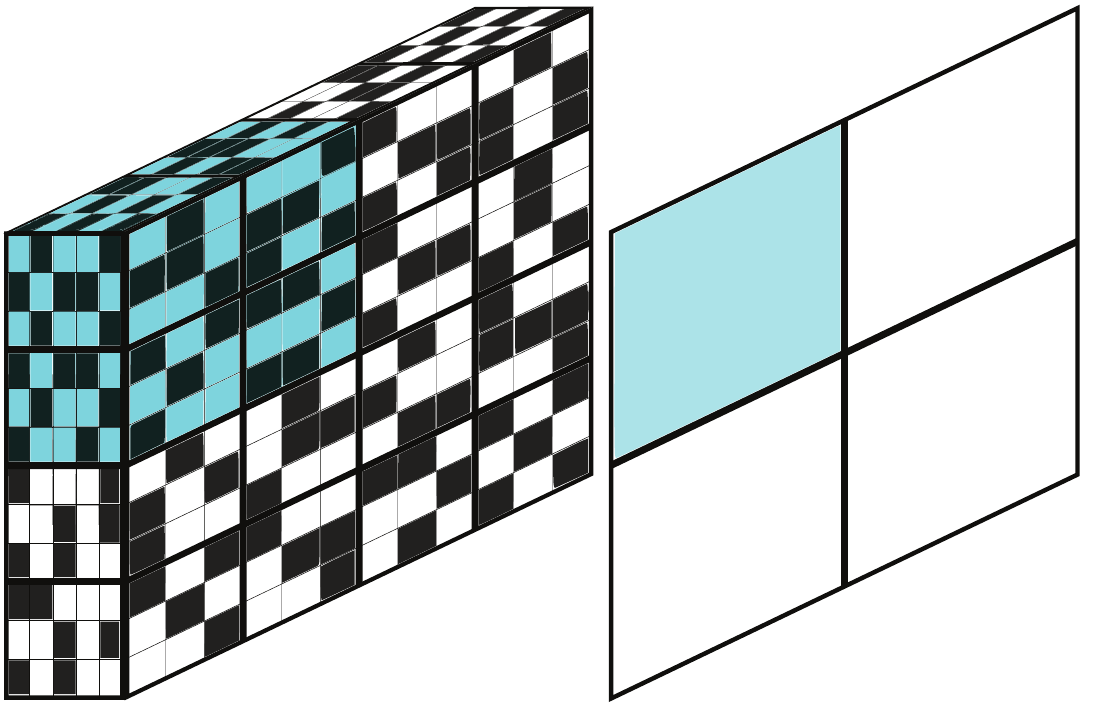}
\caption{Concept of shared thresholds. (Left) binary measurements, spatial oversampling $K=3\times 3$, Temporal oversampling $T=5$ . (Right) Threshold map, one threshold value is shared by $6\times 6$ jots.}
\label{fig:bisection 2}
\vspace{-3.0ex}
\end{figure}

In our proposed threshold update scheme, we assume that the image has been partitioned into $N$ blocks $\{\calB_n \;|\; n = 0,\ldots,N-1\}$. Each $\calB_n$ contains $KT$ binary bits and is used to estimate one pixel value $c_n$. This setting results in $N$ different thresholds, one for every pixel. To generalize the setting, it is also possible to allow multiple pixels to share a common threshold. \fref{fig:bisection 2} shows an example. The advantage of sharing a threshold for multiple pixels is that circuits associated with the sensor can be simplified. In terms of performance, since neighboring pixels are typically correlated, sharing the threshold causes little drop in the resulting SNR.

The price that the proposed bisection algorithm has to pay is the number of frames it requires to determine a good $q$. For every evaluation of $\gamma_{q_M}$, the sensor has to physically acquire one frame and compute the bit density in each of the $N$ blocks. Therefore, the more bisection steps we need, the more frames that the sensor has to physically acquire. The rate of convergence of the proposed method and existing methods will be compared in Section~\ref{sec:exp}.}

\subsection{Extension to High Dynamic Range}\label{subsec:HDR}
{\color{black}While QIS is a photon counting device, it is designed to count a few photons to keep the full-well capacity small, e.g. 20 photoelectrons as reported in \cite{Ma_Starkey_Rao_2015}. Therefore, for practical imaging tasks, we need to extend the dynamic range for QIS.

There are two ways to enable dynamic range extension:
\begin{itemize}
\item Bright Scenes: Reduce Duty Cycle. In the signal processing block diagram shown in \fref{fig:sigprocess}, we can replace the constant $\alpha$ by a fraction as $\alpha \tau$, where $0 \le \tau \le 1$ determines the ratio between the actual integration time and the readout scan time. It can also be referred to the shutter duty cycle because the shutter is opened to collect photons during this proportion of time \cite{Fossum_2015}. For very bright scenes, a low duty cycle will prevent QIS from saturating early.
\item Dark Scenes: Multiple Measurements. For dark scenes, multiple measurements can be taken to ensure enough photons over the measurement period. This, however, is different from conventional HDR imaging. In conventional HDR imaging, the multiple shots are taken at different shutter speeds, e.g., 1/8192, 1/2048, 1/512, 1/128, 1/32, 1/8, 1/2 seconds \cite{Sprow_Kuepper_Baranczuk_2013}, which is redundant. QIS's multiple shot functions more similar to burst photography \cite{Hasinoff_Sharlet_Geiss_2016}. The amount of acquisition time is significantly less than the conventional HDR imaging.
\end{itemize}

These two methods can be used for \emph{any} threshold scheme, including ours and others. The benefit of using our proposed threshold scheme is that it supports a much wider dynamic range extension. In Figure~\ref{fig:DR}, we illustrate the total dynamic range that can be covered using 4 multiple measurements at duty cycles $\tau=1$, $\tau=0.2$, $\tau=0.04$, and $\tau=0.008$. The maximum threshold level is $q_{\max}=25$, and the minimum threshold level is $q_{\min} = 1$. It can be seen from the figure that with the optimal threshold $q^*$, the dynamic range is significantly more than the non-optimal ones. In particular, we observe a 16dB and a 54dB improvement compared to $q_{\min} = 1$ and $q_{\max} = 25$, respectively. Experimental results will be shown in Section V.C.}

\begin{figure}[t]
\centering
\includegraphics[width=0.4\textwidth]{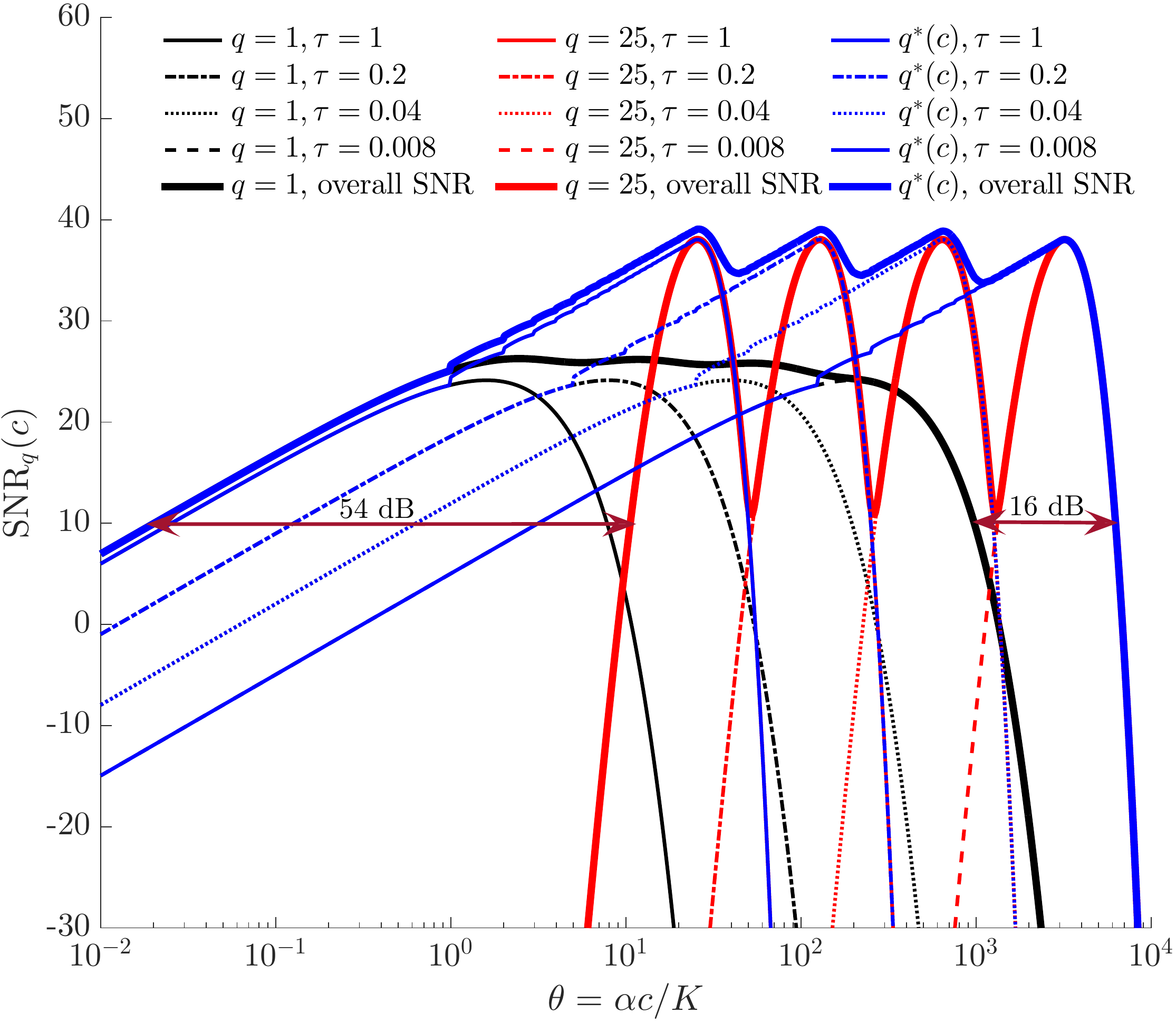}
\vspace{-2ex}
\caption{SNR in dB vs. exposure $\theta$ for HDR imaging mode obtained by fusion of frames with shutter duty cycles $\tau\in\{1,0.2,0.04,0.008\}$. Three scenarios are shown: constant threshold with $q=1$ (black), $q=25$ (red) and an optimal spatially varying threshold (blue).}
\label{fig:DR}
\vspace{-3.0ex}
\end{figure}

\subsection{Hardware Consideration}
{\color{black} Concerning the hardware implementation, we anticipate that future QIS will be equipped with per-pixel FPGAs to perform the proposed threshold update scheme. On-sensor FPGA is an actively developing technology. For example, MIT Lincoln Lab's digital focal plane array can achieve on-sensor image stabilization and edge detection \cite{Schultz_Kelly_Baker_2014} . For QIS threshold update, the complexity is low because we are only counting the number of ones in the bisection. More specifically, in order to perform the bisection, we only need $K$ additions to compute $\sum_{k=0}^{K-1} b_{Kn+k,t}$; one comparison $\sum_{k=0}^{K-1} b_{Kn+k,t} \geq 0.5$; one addition and one multiplication (with a constant 0.5) to update the threshold $q_M = \lceil (q_A+q_B)/2 \rceil$. The dominating factor here is the $K$ additions, which can be implemented efficiently by shifting bits in a buffer.

We should also point out that the proposed bisection method can be flexibly adjusted spatially and temporally for different hardware configurations. For example, we can use a spatial-temporal window $4 \times 4 \times 1$ for low-resolution high-speed imaging, or $1 \times 1 \times 16$ for high-resolution low-speed imaging. This flexibility offers additional advantages of QIS over conventional CCD and CMOS cameras. }

%------------------------------------------------------------------------------------------------------------------
%			Experimental Evaluation
%------------------------------------------------------------------------------------------------------------------
\section{Experimental Results}
\label{sec:exp}
{\color{black} In this section we evaluate the proposed threshold update scheme by comparing it with existing methods. We consider two evaluation metrics: (1) convergence rate of the threshold update methods; (2) quality of the reconstructed images. For reconstruction evaluation, we create our own Purdue dataset comprising 77 images captured by a Canon EOS Rebel T6i camera. For HDR imaging, we use the HDR-Eye dataset by Nemoto et al. \cite{HDREYE,Nemoto_Korshunov_Hanhart_2015}. In all experiments, we fix the spatial over-sampling factor as $K = 4 \times 4 = 16$, and number of temporal frames as $T = 13$. The maximum threshold level is set as $q_{\max} = 16$ to ensure that it is realistic for today's QIS.}

\subsection{Convergence}

We compare the proposed threshold update scheme with the Markov Chain (MC) adaptation proposed by Hu and Lu \cite{Hu_Lu_2012}. The Markov Chain adaptation models the threshold as a variable with $2^L$ states. These $2^L$ states can be regarded as $2^L$ steps before reaching to the next threshold level. The probability of changing from one state to another is controlled by a parameter $1-\beta$ with $0 < \beta < 1$. When a bit arrives, the state will be updated (increased or decreased) or will remain unchanged. Once the state is increased by $2^L$ times, the threshold will be increased by one.

When comparing Markov Chain adaptation with the proposed bisection algorithm, one should be aware of the difference between the two methods. Markov Chain adaptation is a \emph{per-jot} update scheme whereas the proposed bisection algorithm is a \emph{per-pixel} update scheme. For a pixel with $K \times K$ jots, Markov Chain adaptation needs $K^2$ iterations to update the threshold \emph{sequentially}. In contrast, the proposed bisection algorithm updates a common threshold for all $K^2$ jots \emph{simultaneously}. Thus in practice our bisection algorithm is significantly less complex to implement in hardware than the Markov Chain. In order to take the different forms of updates into account, we treat the $K^2$ iterations of Markov Chain adaptation as one ``major iteration'' and compare it with the one bisection step of the proposed algorithm.

The first comparison we make is to check the threshold at different jots. \fref{fig:Jot_Conv} shows the results of three typical runs with underlying optimal thresholds $q^* = 1, 8, 16$. In this experiment, we generate 100 random binary blocks of size $K \times K$ and estimate the threshold at each major iteration. We report the average of these 100 estimates to minimize the randomness of the data. The results show that one iteration of the proposed bisection algorithm works as good as the $K^2$ iterations of the Markov Chain adaptation. In some cases, Markov Chain tends to oscillate whereas the bisection result is stable.

\begin{figure}[t]
\centering
\includegraphics[width=0.45\textwidth]{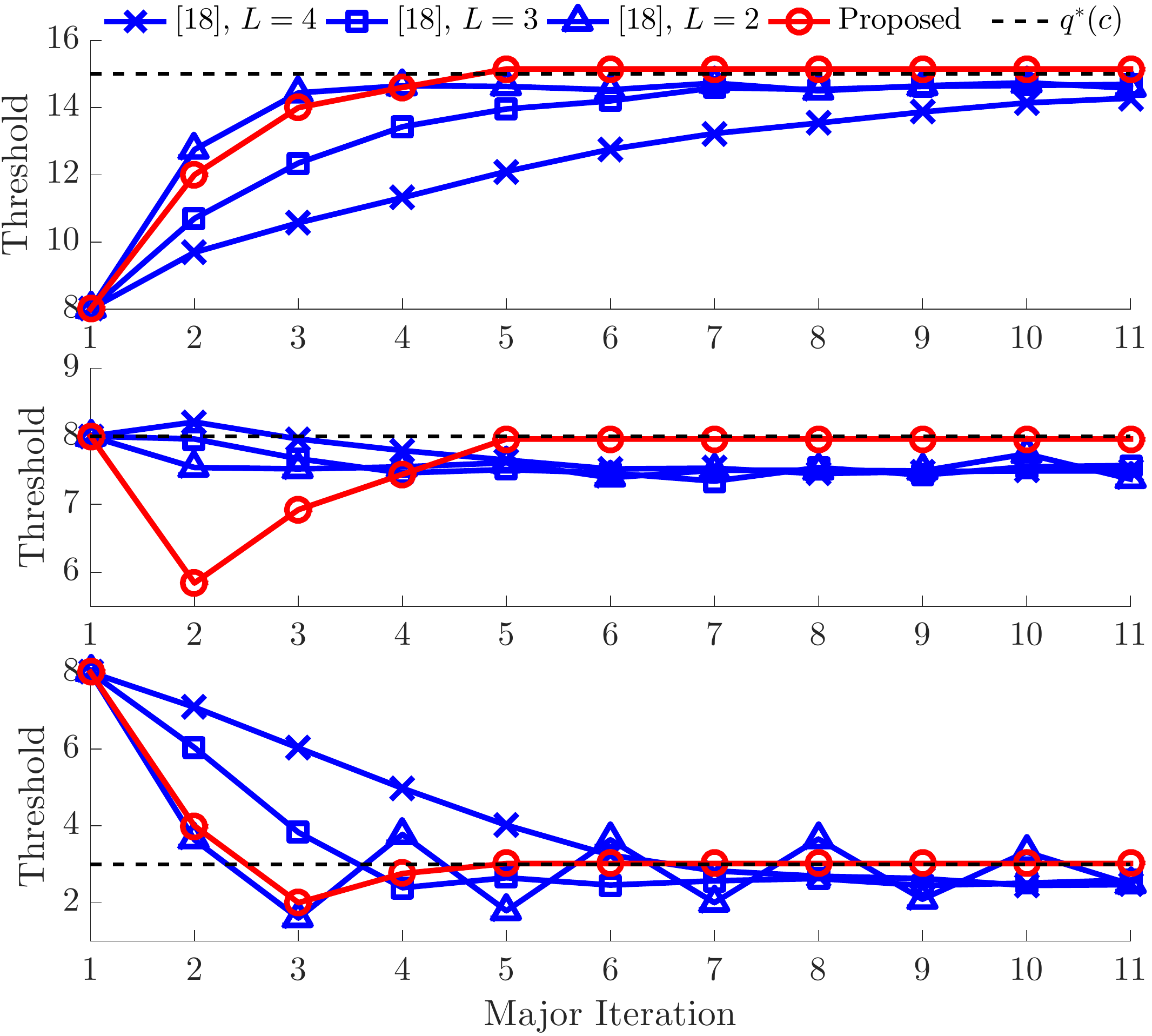}
\vspace{-2ex}
\caption{Convergence of the threshold at 3 jots. Each curve is averaged over 100 random samples. The red curve indicates the proposed bisection method. The blue curves are the Markov chain adaptation \cite{Hu_Lu_2012} with $\beta = 0.25$. Note that one major iteration of Markov Chain adaptation corresponds to $K^2$ sequential updates, and one major iteration of the bisection method corresponds to a single update to $K^2$ jots simultaneously.}
\label{fig:Jot_Conv}
\vspace{-2.0ex}
\end{figure}

\begin{figure}[t]
\centering
\includegraphics[width=0.40\textwidth]{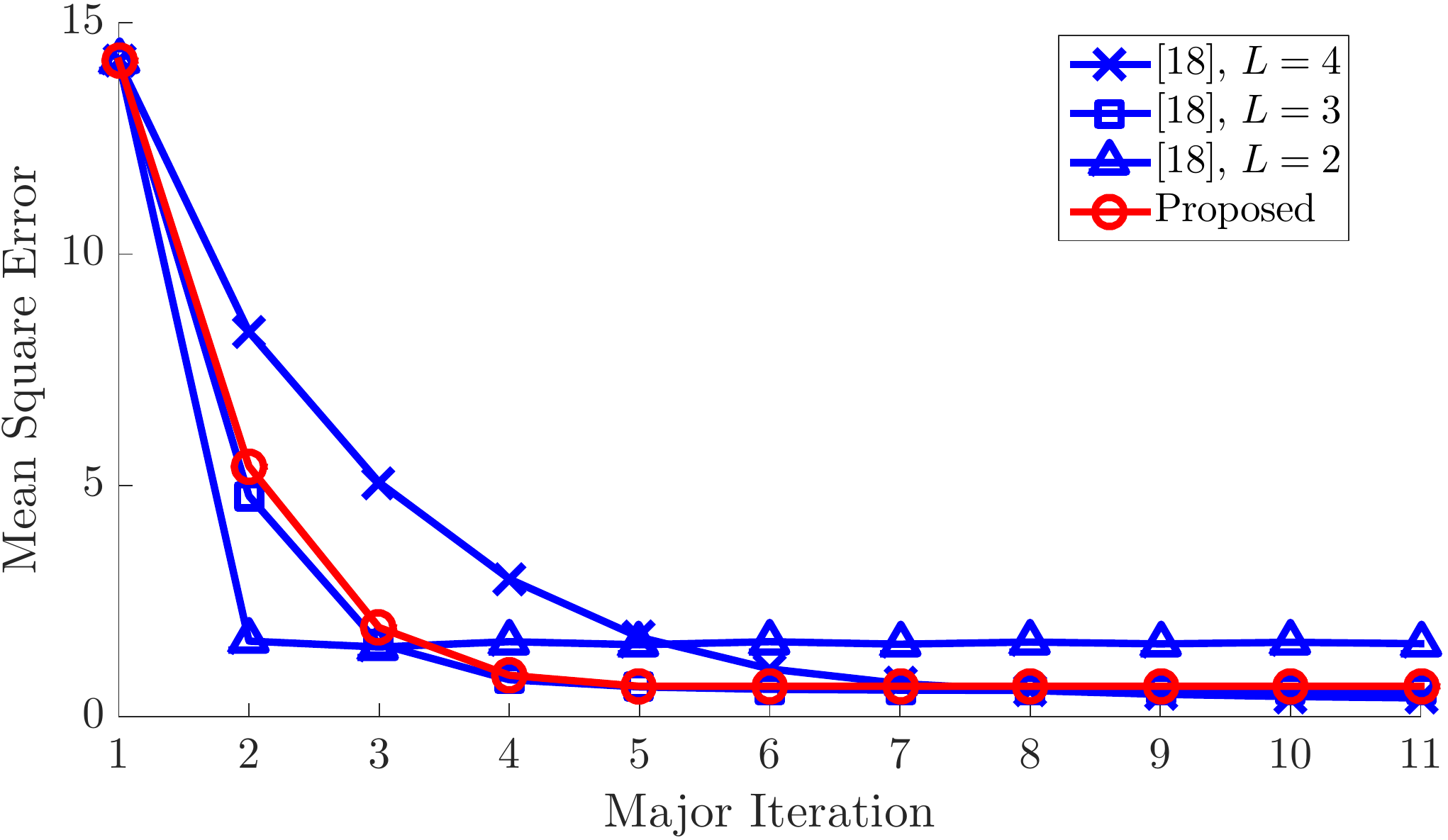}
\vspace{-2ex}
\caption{Mean square error between the estimated threshold and the ideal oracle threshold. Each curve is averaged over 50 random samples and 77 images.  The red curve indicates the proposed bisection method. The blue curves are the Markov chain adaptation \cite{Hu_Lu_2012} with $\beta = 0.25$.}
\label{fig:Conv}
\end{figure}

The second comparison we make is to check how close the estimated threshold is compared to the optimal threshold. The optimal threshold $q^*$ is obtained using the oracle scheme. In \fref{fig:Conv}, we plot the mean squared error between the estimated threshold and the oracle threshold. \textcolor{black}{For fairness, we show the results of the MSE averaged over the 77 images of our dataset, and 50 random samples per image.} One threshold is shared by $K \times K$ jots, and each $K \times K$ jots correspond to one pixel. The result is consistent with the ones shown in \fref{fig:Jot_Conv}.

\subsection{Image Reconstruction Quality}
The convergence comparison in the previous subsection is only useful to compare threshold update methods that actually return a threshold. In the QIS literature, there are methods that implicitly update the threshold, e.g., the conditional reset method \cite{Vogelsang_Stork_Guidash_2014}. For comparison with these methods, we have to compare the quality of the image reconstructed from the binary raw data. The image reconstruction is done using the closed-form ML estimate in Section~\ref{subsec:MLE}.

We consider three classes of methods:
\begin{itemize}
  \item Uniform Threshold. Uniform threshold is commonly used in the device literature \cite{Yang_Lu_Sbaiz_2010, Yang_Lu_Sbaiz_2012, Chan_Lu_2014}. A uniform threshold is a single threshold applied to all pixels in the image. In this experiment, we consider the following choices of uniform thresholds: $q = 1$, $q = 5$, $q = 10$ and $q = 16$.
%  \item {\color{black} Checkerboard Threshold. Following the work in \cite{Yang_2012}, this method assumes a threshold map consisting of two thresholds $q_1$ and $q_2$ arranged in a checkerboard pattern. The thresholds $q_1$ and $q_2$ are selected to minimize the CRLB integrated over the range $c\in(0,1)$. For ${q_{\max}=16}$, $K=5$, we have $q_1=4$ and $q_2=12$.}
  \item Conditional Reset \cite{Vogelsang_Stork_Guidash_2014}. Conditional reset counts the number of photons and is reset when it is above the threshold. The threshold in conditional reset is sequentially increasing or decreasing. The reconstructed image is obtained by digitally integrating the raw binary frames.
  \item Proposed Method. As we discussed in Section~\ref{subsec:bisection}, the proposed method can be implemented to let multiple pixels share a common threshold. Thus, in this experiment we consider three sharing strategies: (1) Share a threshold between a neighborhood of $K \times K$ jots (i.e., one threshold for one pixel); (2) Share a threshold between a neighborhood of $K^2 \times K^2$ jots (i.e., one threshold for $K \times K$ pixel); (3) Share a threshold between a neighborhood of $2K^2 \times 2K^2$ jots (i.e., one threshold for $2K \times 2K$ pixels).
\end{itemize}

The result of the experiment is shown in Table~\ref{table:result_2}. The PSNR values reported are averaged over 77 images in our dataset. Each image generates 50 random realizations, and the PSNR of an image is averaged over these 50 random realizations to minimize the randomness. As shown in the table, while conditional reset generally performs better than a uniform threshold, it performs significantly worse than the proposed threshold update scheme.

\begin{table}[t]
\footnotesize
\centering
\renewcommand{\arraystretch}{1.3}
\caption{Average PSNR and Standard deviation of 77 recovered images using different Q-maps and 50 random samples.}
\label{table:result_2}
\begin{tabular}{c|c|c|c}
  \hline
    & Configuration &   \begin{tabular}{@{}c@{}}Average \\ PSNR\end{tabular} &  Std \\
  \hline
  \hline
  \multirow{4}{*}{Uniform Threshold}
  &$q=1$&10.30 &0.01 \\
  \cline{2-4}
  &$q=5$&28.80 &0.04 \\
  \cline{2-4}
  &$q=10$&23.22 &0.02 \\
  \cline{2-4}
  &$q=16$&12.95 &0.01 \\
  \hline
  \multirow{2}{*}{Conditional Reset \cite{Vogelsang_Stork_Guidash_2014}}
 &Ascending $q$ sequence&23.77 &0.52 \\
 \cline{2-4}
 &Descending $q$ sequence&24.95 &0.53\\
 \hline
  \multirow{3}{*}{Proposed Method}
  &$2K^2\times 2K^2$&30.14 &0.06 \\
  \cline{2-4}
  &$K^2\times 	K^2$&31.18 &0.06 \\
  \cline{2-4}
  &$K\times K$&32.78 &0.02 \\
  \hline
\end{tabular}
\end{table}

\subsection{Influence of QIS Threshold on HDR Imaging}

\begin{figure*}[t]
\centering
\begin{tabular}{ccccccccc}
\includegraphics[width=0.10\linewidth]{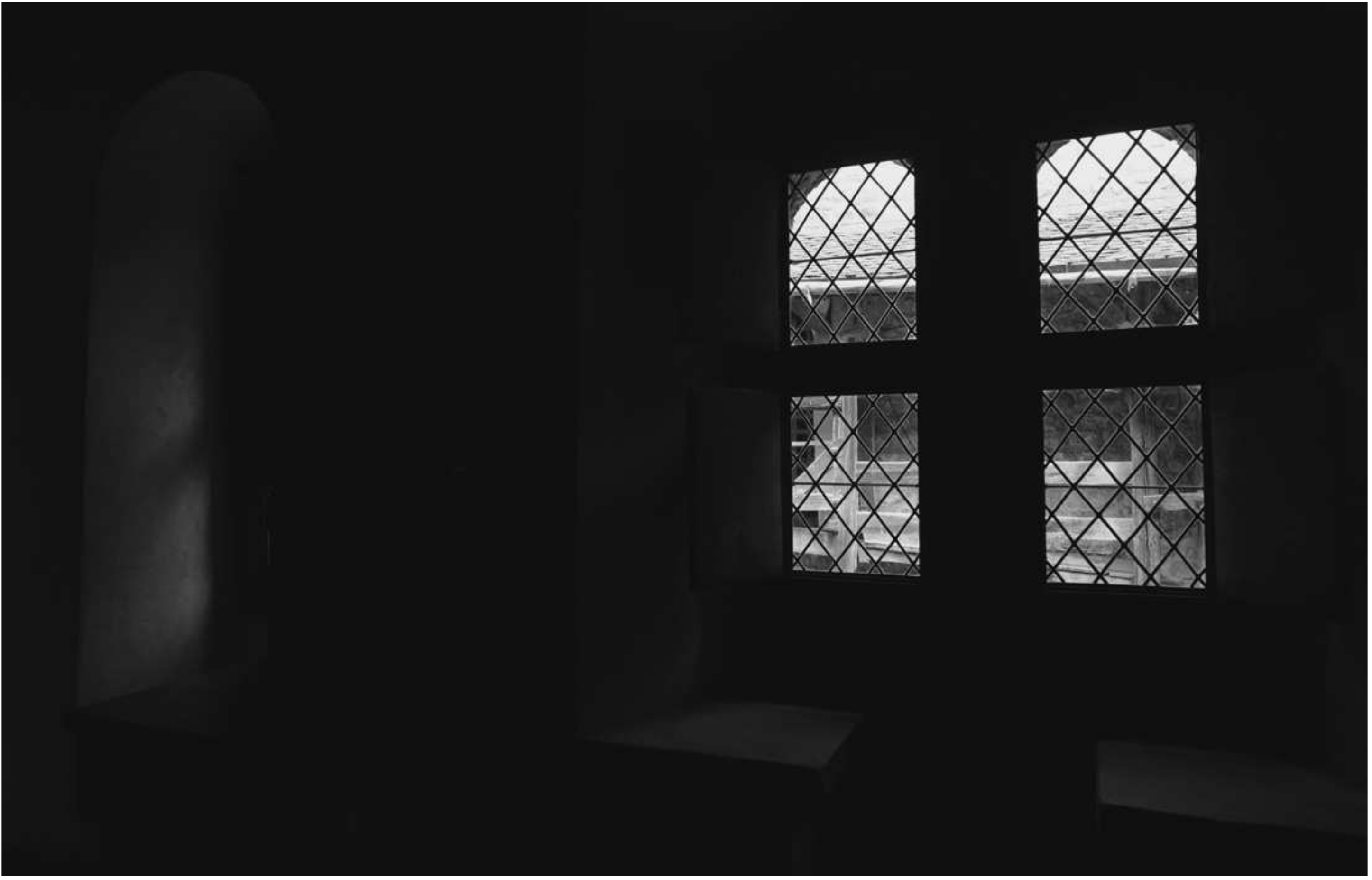}&
\hspace{-2ex}\includegraphics[width=0.10\linewidth]{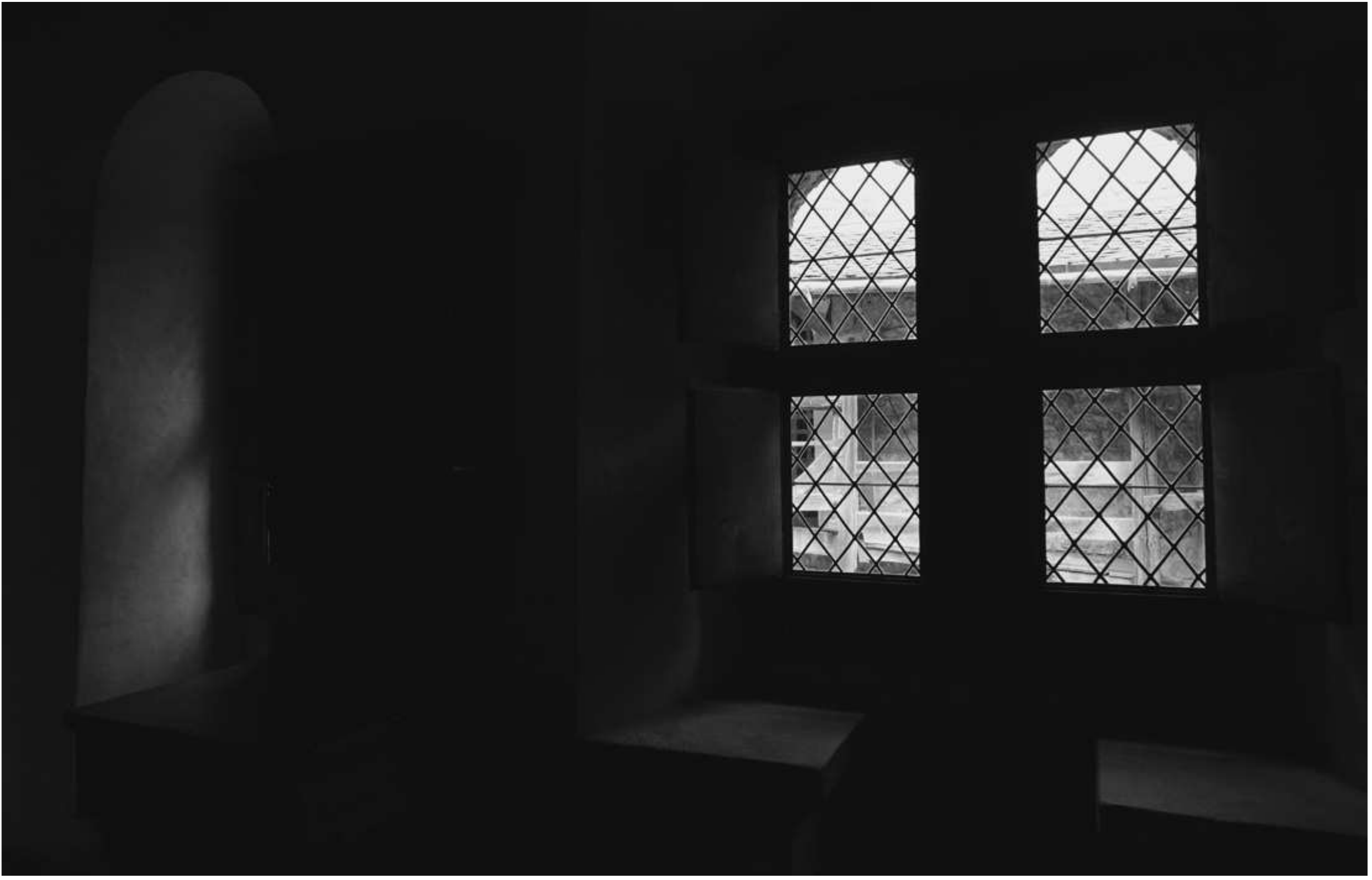}&
\hspace{-2ex}\includegraphics[width=0.10\linewidth]{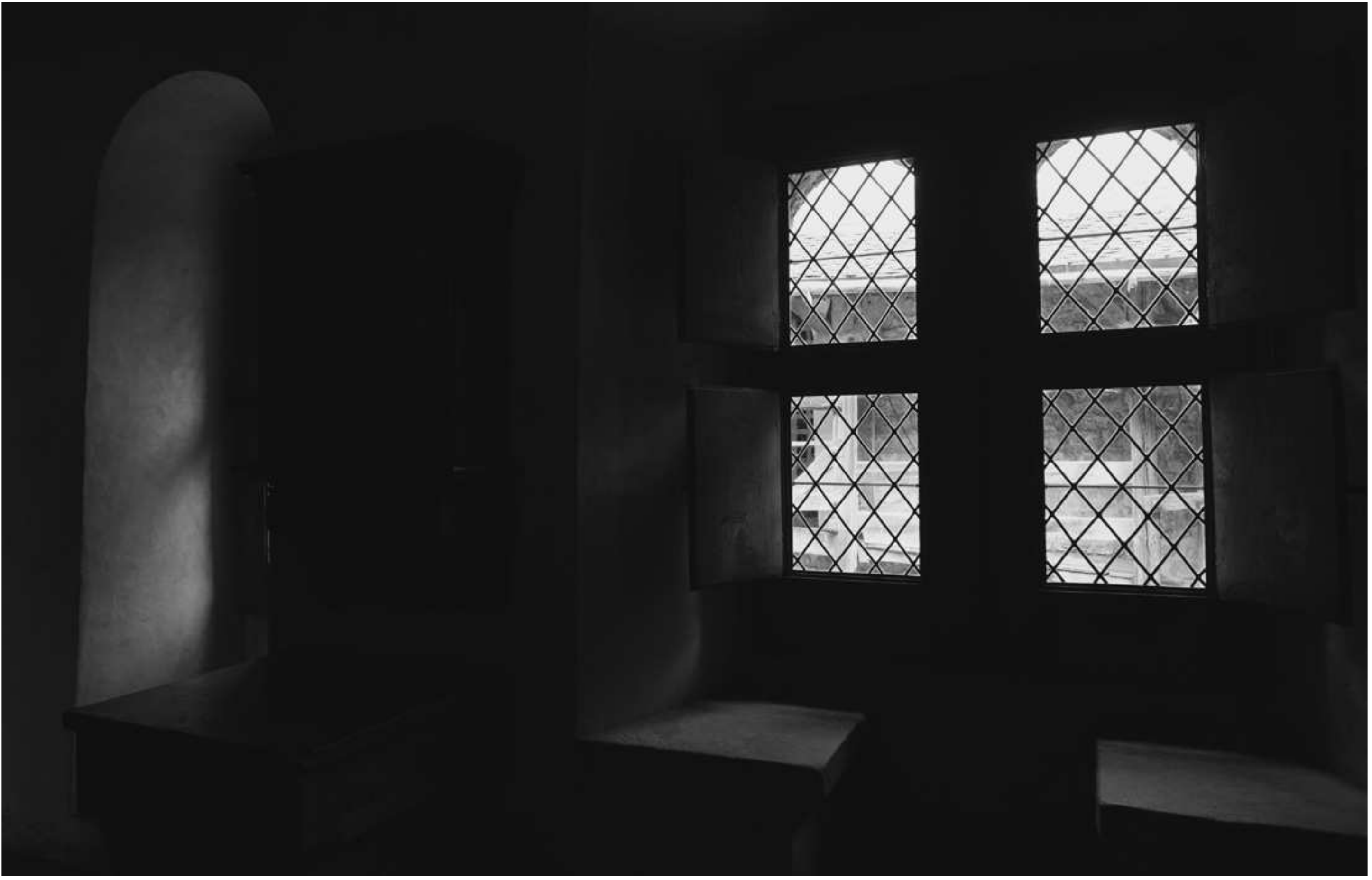}&
\hspace{-2ex}\includegraphics[width=0.10\linewidth]{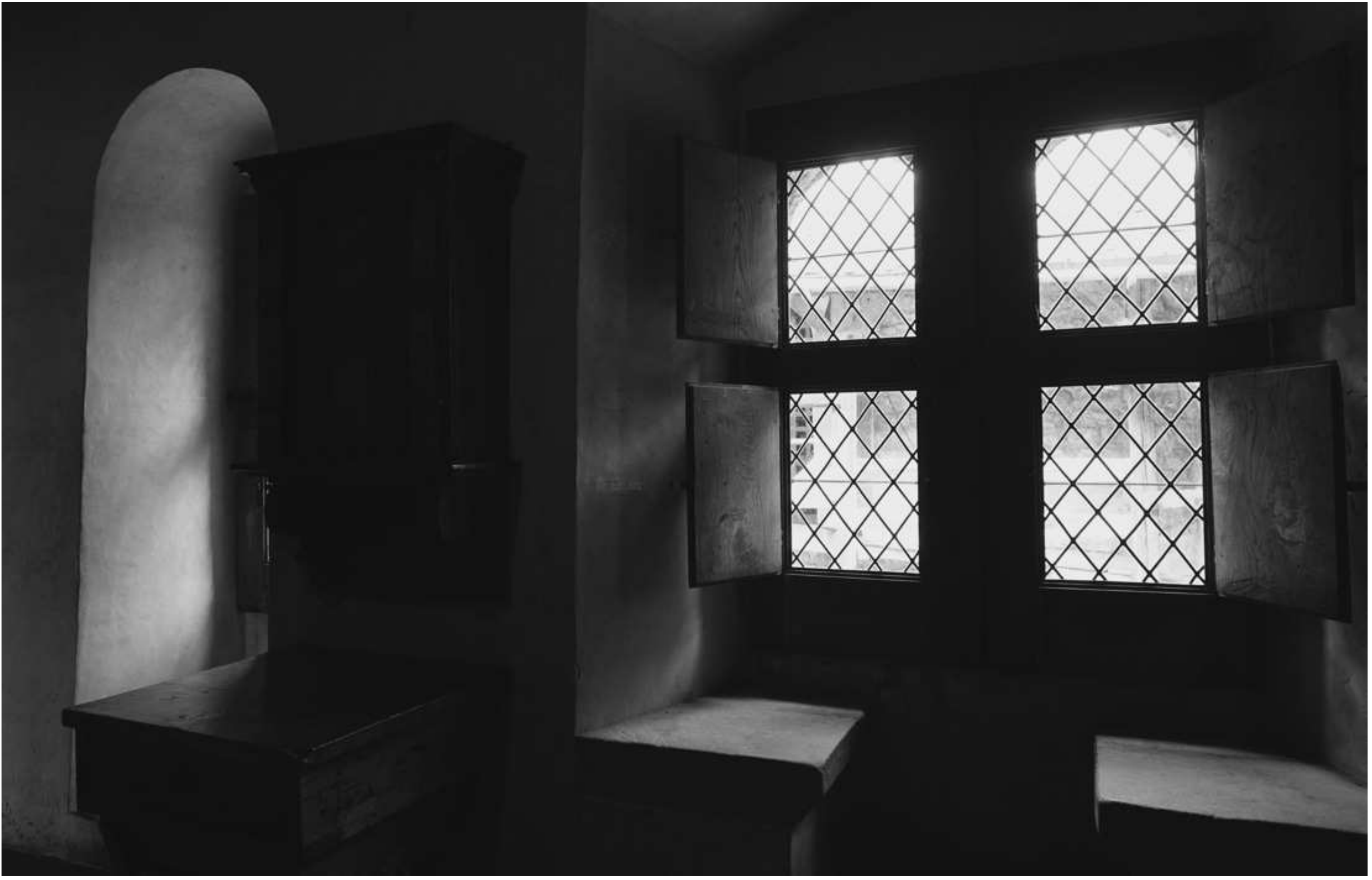}&
\hspace{-2ex}\includegraphics[width=0.10\linewidth]{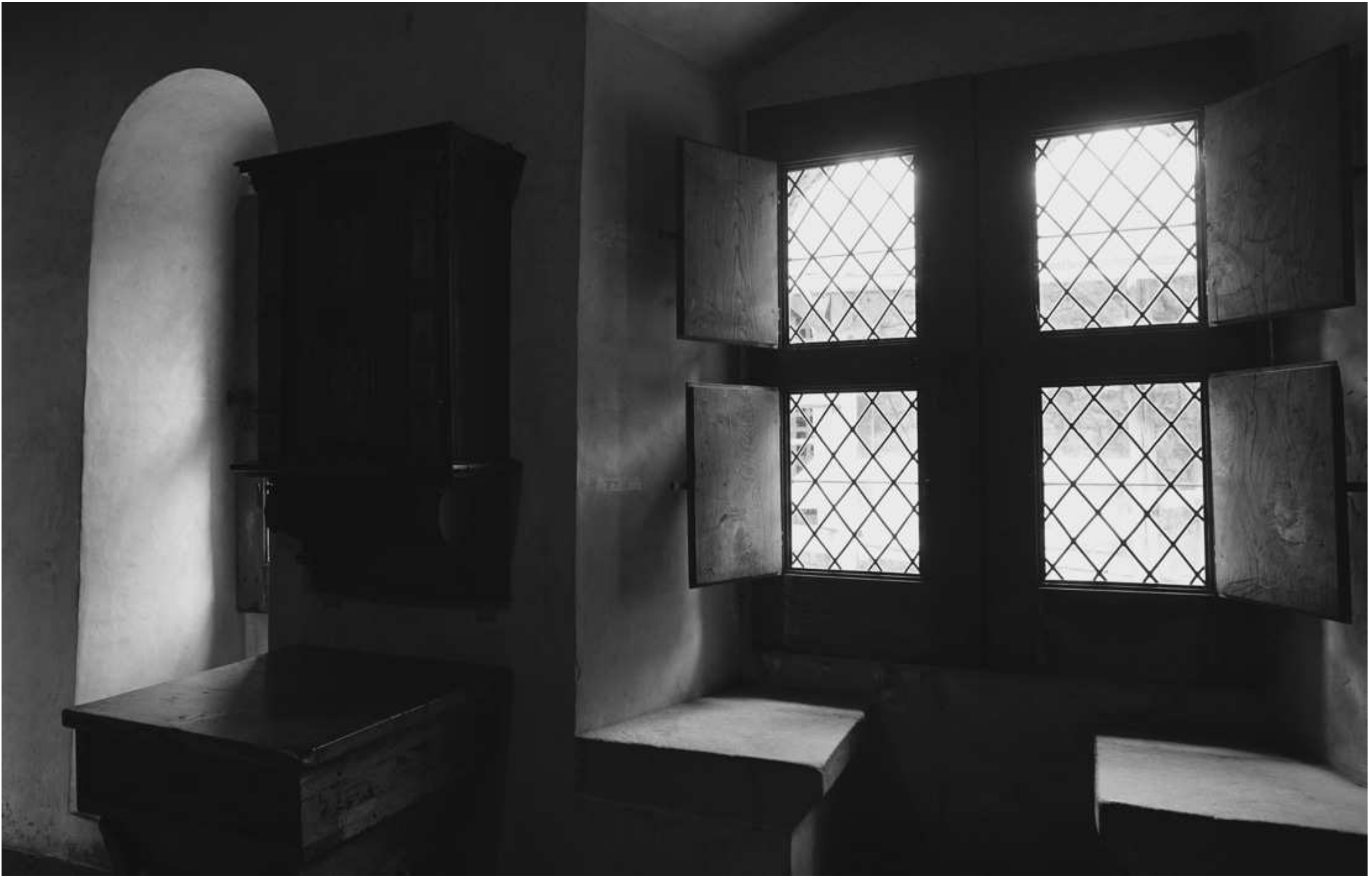}&
\hspace{-2ex}\includegraphics[width=0.10\linewidth]{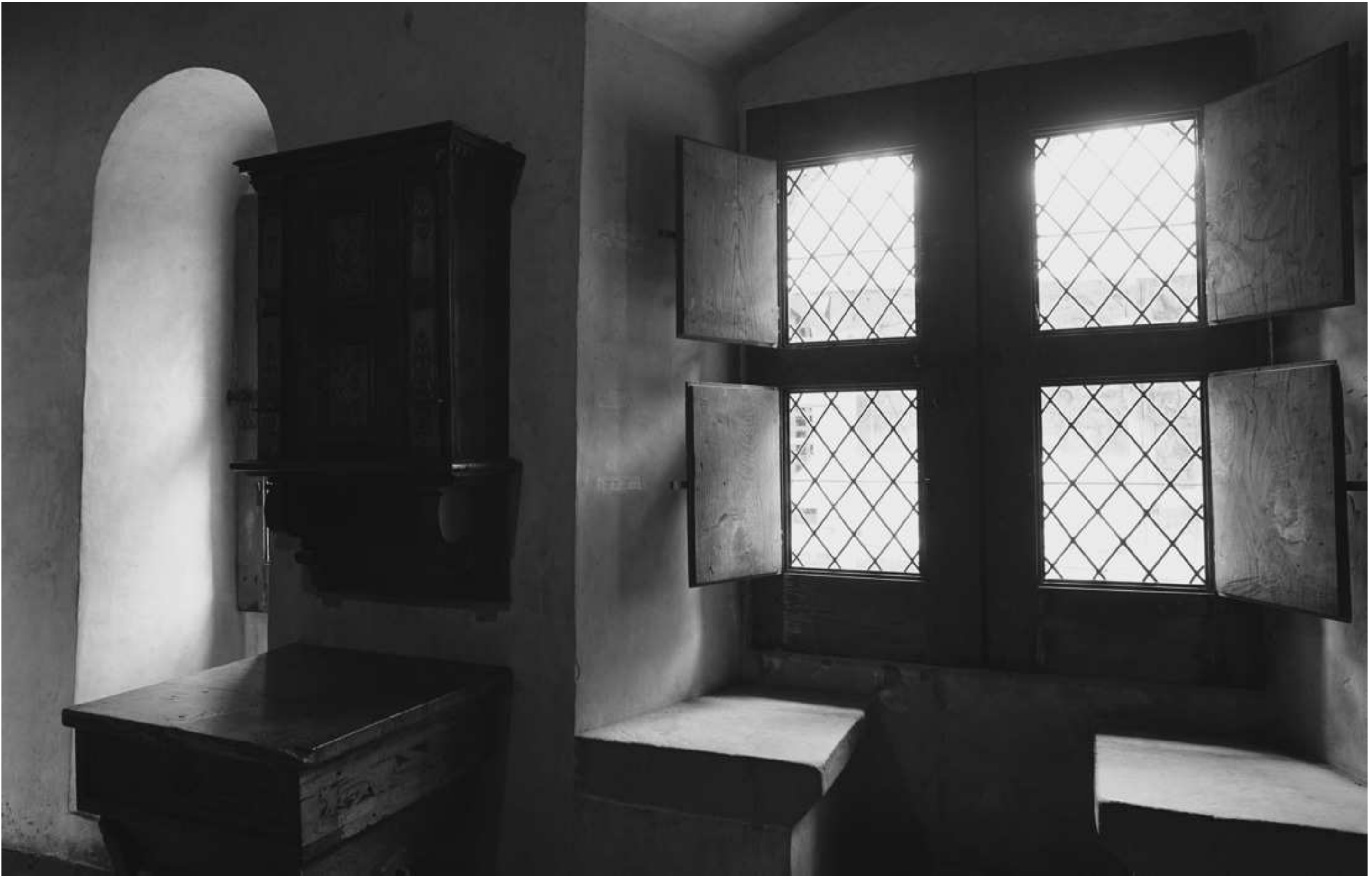}&
\hspace{-2ex}\includegraphics[width=0.10\linewidth]{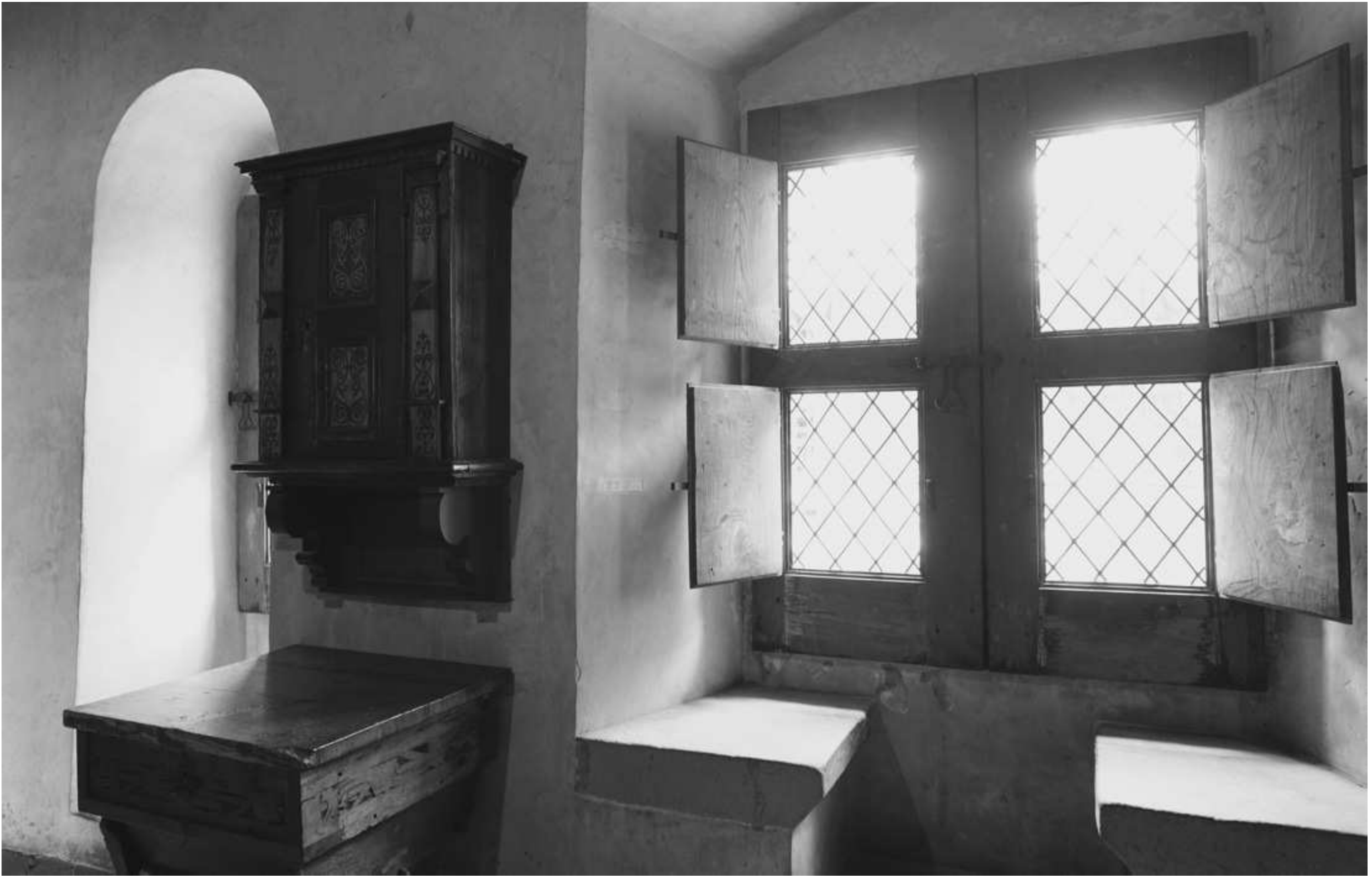}&
\hspace{-2ex}\includegraphics[width=0.10\linewidth]{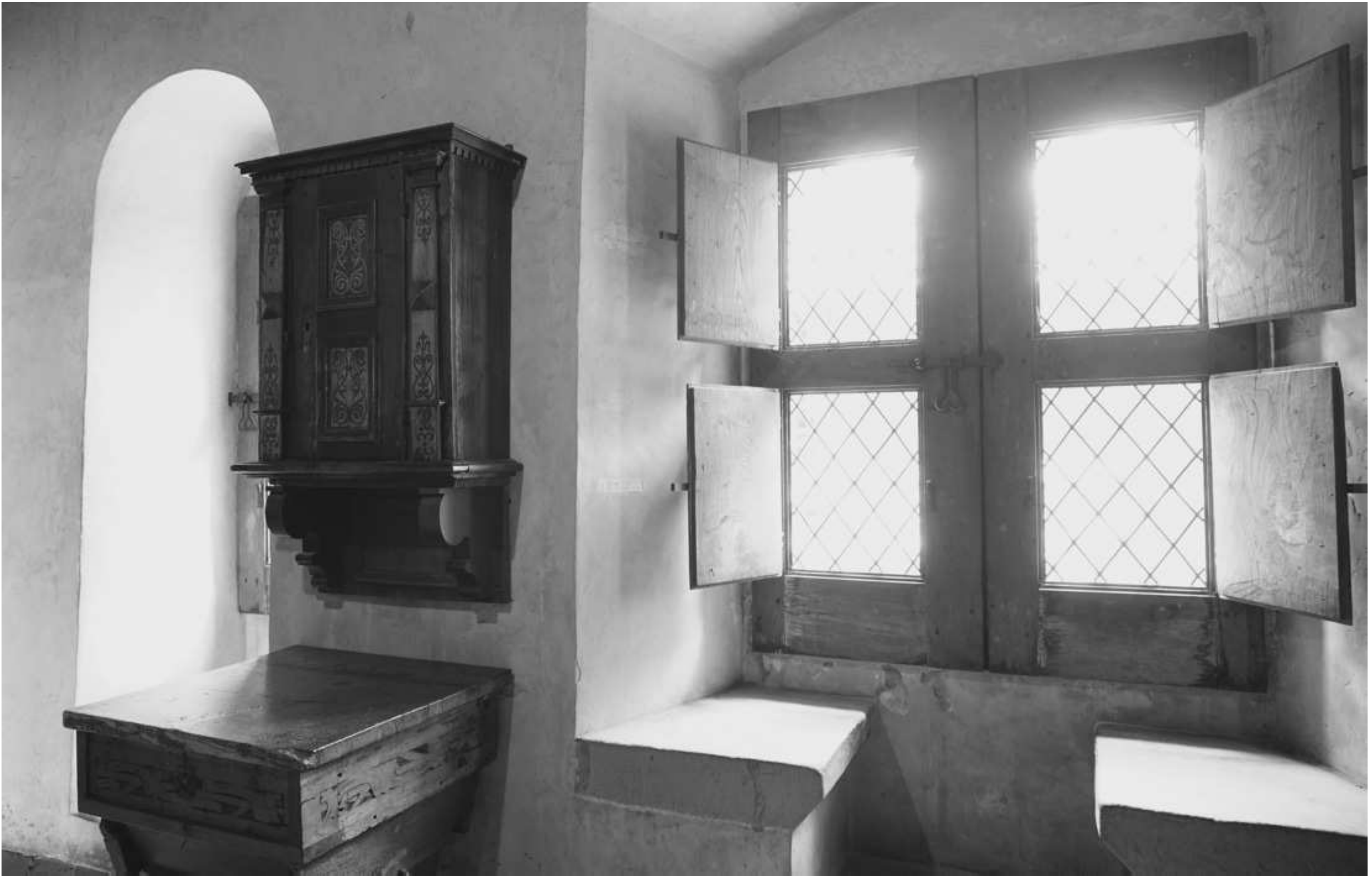}&
\hspace{-2ex}\includegraphics[width=0.10\linewidth]{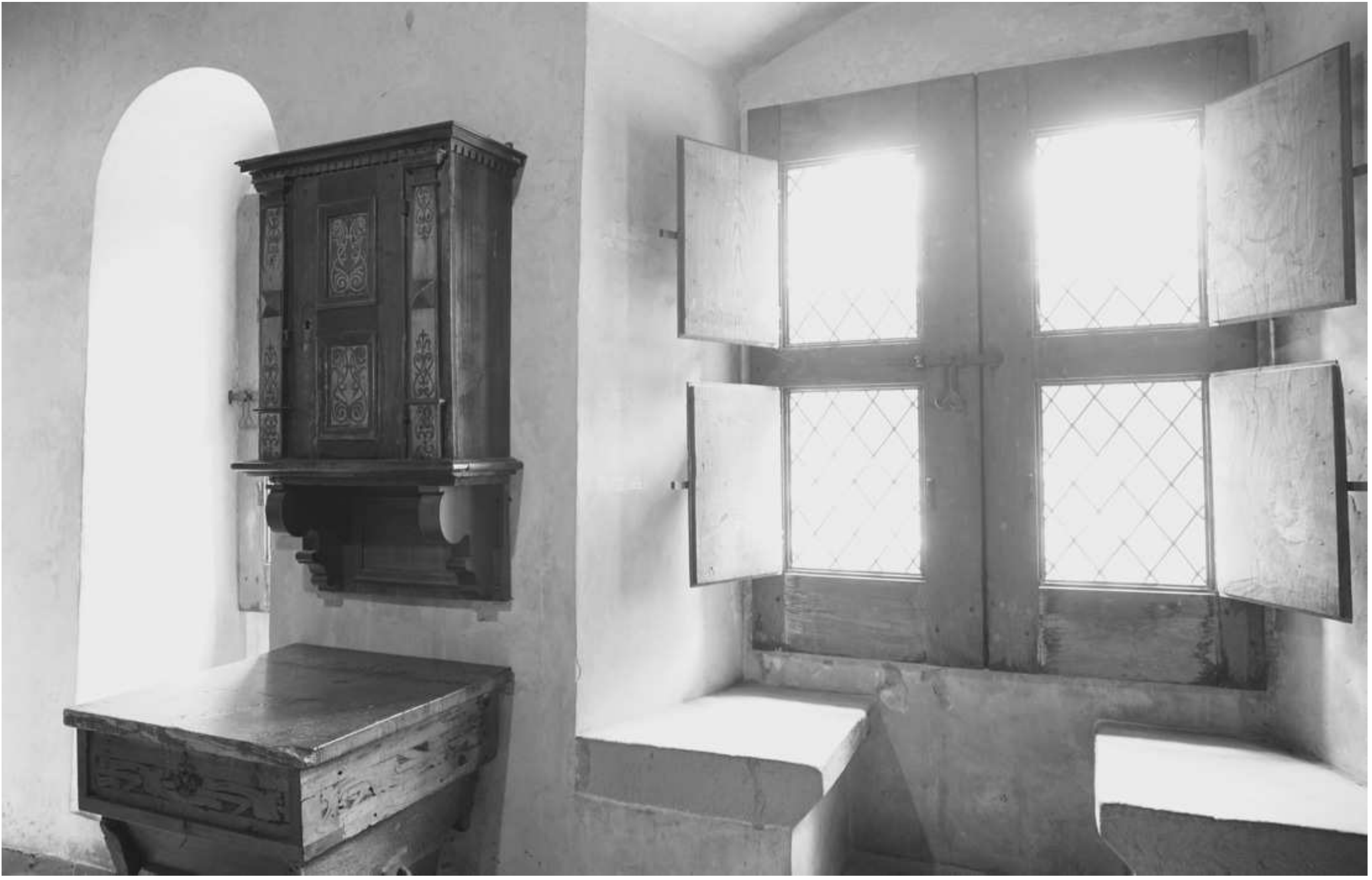}
\end{tabular}
\vspace{-1ex}
\caption{Bracketed images with different exposure settings. From Left to Right: $-2.7$,$-2$, $-1.3$, $-0.7$, $0$, $0.7$, $1.3$, $2$, and $2.7$ EV.}
\label{fig:shots}
\end{figure*}

\begin{figure*}[t]
\centering
\begin{tabular}{ccc}
\includegraphics[width=0.3\linewidth]{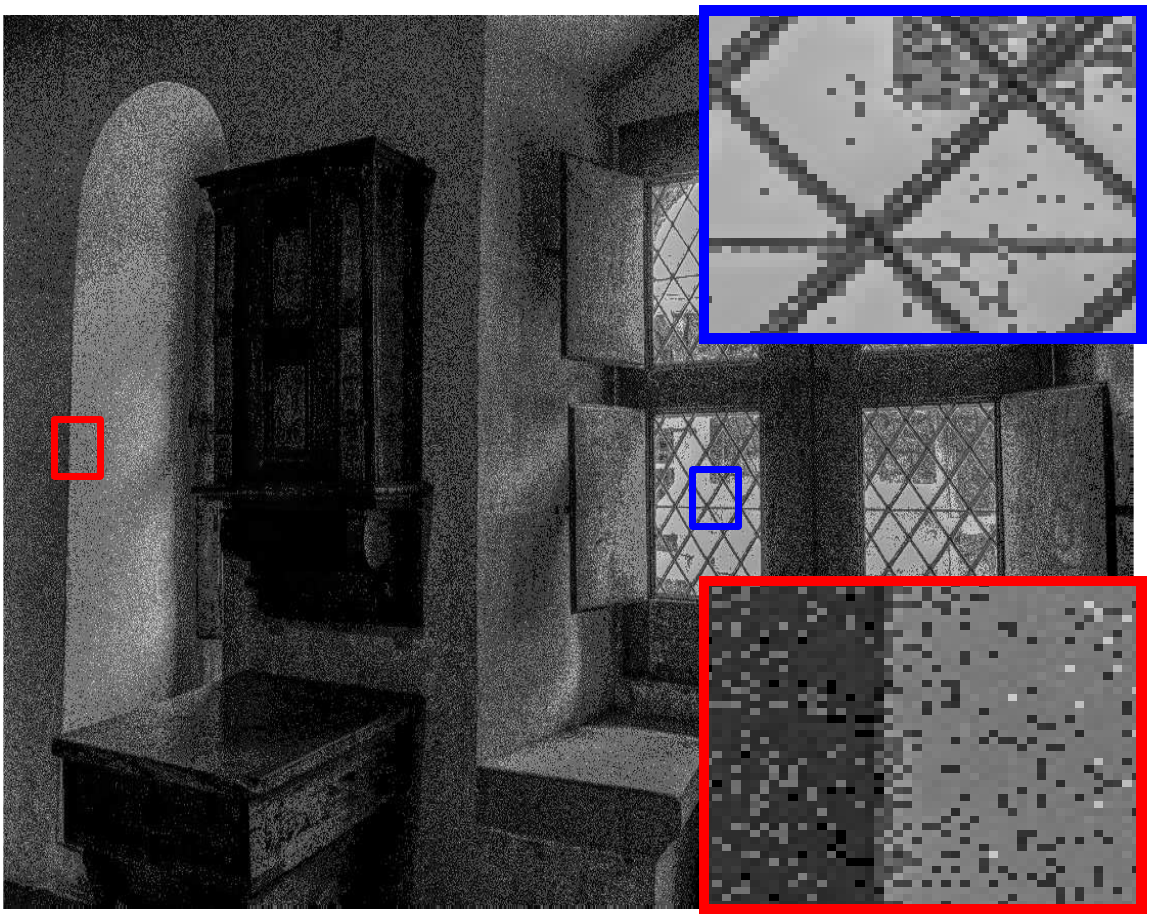}&
\includegraphics[width=0.3\linewidth]{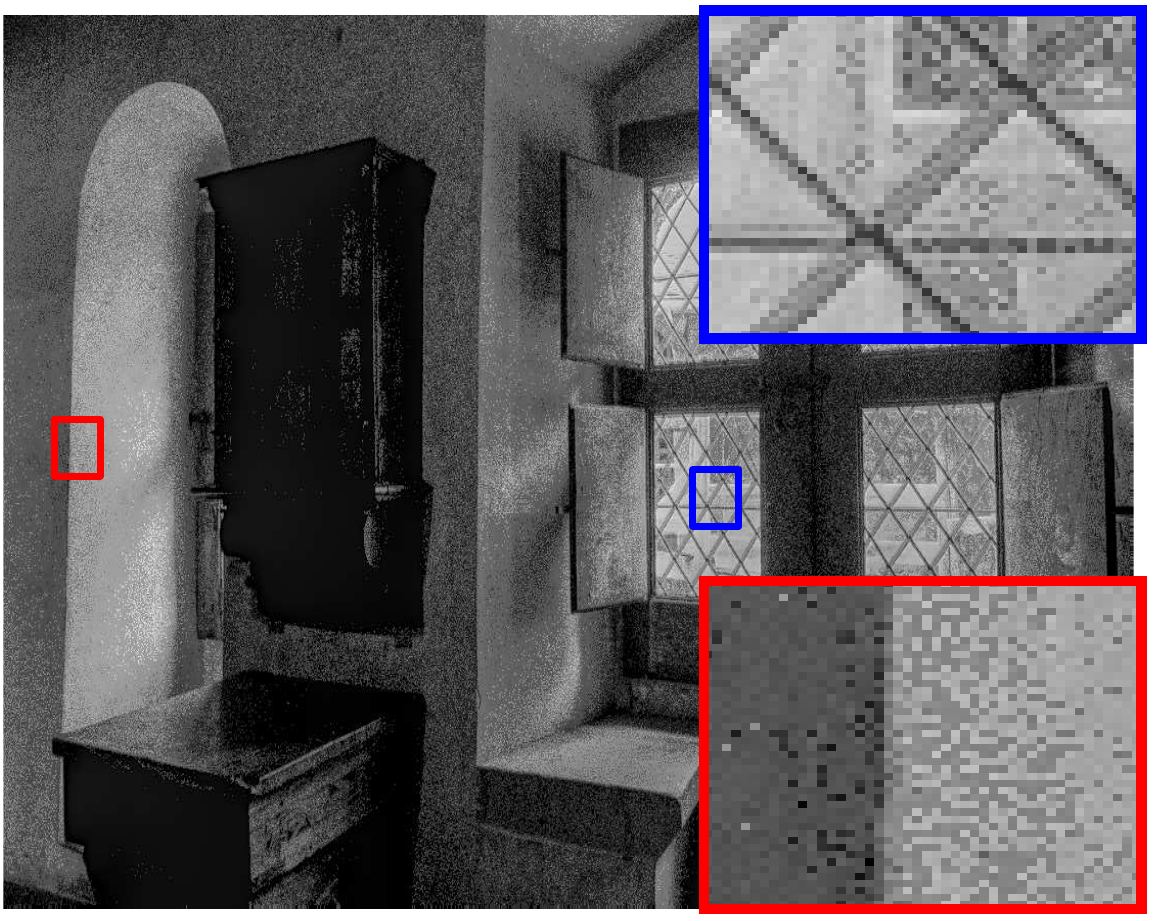}&
\includegraphics[width=0.3\linewidth]{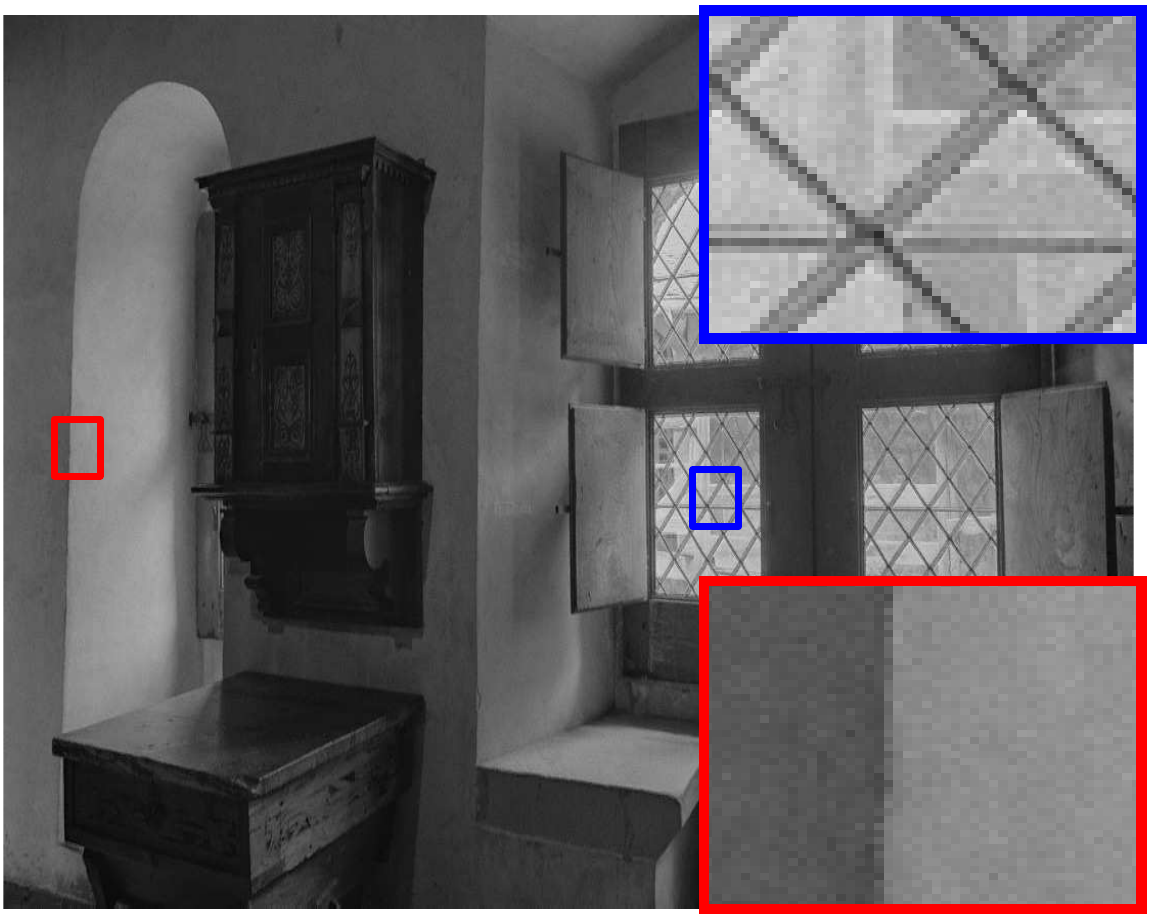}\\
\footnotesize $q = 1$, $\textrm{PSNR}=17.94$ dB  &
\footnotesize $q = 16$, $\textrm{PSNR}=20.77$ dB &
\footnotesize Proposed, $\textrm{PSNR}=31.46$ dB
%\includegraphics[width=0.3\linewidth]{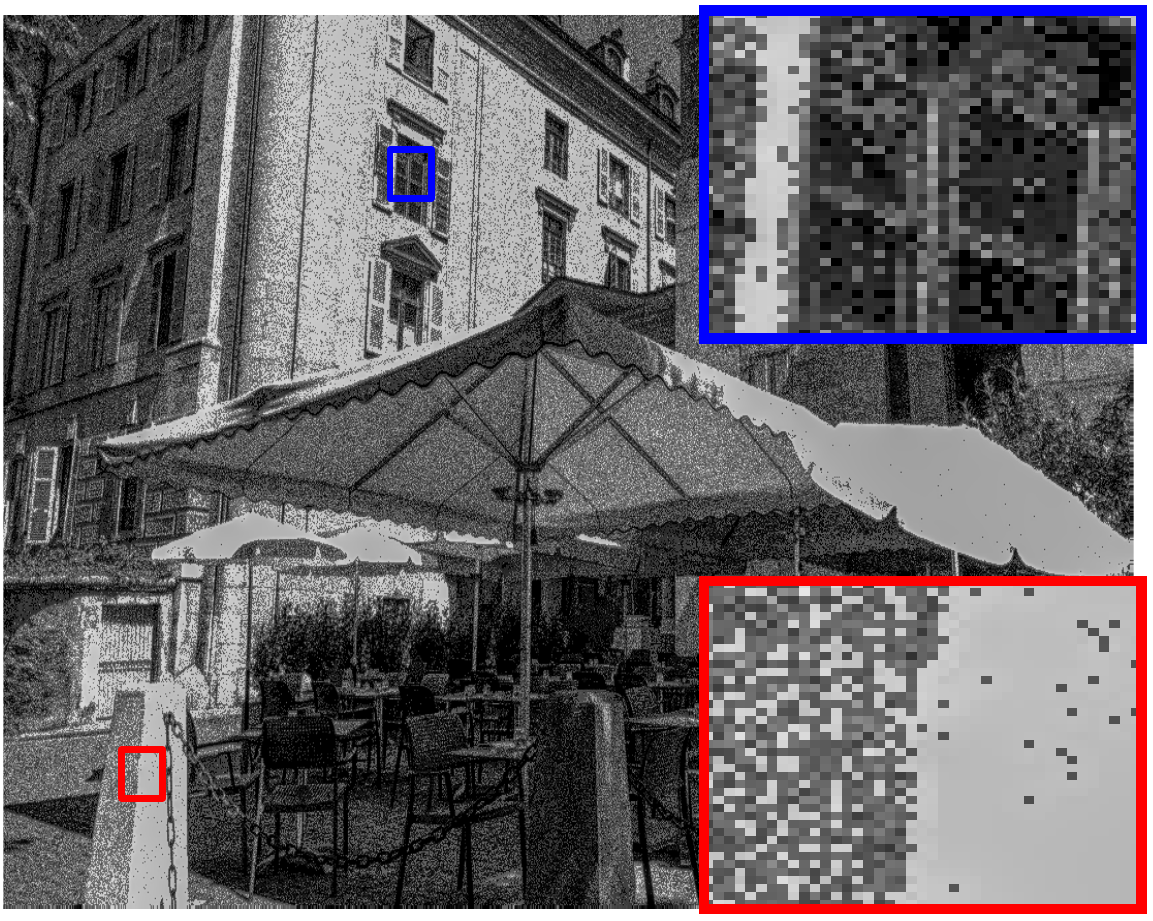}&
%\includegraphics[width=0.3\linewidth]{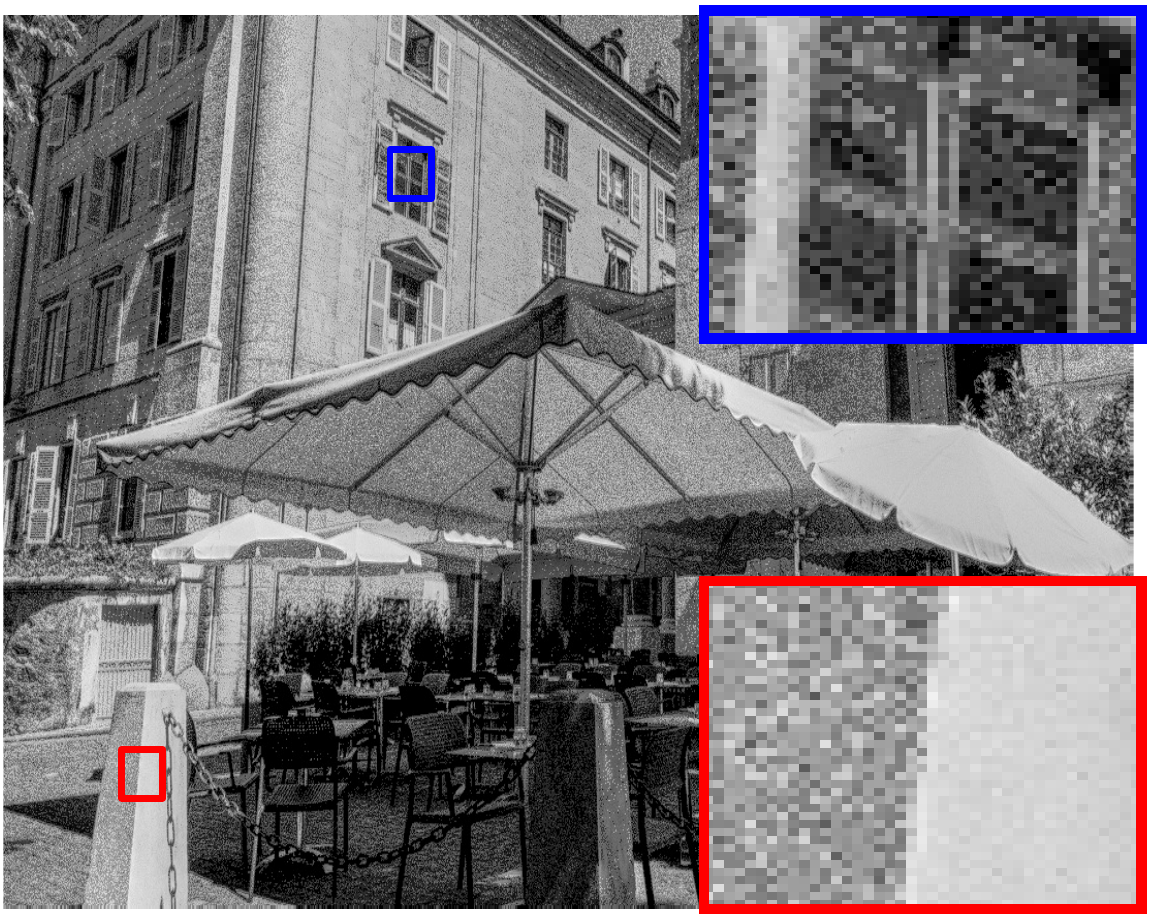}&
%\includegraphics[width=0.3\linewidth]{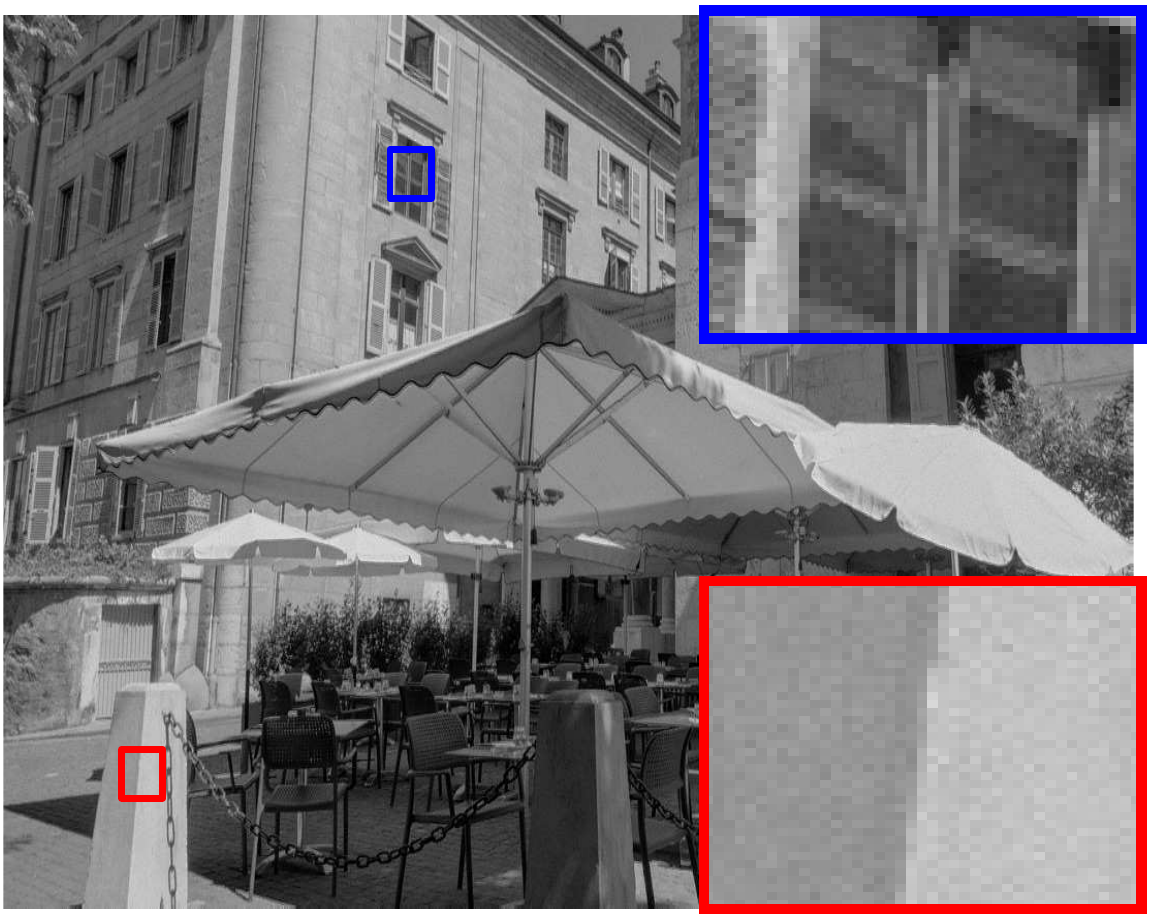}\\
%\footnotesize $q = 1$, $\textrm{PSNR}=15.74$ dB  &
%\footnotesize $q = 16$, $\textrm{PSNR}=20.01$ dB &
%\footnotesize Proposed, $\textrm{PSNR}=31.65$ dB
\end{tabular}
\caption{The reconstructed HDR images using different thresholds. See supplementary material for additional results.}
\label{fig:HDR}
\vspace{-2.0ex}
\end{figure*}

Since QIS does not have sufficient full well capacity to accumulate photons for HDR imaging, we apply the dynamic range extension method discussed in Section~\ref{subsec:HDR}. When different threshold schemes are used, the reconstructed HDR images will be affected. The objective of this experiment is to evaluate the influence of the threshold in HDR imaging.

In this experiment, we consider the HDR-Eye image dataset  \cite{HDREYE,Nemoto_Korshunov_Hanhart_2015}. Each HDR image in this dataset contains 9 images acquired at different exposure settings ($-2.7$,$-2$, $-1.3$, $-0.7$, $0$, $0.7$, $1.3$, $2$, and $2.7$ EV). A snapshot of these images are shown in Figure~\ref{fig:shots}. From each exposure, we simulate the photon counts resulting from the luminance channel. The sensor gain is set as $\alpha = K^2(q_{\max}-1)$ to ensure proper number of photons, where $K = 4 \times 4 = 16$ and $q_{\max} = 16$. On the reconstruction side, we reconstruct the 9 images using the MLE discussed in Section~\ref{subsec:MLE}. \textcolor{black}{Tone mapping and exposure fusion \cite{Mertens_Kautz_Reeth_2007} are applied to the 9 imags to generate an HDR image.} As a reference, we apply the same tone mapping and fusion algorithm to the 9 ground truth images. PSNR between the reference and the estimated is then recorded.

The result of this experiment is shown in \fref{fig:HDR}. With the proposed threshold update scheme, the reconstructed images achieve the highest PSNR value and visual quality. When $q=1$, which is too low, the image appears under-exposed. When $q=16$, which is too high, the image appears over-exposed. The spatially varying property of the proposed method mitigates the issue by allowing multiple thresholds.

\textcolor{black}{In practice, one would typically add image denoisers to handle the randomness in the ML estimate and potentially other types of noise. This can be done using methods such as \cite{Chan_Elgendy_Wang_2016}. In HDR literature, there are also optical approaches that reduce the number of exposures, e.g., \cite{Nayar_Mitsunaga_2000, Aguerrebere_Almansa_Delon_2017}. These techniques are complementary to QIS, because QIS is a sensor of similar functionality of a CMOS. Thus optical techniques can always be added.}

%------------------------------------------------------------------------------------------------------------------
%			Conclusion
%------------------------------------------------------------------------------------------------------------------
\section{Conclusion}
\label{sec:conc}
{\color{black}Quanta Image Sensor is a new image sensor for high speed, high resolution and high dynamic range imaging. The sensor has a threshold which needs to be carefully adjusted so that the dynamic range can be maximized. We studied the threshold design problem by establishing several theoretical results. First, we showed that an oracle threshold can be obtained assuming that we know the underlying pixel value. Our result showed that the oracle threshold must match with the pixel value in order to maximize the signal-to-noise ratio. Second, we showed that around the oracle threshold, there exists a set of thresholds that can produce asymptotically unbiased estimates of the pixel value. Within this set of threshold, the signal-to-noise ratio stays very close to the oracle case. Third, we developed a bisection method to update the threshold scheme. We also discussed how QIS can be used in HDR imaging, and its advantages compared to standard sensors. Experimental results showed the effectiveness of our proposed approach compared to the standard approach that uses uniform threshold for all pixels.}

\section*{Acknowledgment}
The authors thank Professor Eric Fossum, Jiaju Ma and Saleh Masoodian at Dartmouth College for many insightful discussions about the physics and circuits of QIS.

%
%
%------------------------------------------------------------------------------------------------------------------
%			Appendices
%------------------------------------------------------------------------------------------------------------------

\appendices
%------------------------------------------------------------------------------------------------------------------
%			Proofs
%------------------------------------------------------------------------------------------------------------------
\section{} \label{apx:proofs}
\subsection{Proof of Proposition \ref{prop:Fisher}}
\label{proof:Fisher}
The Fisher Information metric is defined as:
\begin{align}
I_q(c) \bydef \E_{B}\left[\frac{-\partial^2}{\partial c^2} \mbox{log} \; \Pb(B=b;\theta,q)\right],
\end{align}
where $\theta=\alpha c / K$. Using the chain rule, we can derive the Fisher Information as follows
\begin{align}
I_q(c) = \left(\frac{\alpha}{K}\right)^2 \E_{B}\left[\frac{-\partial^2}{\partial \theta^2} \mbox{log} \; \Pb(B=b;\theta,q)\right].
\end{align}

\noindent The expectation can be calculated as follows
\begin{align} \label{eq:Fisher}
I_q(c)&=\left(\frac{\alpha }{K}\right)^2\left[\frac{-\partial^2}{\partial \theta^2} \mbox{log} \; \Pb(B=1;\theta,q)\right]\Pb(B=1;\theta,q)\nonumber\\
&+\left(\frac{\alpha }{K}\right)^2\left[\frac{-\partial^2}{\partial \theta^2} \mbox{log} \; \Pb(B=0;\theta,q)\right]\Pb(B=0;\theta,q)
\end{align}
Using (\ref{eq:diff}) to differentiate the 1st term, we get:
\begin{align} \label{eq:partial1}
&\frac{\partial^2}{\partial \theta^2} \mbox{log} \; \Pb(B=1;\theta,q)=\frac{\partial^2}{\partial \theta^2} \mbox{log}\left(1-\Psi_q(\theta)\right) \nonumber \\
&=\frac{R'(1-\Psi_q(\theta))-R^2/\Gamma(q)}{\Gamma(q)\left(1-\Psi_q(\theta)\right)^2},
\end{align}
where $R=e^{-\theta}\theta^{q-1}$ and $R'=\partial R/\partial \theta$. Similarly, the second term is
\begin{equation}
\begin{split} \label{eq:partial0}
&\frac{\partial^2}{\partial \theta^2} \mbox{log} \; \Pb(B=0;\theta,q)=\frac{\partial^2}{\partial \theta^2} \mbox{log} \; \Psi_q(\theta)\\
&=-\frac{R'\Psi_q(\theta)+R^2/\Gamma(q)}{\Gamma(q)\left(\Psi_q(\theta)\right)^2}.
\end{split}
\end{equation}
Substitute \eqref{eq:partial1} and \eqref{eq:partial0} in \eqref{eq:Fisher} yields
\begin{equation}
\begin{split}
I_q(\theta)
&=\left(\frac{\alpha}{K}\right)^2\Big[-\frac{R'\Gamma(q)(1-\Psi_q(\theta))-R^2}{\Gamma^2(q)\left(1-\Psi_q(\theta)\right)}\nonumber\\
&+\frac{R'\Gamma(q)\Psi_q(\theta)+R^2}{\Gamma^2(q)\Psi_q(\theta)}\Big]\nonumber\\
&=\left(\frac{\alpha}{K}\right)^2\frac{e^{-2\theta}\theta^{2q-2}}{\Gamma^2(q)\Psi_q(\theta)\left(1-\Psi_q(\theta)\right)}.
\end{split}
\end{equation}

\subsection{Proof of Proposition~\ref{prop:lowerbound}}
\label{proof:lowerbound}
The lower bound is obtained by observing that the product $\Psi_q(\theta)\left(1-\Psi_q(\theta)\right)$ attains its maximum value when ${\Psi_q(\theta)=1/2}$. Substituting with the upper bound ${\Psi_q(\theta)\left(1-\Psi_q(\theta)\right)\leq1/4}$, we get:
\begin{align*}
\log(c^2 I_q(c)) &= \log\left\{\left(\frac{\alpha c}{K}\right)^2\frac{e^{-2\theta}\theta^{2q-2}}{\Gamma^2(q)\Psi_q(\theta)\left(1-\Psi_q(\theta)\right)}\right\}\\
&= \log\frac{e^{-2\theta}\theta^{2q}}{\Gamma^2(q)\Psi_q(\theta)\left(1-\Psi_q(\theta)\right)}  \\
& \ge  \log\frac{4e^{-2\theta}\theta^{2q}}{\Gamma^2(q)} \\
&=2\log 2-2\theta + 2q \log \theta - 2\log \Gamma(q) \\
&=2\left(\log 2-\frac{\alpha c}{K} + q \log \frac{\alpha c}{K} - \log \Gamma(q)\right).
\end{align*}

\subsection{Proof of Proposition~\ref{prop:optimal q}}
\label{proof:oracle q}
Using the definition of Gamma function $\Gamma(q)=(q-1)!$ and $\theta=\frac{\alpha c}{K}$, we can rewrite the lower bound in Proposition~\ref{prop:lowerbound} as follows.
\begin{align*}
L_q(c) &= 2\left(\log 2-\theta + q \log \theta - \log (q-1)!\right)\\
&= 2\left(\log 2-\theta + (q-1) \log \theta +  \log \theta - \log \prod_{k=1}^{q-1} k\right) \\
&= 2\left(\log 2-\theta + \sum_{k=1}^{q-1} \log (\theta/ k) + \log \theta\right)
\end{align*}
The only dependence on $q$ is in the second term, so we take a closer look at it. When $q-1<\lfloor\theta\rfloor$, all summands ${\log(\theta/k)}$ are positive because $k<\lfloor\theta\rfloor$. Hence, the total sum increases by increasing $q$. On the other hand, when ${q-1>\lfloor\theta\rfloor}$, we start to add negative summands ${\log(\theta/k)}$ because $k>\theta$. Therefore, the total sum decreases on increasing $q-1$ over ${\lfloor\theta\rfloor}$. Thus, maximum is obtained at $q = \lfloor \theta \rfloor+1=\lfloor \frac{\alpha c}{K} \rfloor +1$.

\subsection{Proof of Proposition \ref{prop:mean and var of gamma}}
\label{proof:gamma mean}
By definition, $S \bydef \sum_{t=0}^{T-1}\sum_{k=0}^{K-1} B_{k,t}$ is the summation of $KT$ independent i.i.d. Bernoulli random variables. Therefore, $S$ is a binomial random variable with parameters $n \bydef KT$ and $p \bydef 1-\Psi(\alpha c/K)$. The mean and variance of a binomial random variable is $\E[S]=np$, and $\Var[S]=np(1-p)$. Therefore, we have
\begin{align*}
&\E\left[\gamma_q(c)\right]=1-\frac{\E[S]}{KT}=\Psi_q\left(\frac{\alpha c}{K}\right), \quad \textrm{and}\\
&\Var\left[\gamma_q(c)\right]=\frac{\Var\left[S\right]}{K^2T^2}=\frac{1}{KT} \Psi_q\left(\frac{\alpha c}{K}\right)\left(1-\Psi_q\left(\frac{\alpha c}{K}\right)\right).
\end{align*}

\subsection{Proof of Proposition \ref{prop:delta}}
\label{proof:delta}
The probability $\Pb[0 < \gamma_q(c) < 1]$ can be evaluated by checking the complement when $\gamma_q(c) = 0$ or $\gamma_q(c) = 1$:
\begin{align*}
\Pb[0 < \gamma_q(c) < 1]
&= 1-\Pb[\gamma_q(c) = 0]-\Pb[\gamma_q(c) = 1]\\
&= 1-\Pb[S = 0]-\Pb[S = KT]\\
&\overset{(a)}{=} 1-\Psi_q(\theta)^{KT}-[1-\Psi_q(\theta)]^{KT},
\end{align*}
where (a) follows from the fact that $S$, which is a sum of i.i.d. Bernoulli random variables, is a binomial random variable.

Let $0 < \delta < 1$. If
$$
1-\left(\frac{\delta}{2}\right)^{\frac{1}{KT}} \le \Psi_q(\theta) \le \left(\frac{\delta}{2}\right)^{\frac{1}{KT}},
$$
then we have
\begin{align*}
\Psi_q(\theta)^{KT} < \frac{\delta}{2} \;\; \mbox{and} \;\; [1-\Psi_q(\theta)]^{KT} < \frac{\delta}{2}.
\end{align*}
Thus, it holds that
\begin{align*}
1-\Psi_q(\theta)^{KT}-[1-\Psi_q(\theta)]^{KT}>1-\delta.
\end{align*}

%\subsection{Proof of Proposition \ref{prop:gamma}}
%Knowing that $\gamma_q(c)=S/KT$, we can use the probability mass function of $S$ to derive the probability mass function of $\gamma_q(c)$ at a realization $\gamma=s/KT$ as follows.
%\begin{align*}
%&\Pb(\gamma_q(c)=\gamma)=\Pb(S=s)\nonumber \\
%&=\binom{KT}{s} \left[1-\Psi_q(\theta)\right]^{s} \Psi_q(\theta)^{KT-s} \\
%&=\binom{KT}{\gamma KT} \Psi_q(\theta)^{KT} \left[\frac{1-\Psi_q(\theta)}{\Psi_q(\theta)}\right]^{\gamma KT}
%\end{align*}
%By using \eref{eq:gammadef}, we can derive the expectation as follows
%\begin{align*}
%&\E\left[\gamma_q(c)\right]=\frac{1}{KT}\E[S]=1-\Psi_q(\theta)
%\end{align*}
%and we can derive its variance as follows
%\begin{align*}
%&\Var\left[\gamma_q(c)\right]=\frac{1}{K^2T^2} \Var\left[S\right]=\frac{1}{KT} \Psi_q(\theta)\left(1-\Psi_q(\theta)\right).
%\end{align*}

%\balance
\bibliographystyle{IEEEbib}
\bibliography{refs}

\end{document}

% --- supplement: 2Supplementary.tex ---

\maketitle

\begin{abstract}
This supplementary report provides the following additional information of the main article.
\begin{itemize}
\item Derivation of $\mathrm{SNR}_q(c)$ from exposure-referred SNR,
\item Properties of the incomplete Gamma function,
\item Comparison with the threshold design scheme by Yang \cite{Yang_2012},
\item Phase transition under different configurations,
\item Influence of Non-Boxcar Kernel $\mathbf{G}$, and
\item Additional results for HDR image reconstruction.
\end{itemize}
\end{abstract}

\section{Derivation of $\mathrm{SNR}_q(c)$ from exposure-referred SNR}

In the literature of QIS devices, one metric to quantify the image quality is the \emph{exposure}-referred signal-to-noise \cite{Fossum_2013}. In image processing, however, exposure-referred SNR is not commonly used. The goal of this section is to show that the SNR we showed in the main article is equivalent to the exposure-referred SNR.

\begin{figure}[h]
\centering
\includegraphics[width=0.45\textwidth]{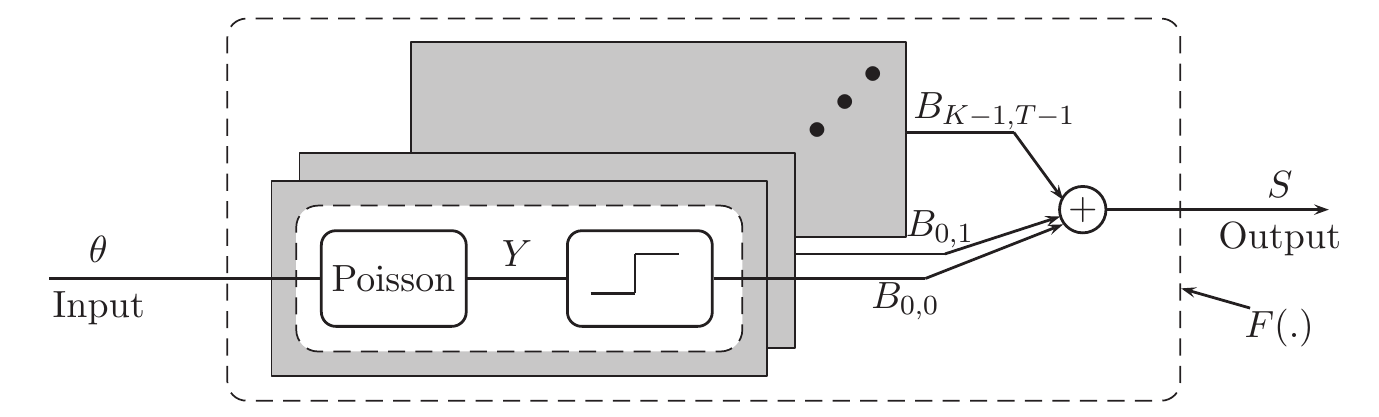}
\caption{Block diagram illustrating a QIS with input-output relation $\textrm{output}=F(\textrm{input})$}
\label{fig:QISsystem}
\end{figure}

To understand the exposure-referred SNR, we have to first understand two common ways of defining a signal to noise ratio. Consider the truncated Poisson part of the QIS model shown in \fref{fig:QISsystem}. The input to this model is the over-sampled measurement $\theta$. The truncated Poisson process can be considered as a black box function $F$ which takes an input $\theta$ and generates an output $S$, defined as
\begin{equation}
S=\sum_{t=0}^{T-1} \sum_{k=0}^{K-1} B_{k,t},
\end{equation}
where $\calB_n = \{B_{k,t} \;|\; k = 0,1, \ldots,K-1,\; t= 0,1,\ldots,T-1\}$ is the spatial-temporal block containing all binary bits corresponding to $\theta$. As shown in the main article, the mean and variance of $S$ are
\begin{equation}\label{eq:statS}
\E[S] = KT(1-\Psi_q(\theta)), \quad \Var[S] = KT \Psi_q(\theta)(1-\Psi_q(\theta)),
\end{equation}
respectively.

The first notion of signal-to-noise, which is the one used in CCD and CMOS, is called the output-referred SNR. $\textrm{SNR}_{\textrm{OR}}$ is defined as the ratio between the output signal and the photon shot noise. Referring to \fref{fig:QISsystem}, this is
\begin{equation}
\textrm{SNR}_{\textrm{OR}} = \frac{\mbox{output signal}}{\mbox{noise}} = \frac{\E[S]}{\sqrt{\Var[S]}} = \sqrt{KT\frac{1-\Psi_q(\theta)}{\Psi_q(\theta)}}.
\end{equation}
However, $\textrm{SNR}_{\textrm{OR}}$ fails to work for QIS because the shot noise is arbitrarily small if all bits are 1 or 0. In \cite{Fossum_2013}, Fossum called it squeezing of the noise. If we plot $\textrm{SNR}_{\textrm{OR}}$ as a function of $\theta$, then we observe that $\textrm{SNR}_{\textrm{OR}}$ approaches to infinity as $\theta$ grows.

The second notion of signal-to-noise, which is a modification of $\textrm{SNR}_{\textrm{OR}}$, is the exposure-referred SNR. $\textrm{SNR}_{\textrm{ER}}$ is the ratio between the exposure signal $\theta$ and the exposure-referred noise. This noise is defined as \cite{Fossum_2013}:
\begin{equation*}
  \textrm{Exposure-referred noise} = \frac{d\theta}{d\E[S]}\sqrt{\Var[S]}
\end{equation*}
The factor $\frac{d\theta}{d\E[S]}$ can be considered as the ``inverse" transfer function from the output to the input. $\frac{d\theta}{d\E[S]}$ can be determined by taking derivative of the expectation in (\ref{eq:statS}) with respect to $\E[S]$
\begin{equation*}
  \frac{d\E[S]}{d\E[S]}=\frac{d KT\left(1-\Psi_q(\theta)\right)}{d\E[S]}
\end{equation*}
Using chain rule, we observe that
\begin{equation*}
  1=-KT\frac{d}{d\theta}\Psi_q(\theta) \frac{d\theta}{d\E[S]}
\end{equation*}
Since $\frac{d}{d\theta}\Psi_q(\theta)=\frac{-e^{-\theta}\theta^{q-1}}{\Gamma(q)}$, it holds that
\begin{equation*}
  1=-KT\left(\frac{-e^{-\theta}\theta^{q-1}}{\Gamma(q)}\right) \frac{d\theta}{d\E[S]}
\end{equation*}
Hence,
\begin{equation*}
  \frac{d\theta}{d\E[S]} = \frac{\Gamma(q)}{KTe^{-\theta}\theta^{q-1}}
\end{equation*}
The exposure-referred SNR is defined as
\begin{align*}
  \textrm{SNR}_{\textrm{ER}} &=  \frac{\mbox{exposure signal}}{\mbox{exposure-referred noise}} \\
  &= \frac{\theta}{\sqrt{\Var[S]}\frac{d\theta}{d\E[S]}}\\
  &= \frac{e^{-\theta} \theta^q}{\Gamma(q)}\sqrt{\frac{KT}{\Psi_q(\theta)\big(1-\Psi_q(\theta)\big)}}.
\end{align*}
Taking logarithm shows that $\mathrm{SNR}_{\mathrm{ER}}$ is identical to the SNR derived from the Fisher Information shown in the main article.

\begin{figure}[t]
\centering
\begin{tabular}{cc}
\includegraphics[width=0.40\textwidth]{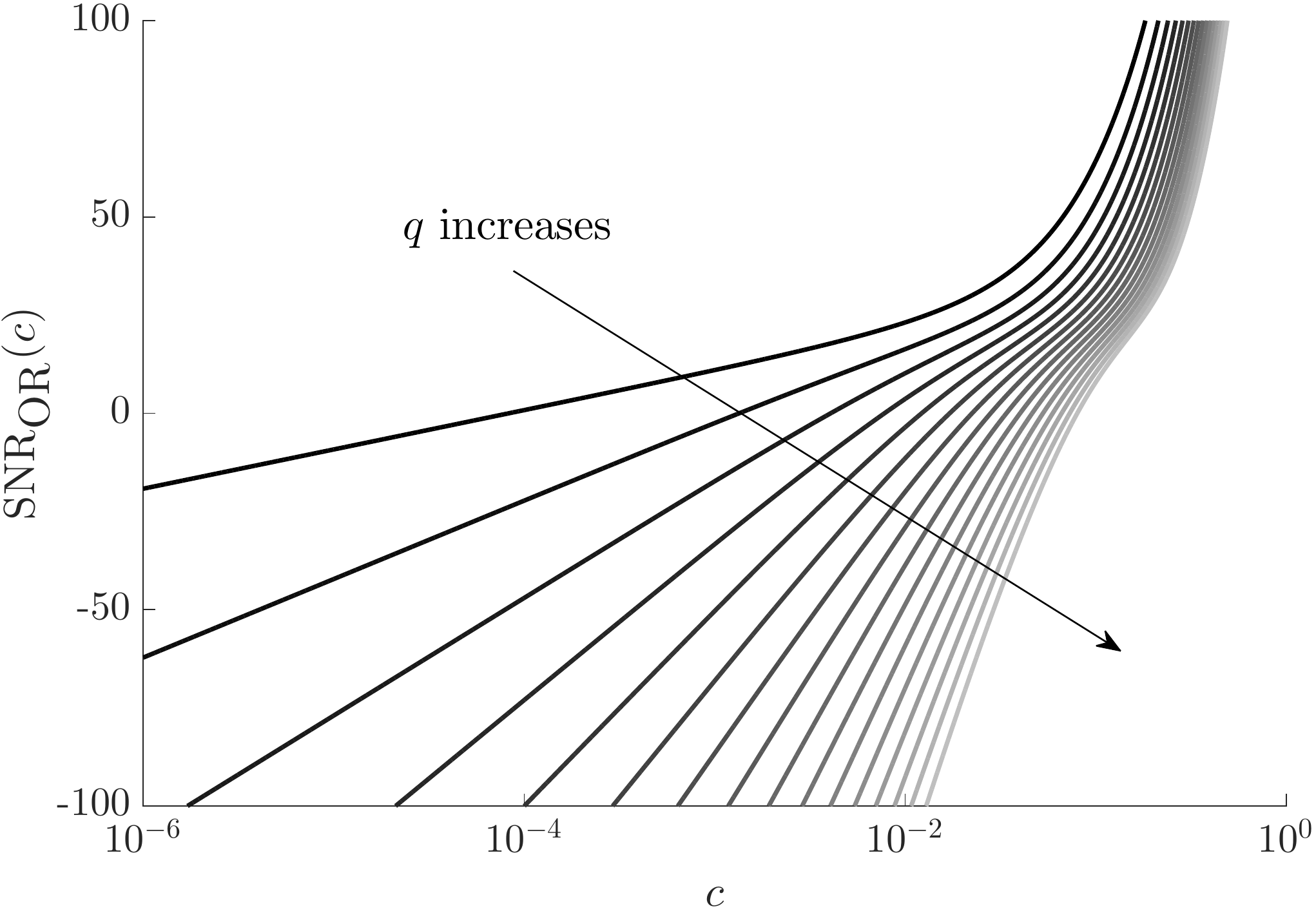}&
\includegraphics[width=0.40\textwidth]{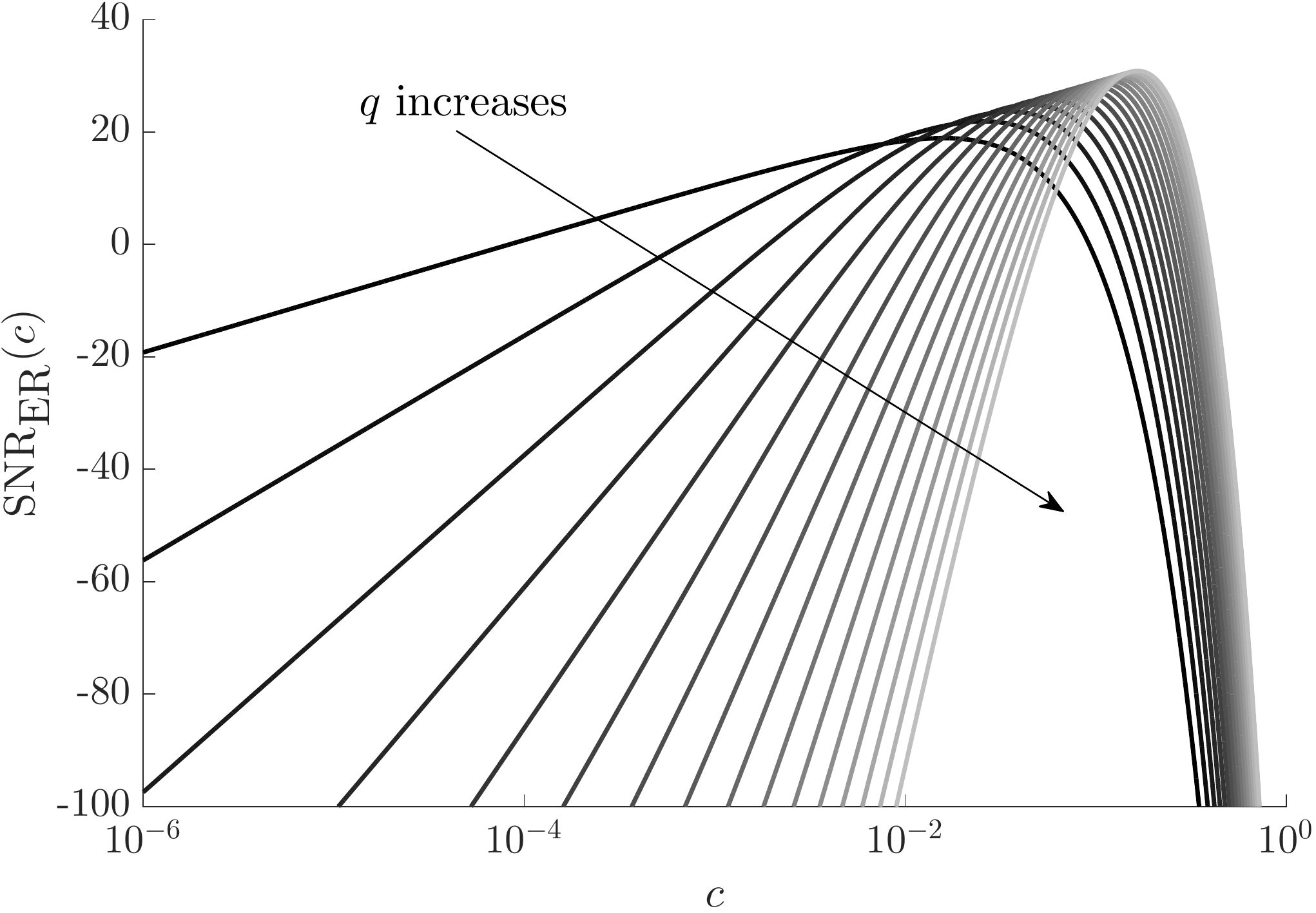}\\
\footnotesize (a) $\mathrm{SNR}_{\mathrm{OR}}$&
\footnotesize (b) $\mathrm{SNR}_{\mathrm{ER}}$
\end{tabular}
\caption{Comparison of the SNRs for $q\in\{1,\ldots,16\}$. In this experiment, we fix $\alpha=400$, $K=4$, and $T=30$.}
\label{fig:SNR}
\end{figure}

\section{Properties of the incomplete Gamma function}
In the main article, we used the incomplete Gamma function for QIS analysis. In this section, we provide more details about the properties of the incomplete Gamma function.

First, we recall that the normalized upper incomplete Gamma function ${\Psi_q:\R^{+}\rightarrow[0,1]}$ is defined as
\begin{equation}
\Psi_q(\theta) \bydef \frac{1}{\Gamma{(q)}} \int_{\theta}^{\infty}  t^{q-1} e^{-t} dt, \quad \mathrm{for}\; \theta > 0,\; q \in \N.
\end{equation}
where $\Gamma(q) = (q-1)!$ is the standard Gamma function.

In this equation, we note that $\Psi_q(\theta)$ depends on two variables: $q$ and $\theta$.
\begin{itemize}
\item As a function of $\theta$. As we showed in the main article, $\Psi_q(\theta)$ is a monotonically decreasing function of $\theta$ because the derivative is negative:
\begin{equation*}
\frac{d}{d\theta}\Psi_q(\theta) = \frac{-\theta^{q-1}e^{-\theta}}{\Gamma(q)} < 0.
\end{equation*}
However, $\Psi_q(\theta)$ is very close to 1 when $\theta$ is small, and is very close to 0 when $\theta$ is large. Therefore, there exists a range of $\theta$ in which $\Psi_q(\theta)$ can attain a reasonably good inverse. We define this set as the $\theta$-admissible set
\begin{equation}
\Theta_q \bydef \{ \theta \;|\; \varepsilon \le \Psi_q(\theta) \le  1-\varepsilon\},
\end{equation}
for any fixed $q$ and a tolerance level $\varepsilon$. An illustration of $\Theta_q$ is shown in \fref{fig:Psi}.
\item As a function of $q$. The incomplete Gamma function $\Psi_q(\theta)$ can also be considered as a function of $q$. In this case, $\Psi_q(\theta)$ is only defined for integer values of $q$. We illustrate the behavior of $\Psi_q(\theta)$ as a function of $q$ in \fref{fig:Psi}. The set of $q$ in which $\Psi_q(\theta)$ is sufficiently away from 0 and 1 is defined as the $q$-admissible set.
\begin{equation}
\calQ_\theta \bydef \{q \;|\; \varepsilon \le \Psi_q(\theta) \le  1-\varepsilon\}.
\end{equation}
\end{itemize}

\begin{figure}[t]
\centering
\begin{tabular}{cc}
\includegraphics[width=0.40\textwidth]{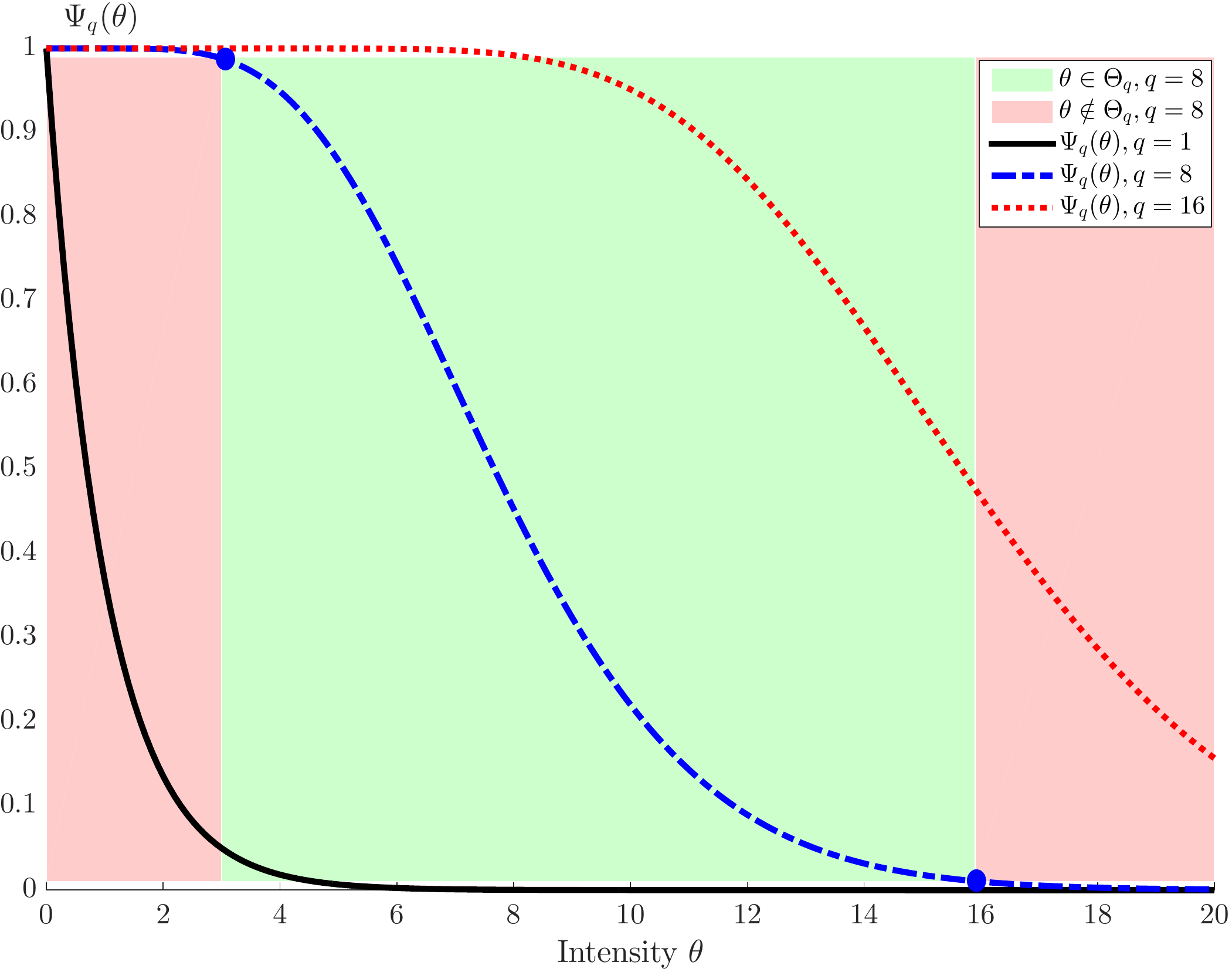}&
\includegraphics[width=0.40\textwidth]{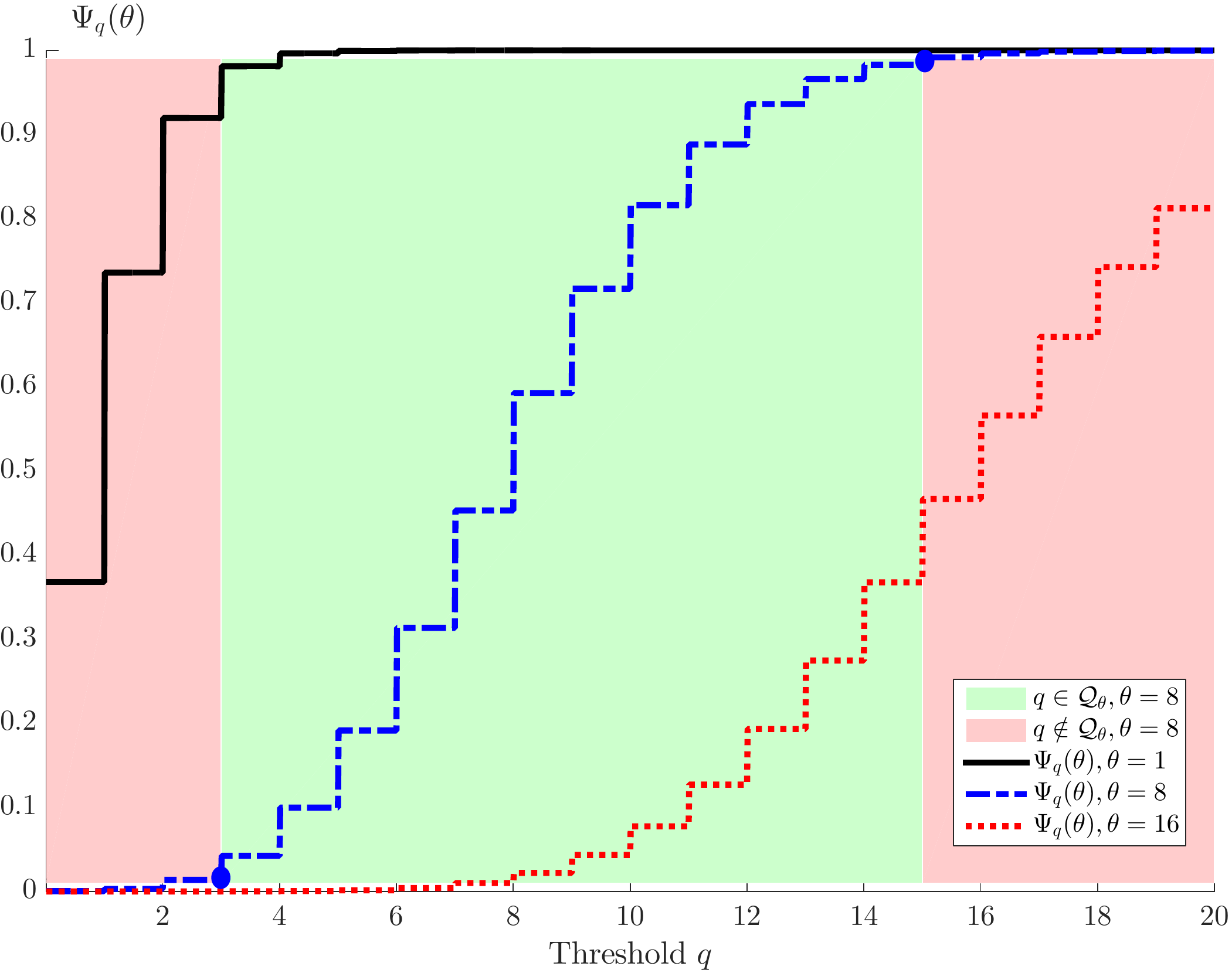}\\
\footnotesize (a) $\Psi_q(\theta)$ vs. $\theta$&
\footnotesize (b) $\Psi_q(\theta)$ vs. $q$
\end{tabular}
\caption{$\Psi_q(\theta)$ as a function of $\theta$ and $q$. In defining, $\calQ_\theta$ and $\Theta_q$, we set $\epsilon=0.01$. }
\label{fig:Psi}
%\vspace{-4.0ex}
\end{figure}

{\color{black}
\section{Comparison with the threshold design scheme by Yang \cite{Yang_2012}}

In this section, we compare our threshold scheme with the one in \cite{Yang_2012}.

First, we recall that the optimality of our method is based on a lower-bound $L_q(c)$ for the per-pixel SNR:
\begin{equation} \label{eq:oracle}
q^*(c) = \argmax{q \in \N} \; \textrm{SNR}_q(c) \approx \argmax{q \in \N} \; L_q(c) = \left\lfloor \frac{\alpha c}{K} \right\rfloor
\end{equation}
Therefore, the optimal threshold is a function of $c$, which changes in space and in time.

In contrast, \cite{Yang_2012} uses a checkerboard pattern by alternating two thresholds ($q^*_1$, $q^*_2$). These two thresholds are obtained by maximizing the Cram\'{e}r-Rao lower bound (CRLB) over a range of light intensity values $[c_{\min},c_{\max}]$:
\begin{equation}
(q^*_1, q^*_2) = \underset{1\leq q_1,q_2 \leq q_{\max}}{\textrm{argmin}} \int_{c_{\min}}^{c_{\max}} CRLB(q_1,q_2,c)\; dc.
\end{equation}
As a result, the threshold is optimal in the \emph{average sense}. To compare the two approaches, we followed the same steps in \cite{Yang_2012} to obtain $CRLB(q_1,q_2,c)$ for a checkerboard pattern in terms of $\Psi_q(c)$ as follows
\begin{equation}
CRLB(q_1,q_2,c)  = \sum_{i=1}^2 \; \frac{\alpha^2}{2K} \frac{e^{-2\theta}\theta^{(2(q_i-1))}}{\Gamma(q_i)^2 \Psi_{q_i}(\theta) \left[1-\Psi_{q_i}(\theta)\right]}
\end{equation}
where $\theta=\alpha c /K$. Using the parameters $\alpha=K(q_{\max-1})$, $q_{\max}=16$, $K=4$, and using trapezoidal technique for numerical integration over $c$, we obtain that $q^*_1=4$ and $q^*_2=12$. Figure~\ref{fig:result1} shows the reconstructed images using uniform threshold maps with thresholds $q\in\{1,5,8,10,15\}$, the checkerboard threshold map in \cite{Yang_2012} with $q^*_1=4$ and $q^*_2=12$, and the oracle threshold map obtained by (\ref{eq:oracle}).  In this experiment, our proposed method achieves $28.15$ dB, which is $0.83$ dB higher than the checkerboard pattern.}

\begin{figure}[!]
\centering
\begin{tabular}{cccc}
\includegraphics[width=0.22\linewidth]{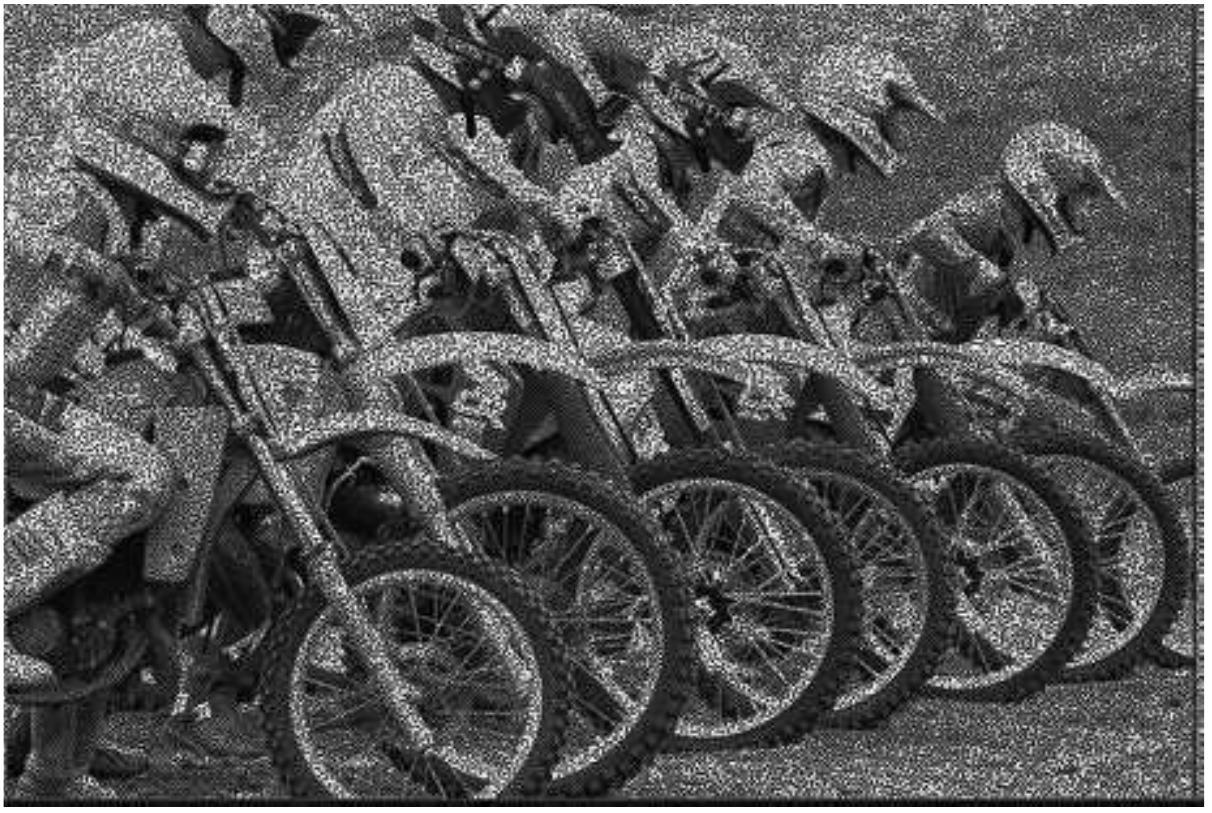}&
\includegraphics[width=0.22\linewidth]{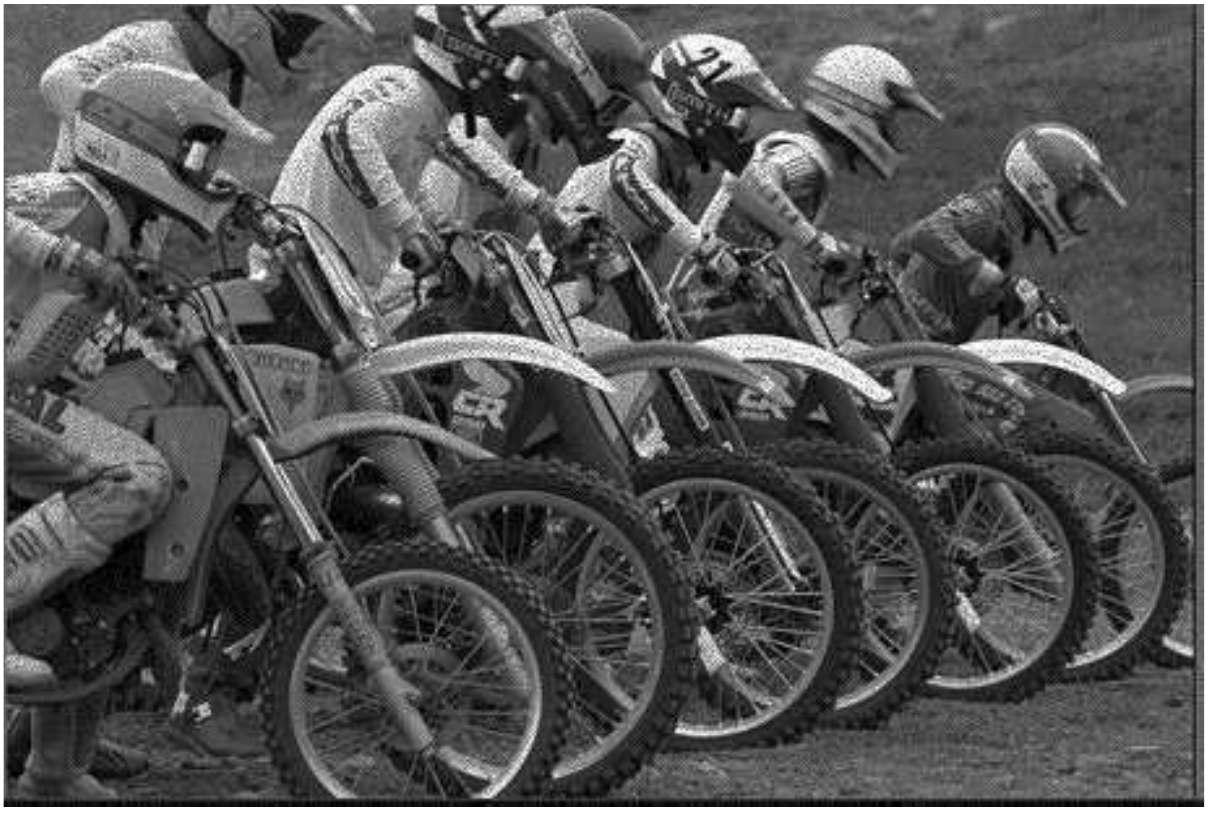}&
\includegraphics[width=0.22\linewidth]{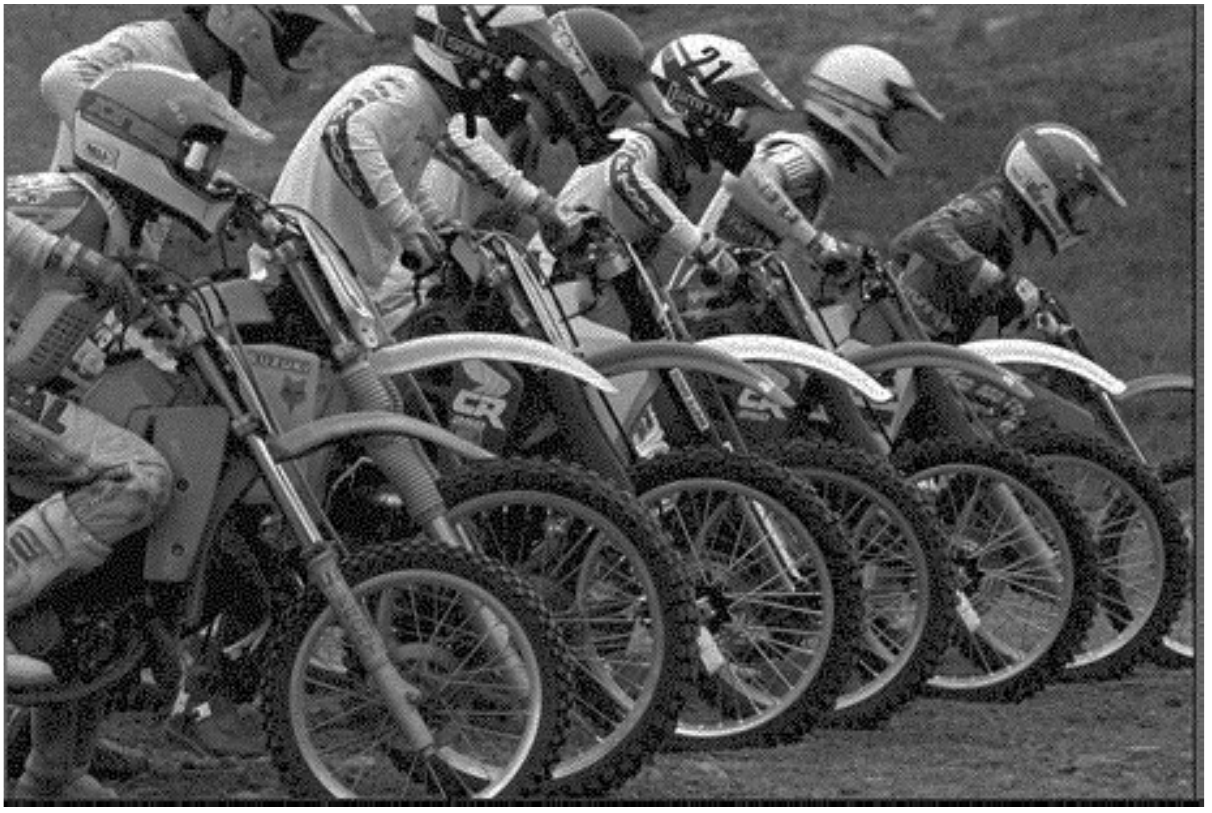}&
\includegraphics[width=0.22\linewidth]{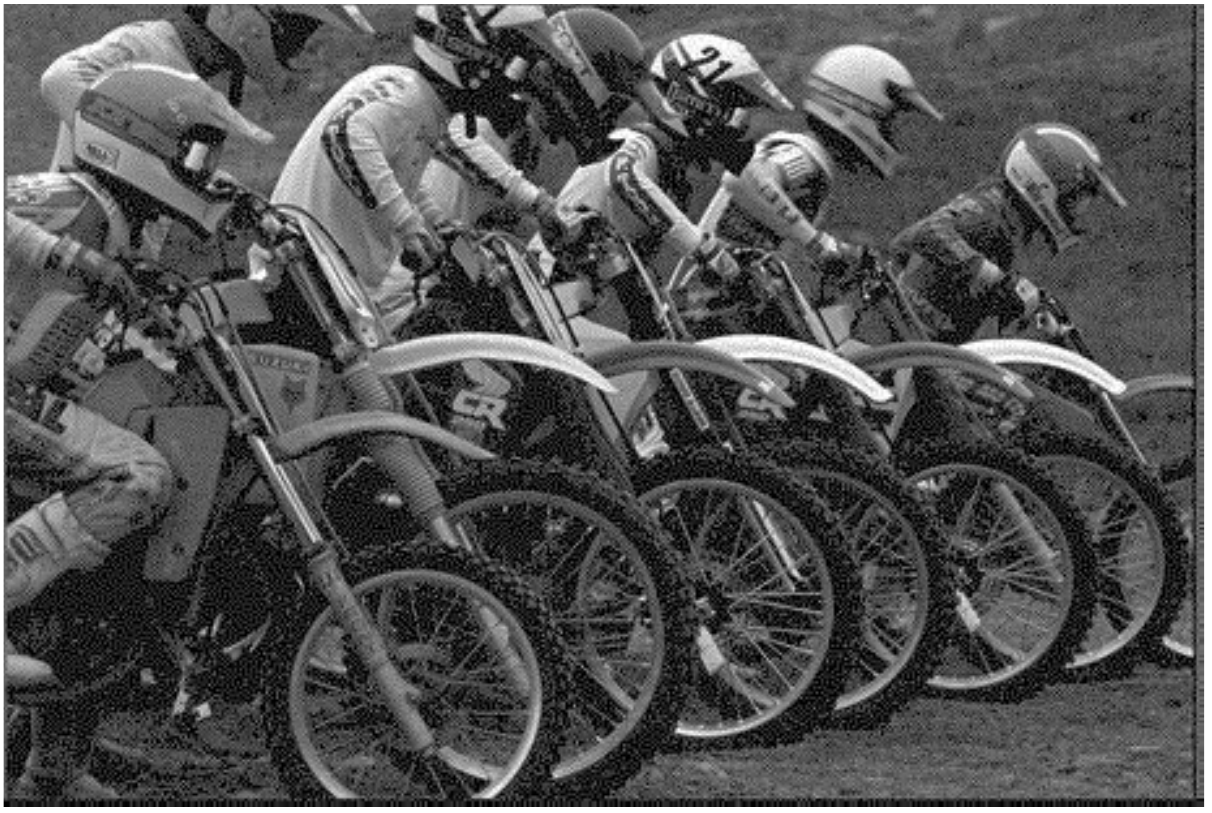}\\
(a) $q=1$,  15.97dB &
(b) $q=5$,  27.80dB &
(c) $q=8$,  25.84dB &
(d) $q=10$, 22.36dB\\
\includegraphics[width=0.22\linewidth]{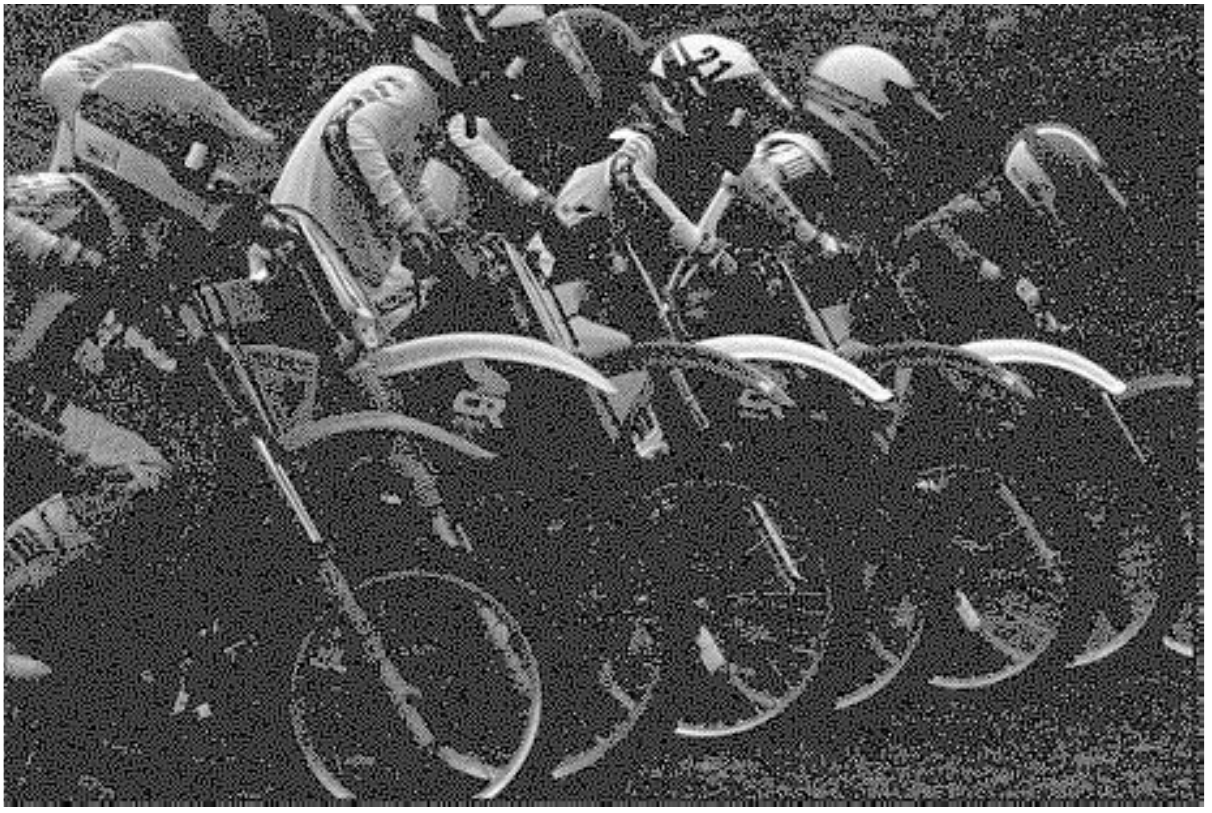}&
\includegraphics[width=0.22\linewidth]{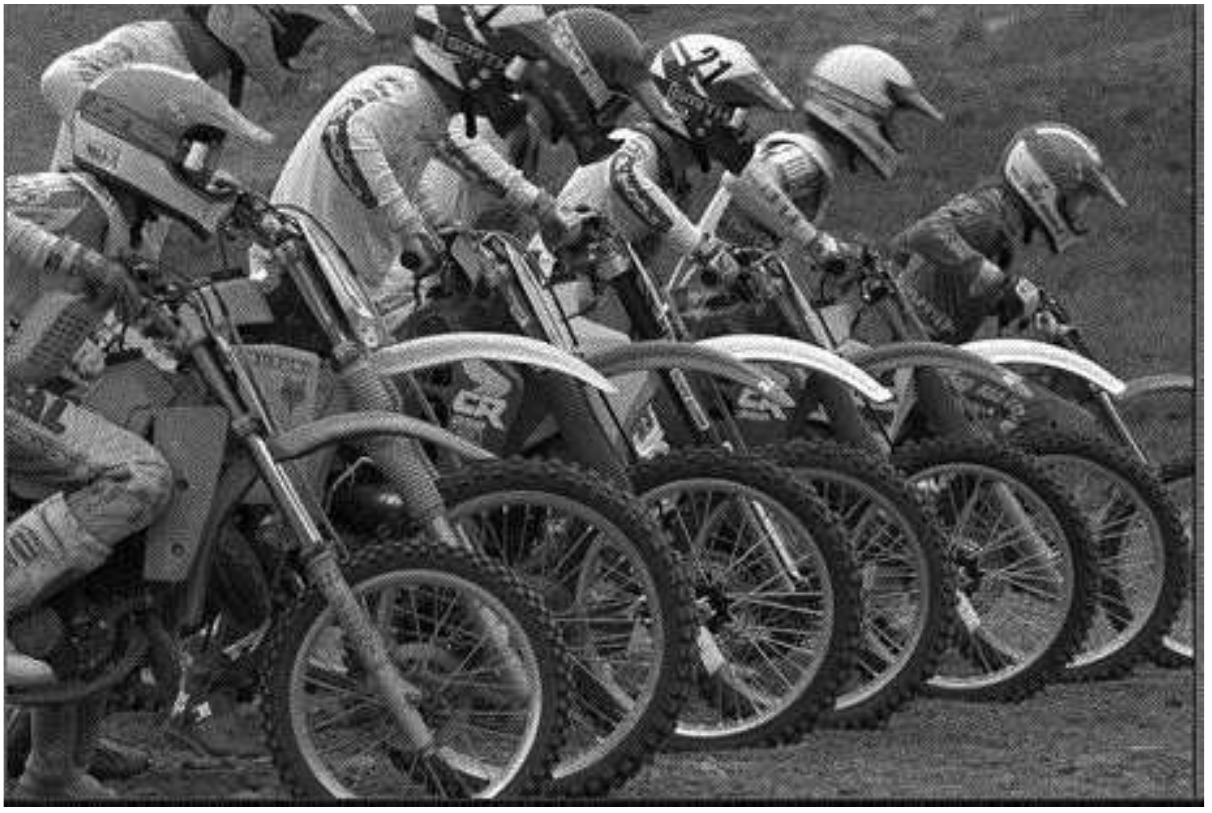}&
\includegraphics[width=0.22\linewidth]{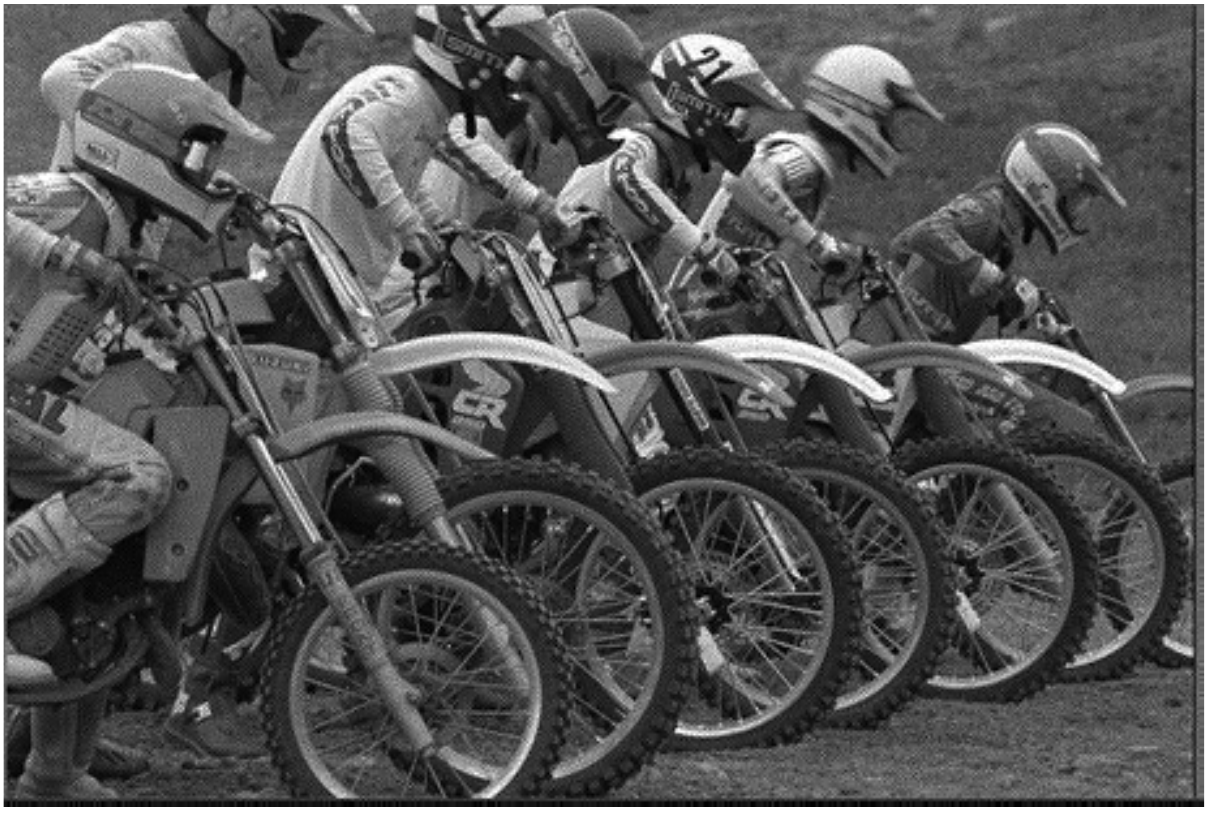}&
\includegraphics[width=0.22\linewidth]{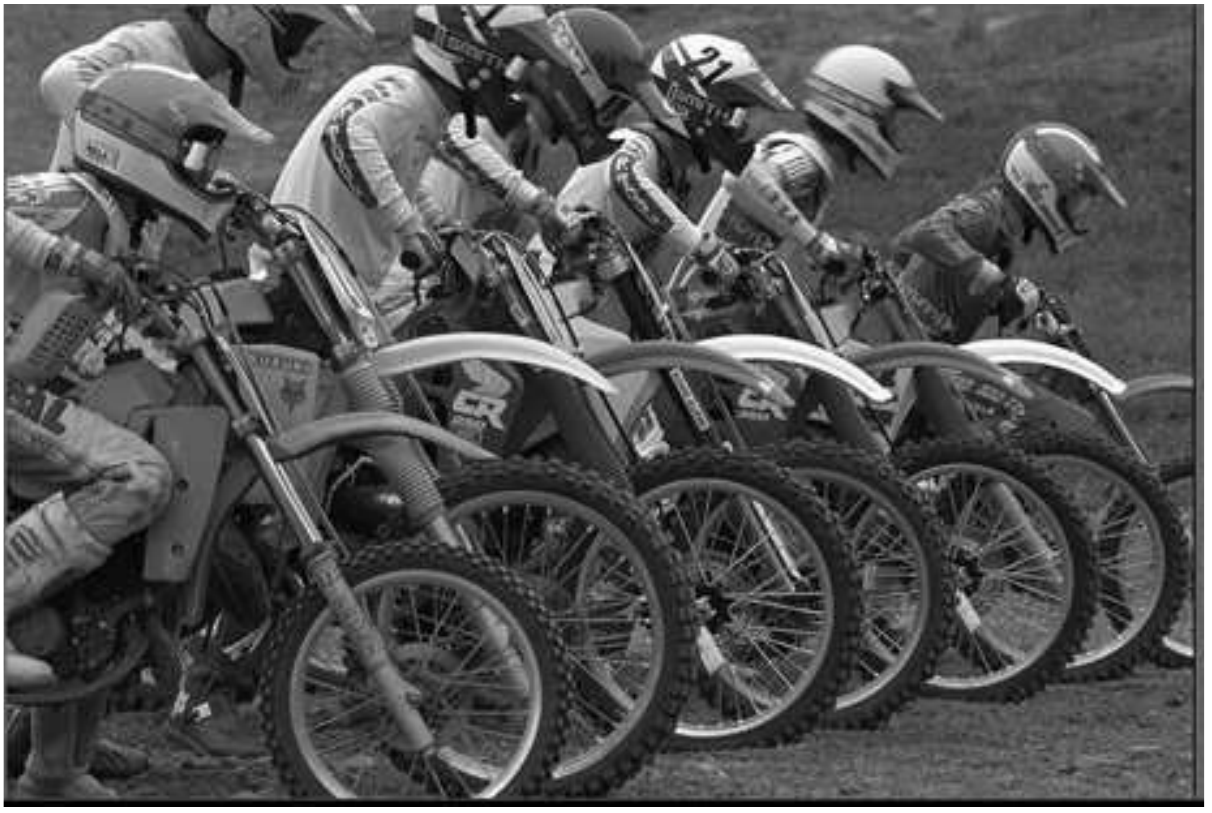}\\
(e) $q=15$, 15.35dB &
(f) \cite{Yang_2012}, 27.32dB&
(g) Proposed, 28.15dB&
(h) Ground Truth
\end{tabular}
\caption{Spatial oversampling $K=4$. Temporal oversampling $T=20$. Quadratic B-spline kernel is used in synthesis and reconstruction models. Gradient descent is used to obtain the ML estimate. For bisection threshold map, 8 frames are used for adapting the map, and 12 frames are used for reconstruction. For all other maps, the whole 20 frames are used for reconstruction. }
\label{fig:result1}
\vspace{-2ex}
\end{figure}

\section{Phase transition under different configurations}\label{sec:SRI}
In the main article, we showed the phase transition behavior of the ML estimate using $K=4$, $T=50$, and $\delta=2\times 10^{-4}$. In this section, we study the effect of changing $K$, $T$, and $\delta$ on the phase transition region width.

\textbf{As a function of $T$}. \fref{fig:PsiT1a}-\fref{fig:PsiT1b} illustrate the phase transition behavior when $T = 10, 25, 50,$ and $100$. As $T$ increases, the width of the green region increases. However, if we fix the range of the bit density $1-\E[\gamma_q(c)]$, we observe that the SNR does not vary significantly even as $T$ changes.

\textbf{As a function of $K$}. The spatial oversampling $K$ affects both the threshold ${q^{*}(c)=\lfloor\alpha c/K\rfloor}+1$ and the phase transition width. \fref{fig:QthetaKT}(a) illustrates the behavior of the threshold $q^*$ as a function of $K$. As $K$ increases, $q^*$ decreases. However, the optimal $q^*$ still stays within the set $\calQ_\theta$.

\textbf{As a function of $\delta$}. The constant $\delta$ is used to define the set $\calQ_\theta$:
\begin{equation}
\calQ_\theta \bydef \left\{q \;\Big|\; 1-\left(\frac{\delta}{2}\right)^{\frac{1}{KT}} \le \Psi_q(\theta) \le \left(\frac{\delta}{2}\right)^{\frac{1}{KT}}\right\}.
\end{equation}
The constant $\delta$ is the tolerance level. When $\delta$ increases, the size of the set $\calQ_\theta$ should also increase. This result is shown in Figure~\ref{fig:QthetaKT}(b).

Using the closed form expression of the average bit density $1-\Psi_q(\theta)$, we can calculate the average bit density at the optimal threshold $q^*=\lfloor\theta\rfloor+1$, which is shown in Figure~\ref{fig:bitdensity}. We notice that as long as $\theta\geq 1$, the average bit density is between $0.264$ and $0.630$. Within this range, we observe from \fref{fig:PsiT1a}-\fref{fig:PsiT1b} that the SNR does not vary significantly if the estimated threshold is deviated from the optimal threshold. This observation relaxes the requirement of the bisection method from obtaining the exact optimal threshold to obtaining a threshold that make the bit density equal to $0.5$. Since $0.5\in[0.264,0.630]$, we guarantee to achieve an SNR which is sufficiently close to the optimal SNR.

Controlling $\theta\geq1$ can be achieved by tuning the constant $\alpha$. Tuning $\alpha$ can be hardware-implemented by increasing the exposure period. Intuitively what $\theta\geq1$ requires is that the average number of impinging photons per jot must be at least one. If $\theta$ is less than one, then most bits will become zeros. Increasing exposure period (i.e., increasing $\alpha$) will ensure sufficient number of photons.

\begin{figure}[h]
\centering
\begin{tabular}{cc}
\includegraphics[width=0.40\textwidth]{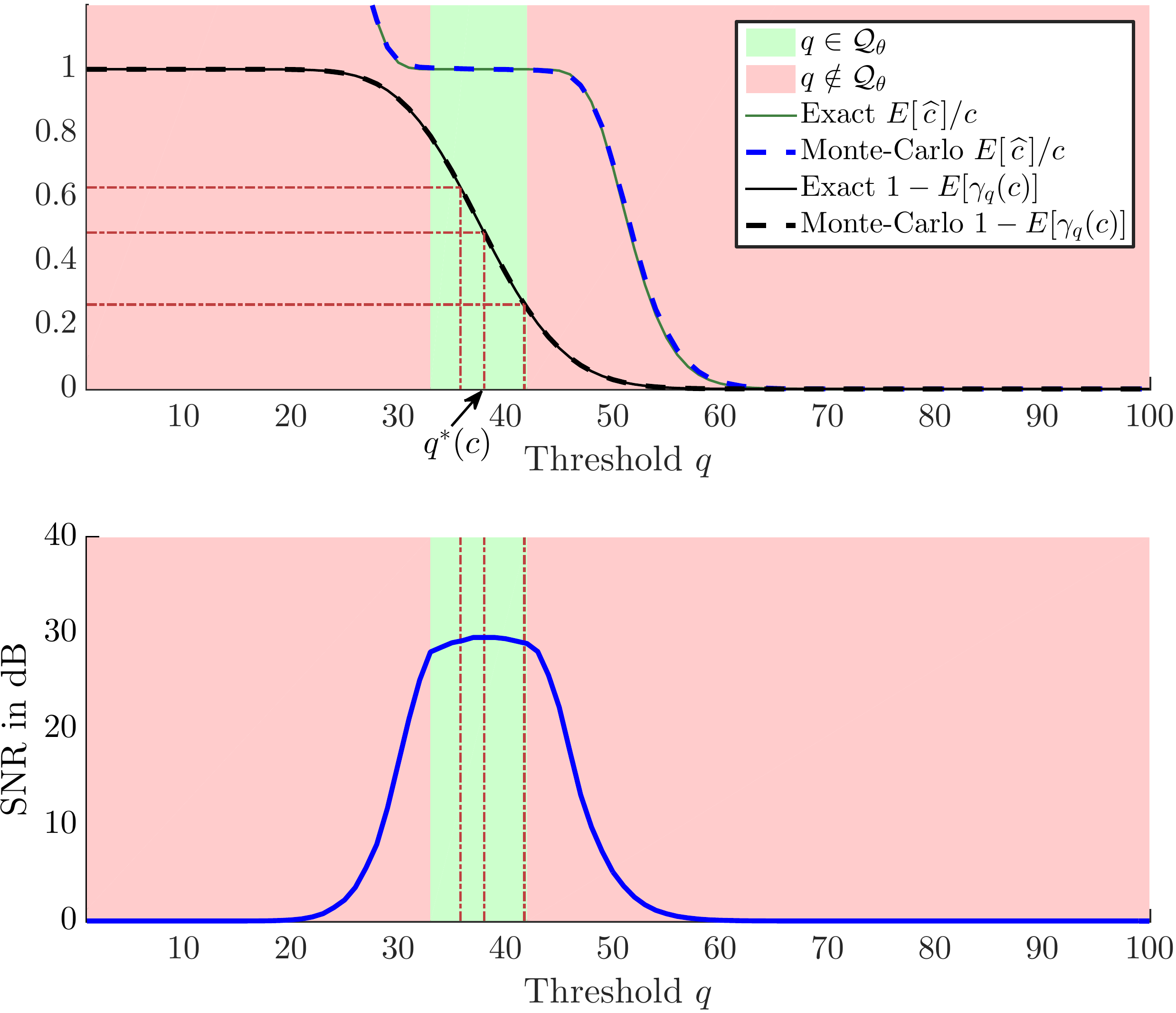}&
\includegraphics[width=0.40\textwidth]{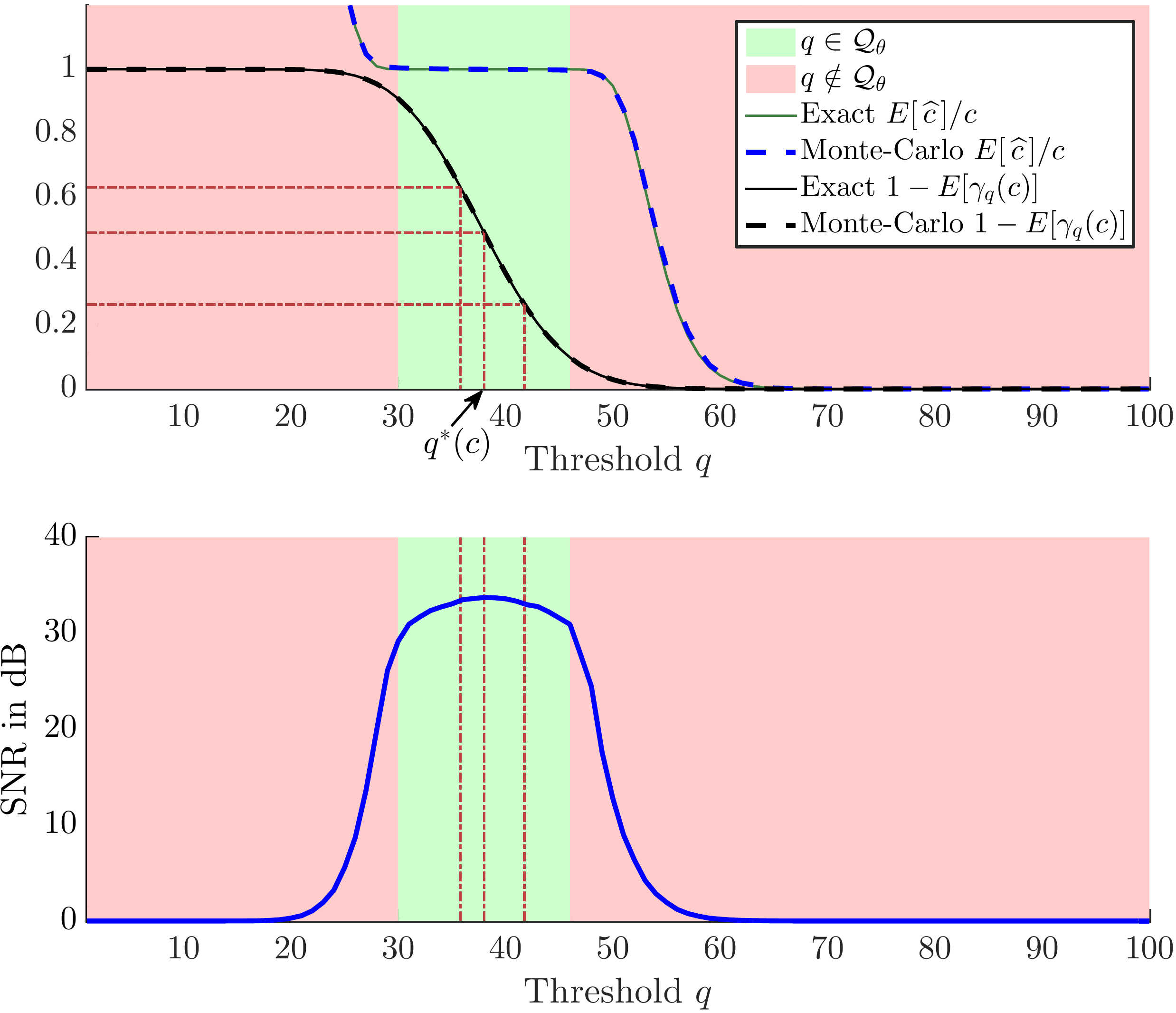}\\
\footnotesize (a) $T=10$, $\textrm{SNR}\in[28.93,29.57]$ &
\footnotesize (b) $T=25$, $\textrm{SNR}\in[32.98,33.74]$
\end{tabular}
\caption{Phase transition for $T=10$ and $T=25$. SNR range is shown for average bit density $1-\E[\gamma_q(c)]$ in the range $[0.264,0.630]$. For all cases, we set $\delta=2\times 10^{-4}$, and $K=4$.}
\label{fig:PsiT1a}
\end{figure}

\begin{figure}[h]
\centering
\begin{tabular}{cc}
\includegraphics[width=0.40\textwidth]{SRI.pdf}&
\includegraphics[width=0.40\textwidth]{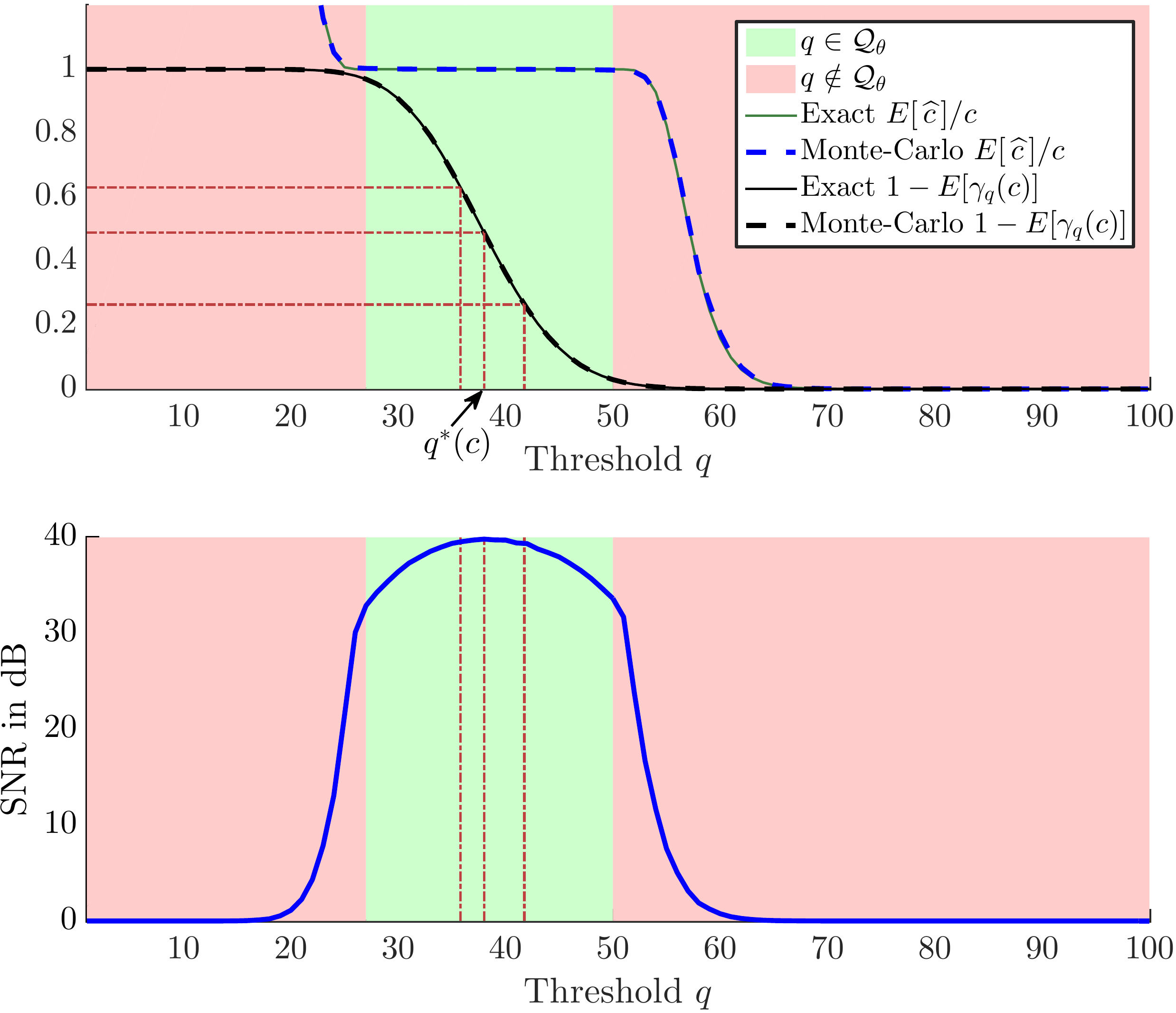} \\
\footnotesize (a) $T=50$, $\textrm{SNR}\in[36.15,36.80]$ &
\footnotesize (b) $T=100$, $\textrm{SNR}\in[39.35,39.82]$
\end{tabular}
\caption{Phase transition for $T=50$ and $T=100$. SNR range is shown for average bit density $1-\E[\gamma_q(c)]$ in the range $[0.264,0.630]$.  For all cases, we set $\delta=2\times 10^{-4}$, and $K=4$.}
\label{fig:PsiT1b}
\vspace{-1.0ex}
\end{figure}

\begin{figure}[ht]
\centering
\begin{tabular}{cc}
\includegraphics[width=0.45\textwidth]{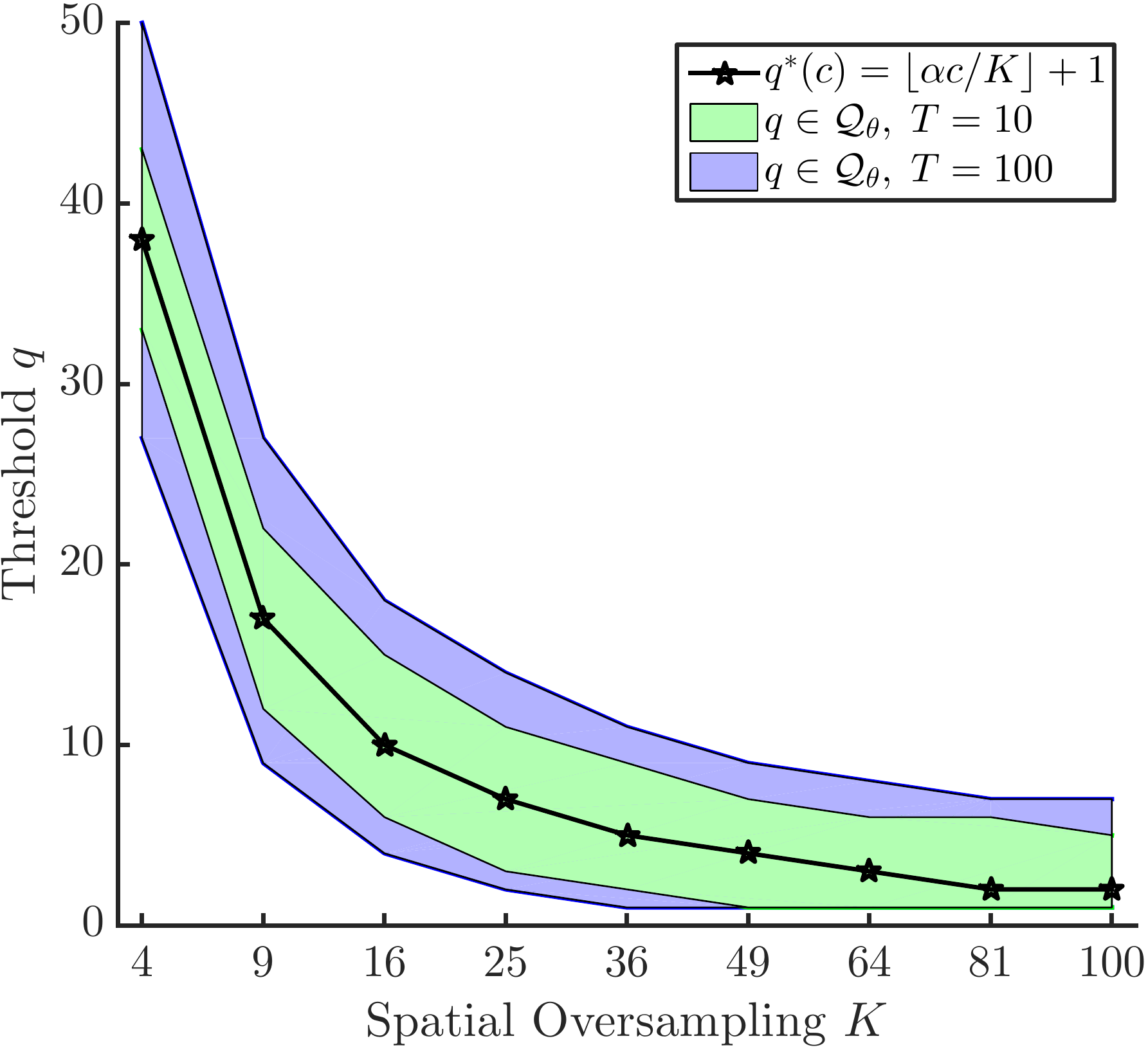}&
\includegraphics[width=0.45\textwidth]{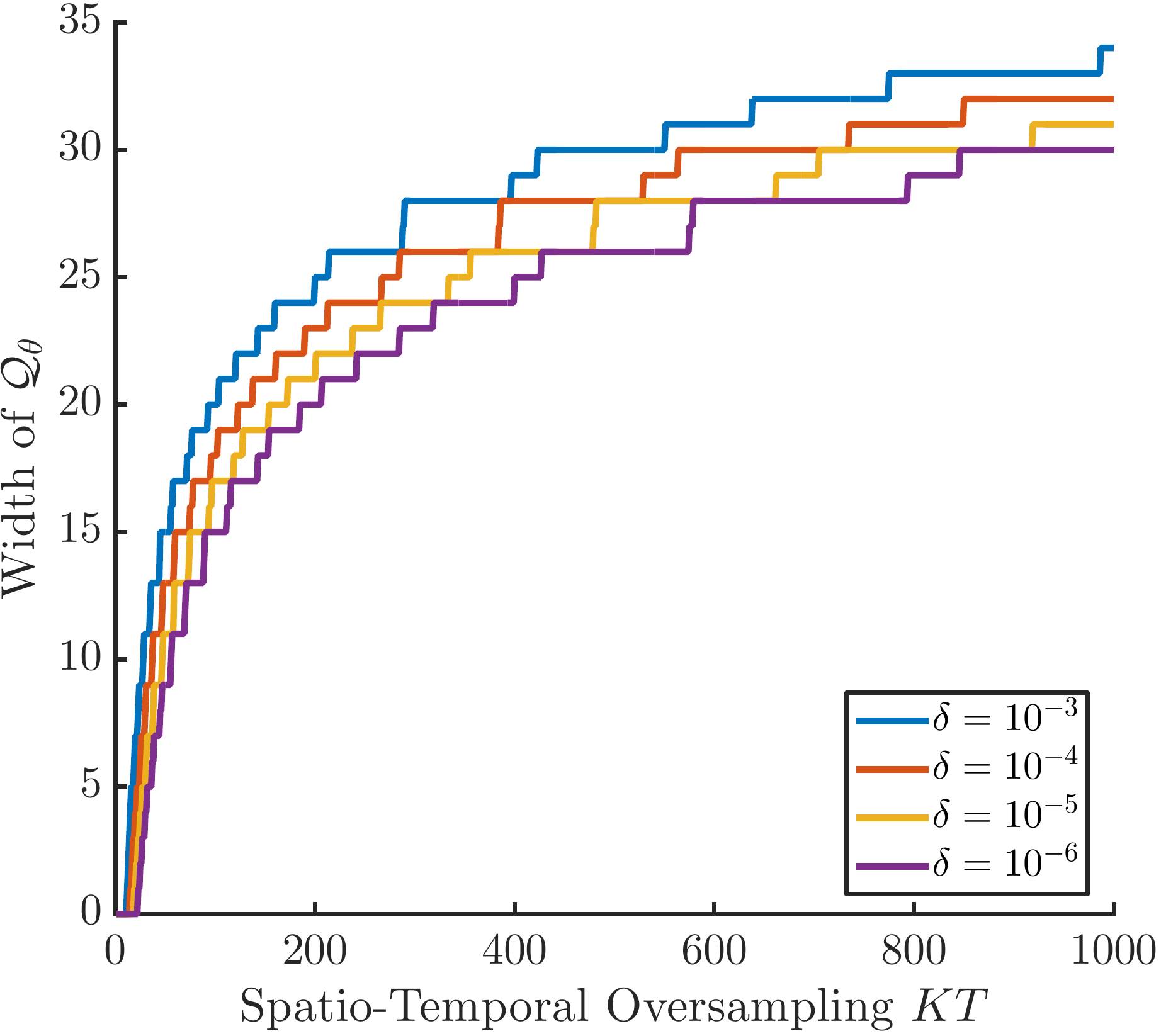}\\
\footnotesize (a) &
\footnotesize (b)
\end{tabular}
\caption{(a) The threshold $q$ and $\calQ_\theta$ as $K$ increases. (b) The width of $\calQ_\theta$ as $KT$ and $\delta$ changes.}
\label{fig:QthetaKT}
\end{figure}

\begin{figure}[ht]
\centering
\includegraphics[width=0.5\textwidth]{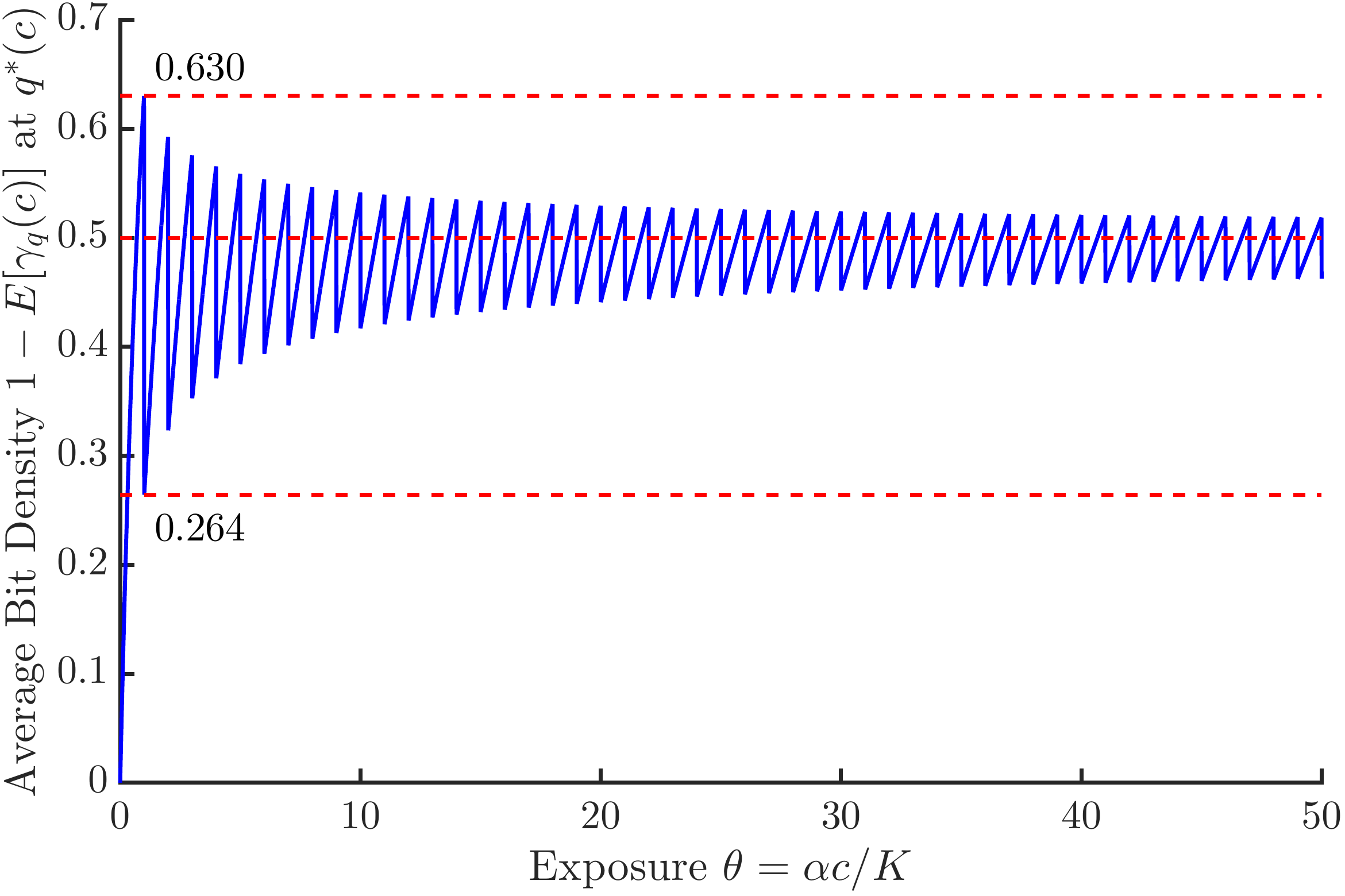}
\caption{Average bit density $1-\E[\gamma_q(c)]$ calculated at optimal threshold $q^*=\lfloor\theta\rfloor+1$.}
\label{fig:bitdensity}
\end{figure}

{\color{black}
\section{Influence of Non-Boxcar Kernel $\mathbf{G}$}

In this section, we discuss the boxcar kernel assumption in QIS model, i.e., $\mG=\frac{1}{K} \mI_{N\times N}  \otimes \mathbf{1}_{K \times 1}$. We also study the effect of assuming a general kernel $\mG$ on our results.

On QIS, we typically assume that there are micro-lenses on top of each jot or a group of jots. These micro-lenses ensure that the incident light converges onto the sensing site with no (or very minor) interference with adjacent jots or groups. As a result, we can model the incoming light using the boxcar kernel. This assumption is perhaps strong in some perspective, but it allows us to significantly simplify the theory and offer efficient implementations.

What if there is a mismatch between the physical model (e.g., using B-spline or Gaussian kernel $\mG$) and the reconstruction (e.g., using boxcar)? To see the effect of this mismatch on the reconstruction quality, we conduct two sets of experiments.
\begin{itemize}
\item \textbf{1D Signal}: We consider a 1D signal with 10 coefficients. These 10 coefficients are modulated with boxcar kernels and B-spline kernels to generate two sets of incident light. On the QIS simulator, we set the spatial and temporal oversampling factors as $K=9$ and $T=30$, respectively. Then we use the oracle threshold map for quantization. To reconstruct the images, we use boxcar kernel for both cases so that we have one matching case and one mismatching case. Figure~\ref{fig:result2} shows the reconstructed signals. As expected, when the forward model matches with the reconstruction model, the reconstructed image has the highest PSNR. However, the gap between the cases are not significant.
\item \textbf{2D Signal}: Figure~\ref{fig:result3} shows a 2D example. Similar to the 1D case, boxcar kernel leads to the best reconstruction but its gap with the other cases are not significant.
\end{itemize}

\noindent The reader might think why we do not use B-spline on the reconstruction so that it will match with the forward model? In principle this is doable, but we need an iterative algorithm to compute the ML estimate such as gradient descent as reported in \cite{Yang_2012}. In contrast, the boxcar assumption allows us to use a closed-form ML estimate, which is practically much more affordable.

\begin{figure}[ht]
 \centering
\begin{tabular}{cc}
\includegraphics[width=0.35\linewidth]{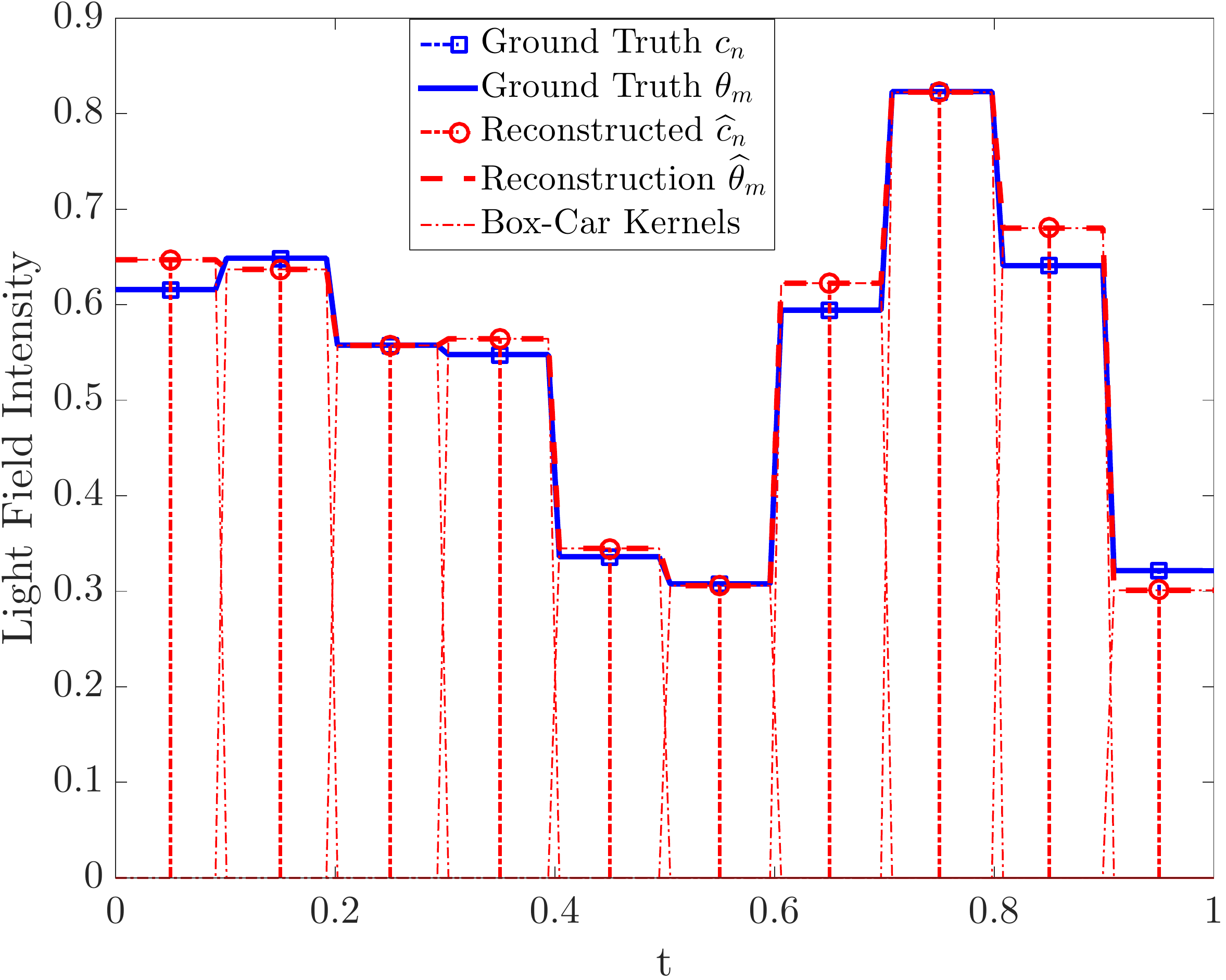} &
  \includegraphics[width=0.35\linewidth]{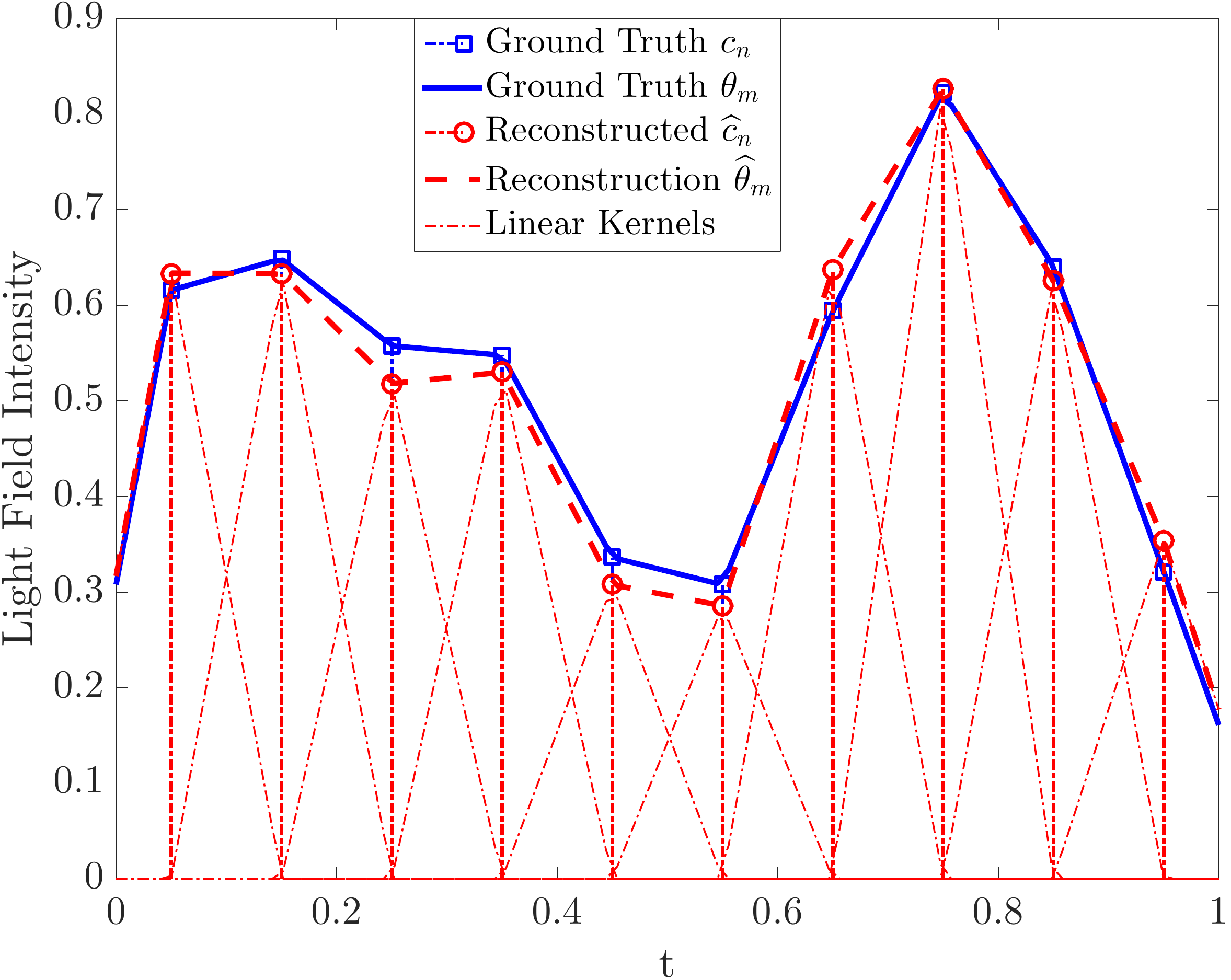}\\
\footnotesize (a) Boxcar kernel,  PSNR$ = 33.74$  dB&
  \footnotesize (b) Linear B-spline,  PSNR$ = 31.62$  dB \\
   \includegraphics[width=0.35\linewidth]{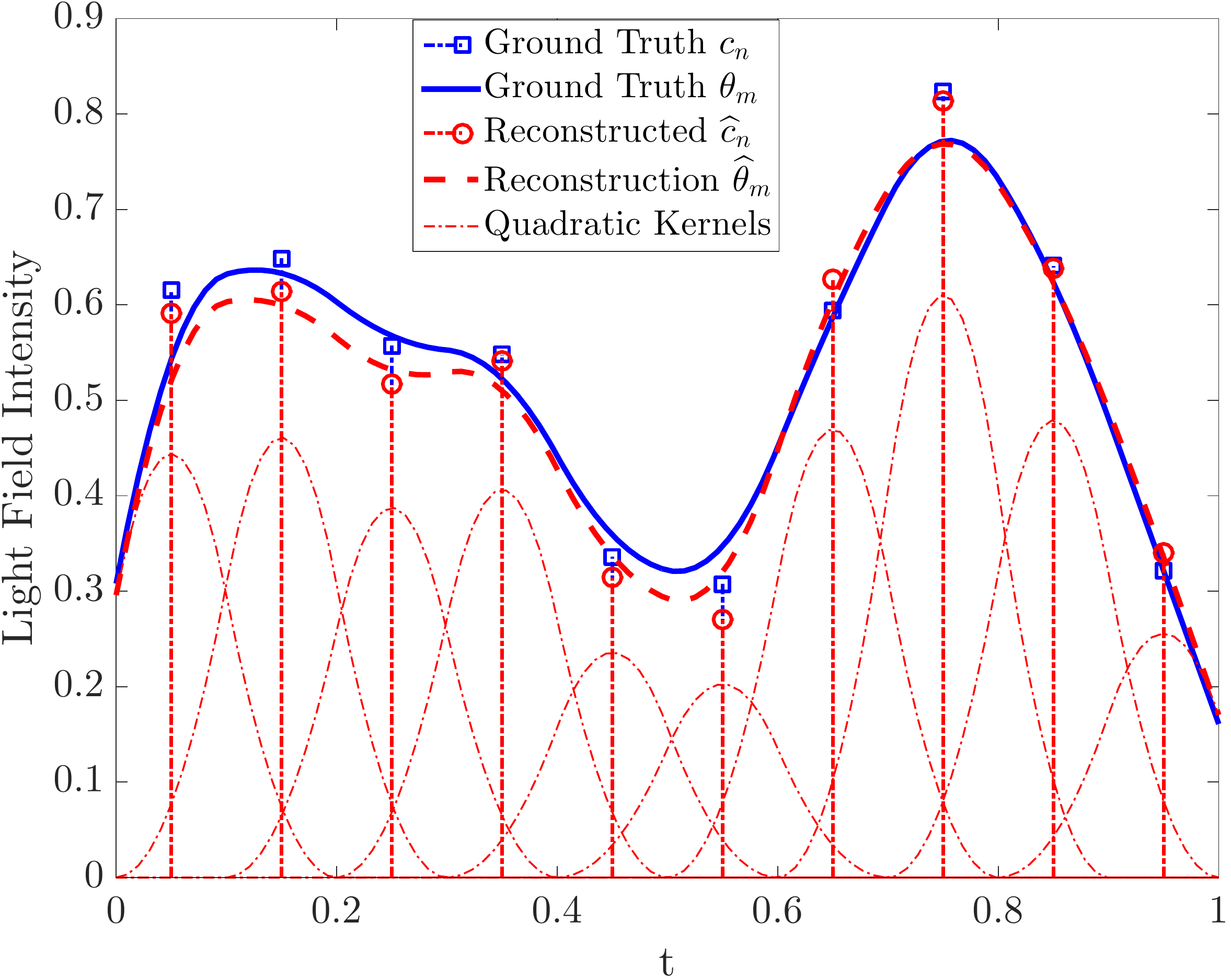}&
 \includegraphics[width=0.35\linewidth]{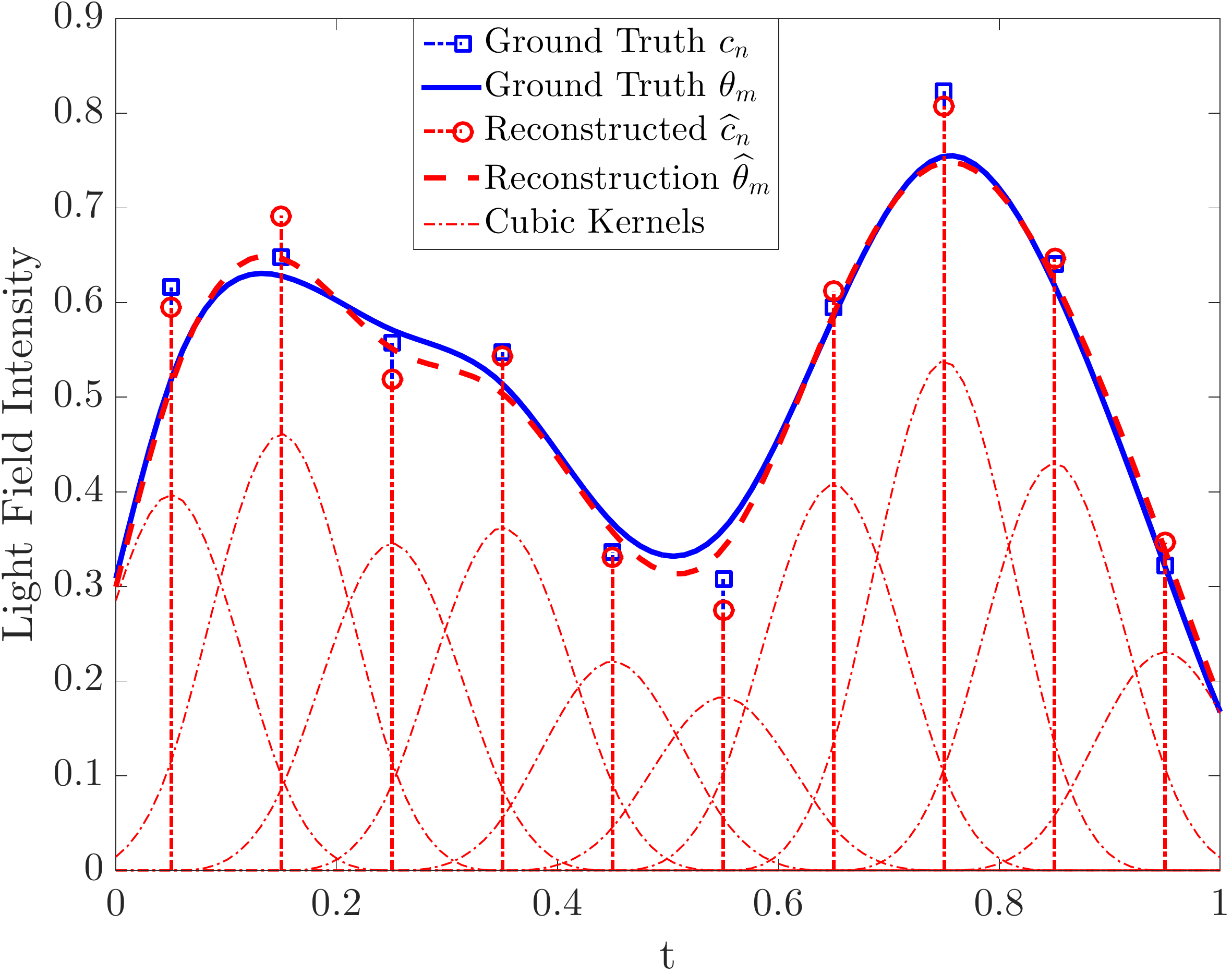}\\
 \footnotesize (c) Quadratic B-spline,  PSNR$ = 31.66$  dB &
 \footnotesize (d) Cubic B-spline,  PSNR$ = 32.15$  dB
 \end{tabular}
 \caption{Spatial oversampling $K=9$. Temporal oversampling $T=30$.  Oracle threshold map is used for quantization. Different kernels are used in synthesis and boxcar kernel is used in reconstruction. ML closed-form is used for reconstruction}
 \label{fig:result2}
 \vspace{-4ex}
 \end{figure}

\begin{figure}[!]
\centering
\begin{tabular}{cccc}
\includegraphics[width=0.23\linewidth]{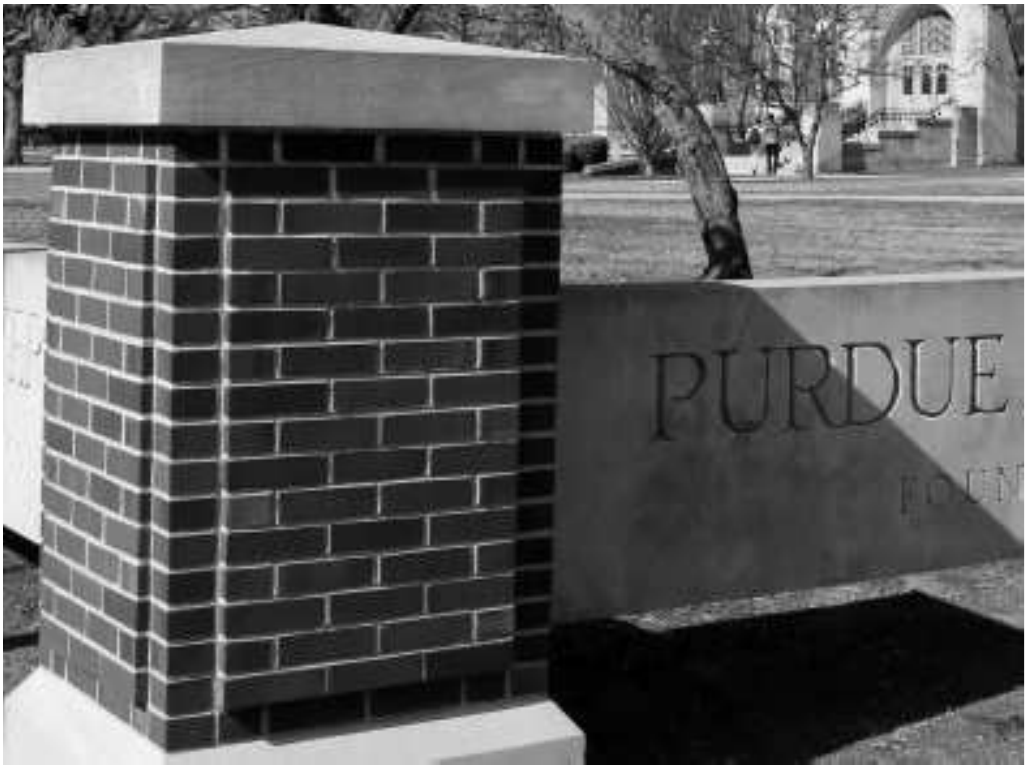}&
\includegraphics[width=0.23\linewidth]{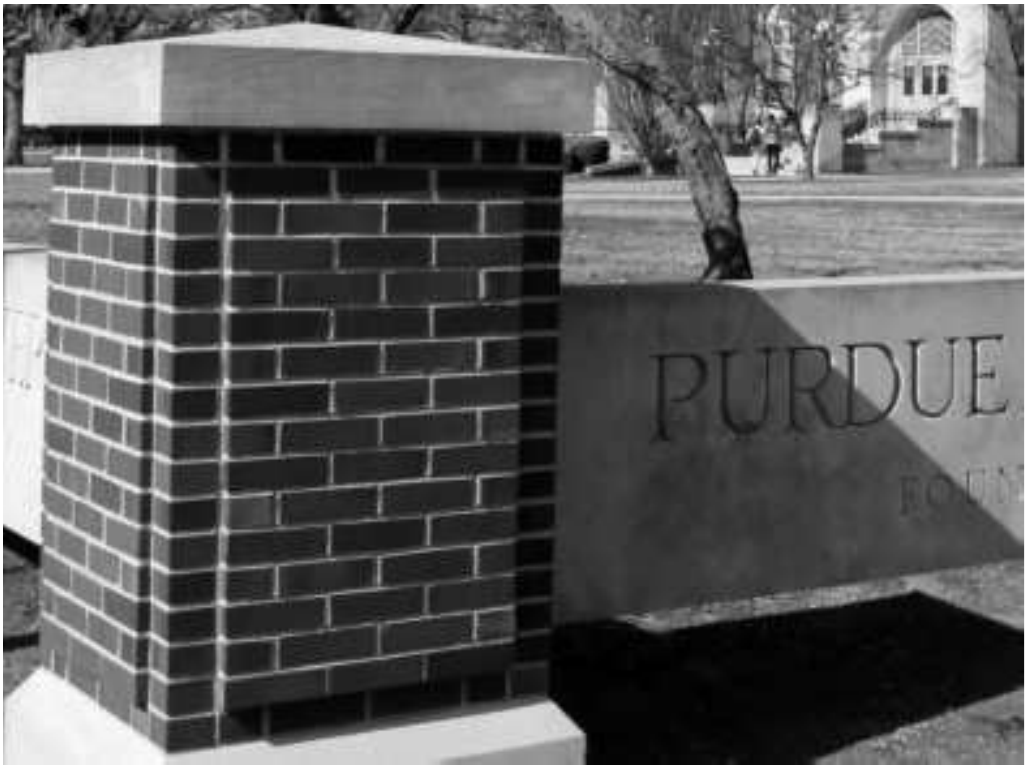}&
\includegraphics[width=0.23\linewidth]{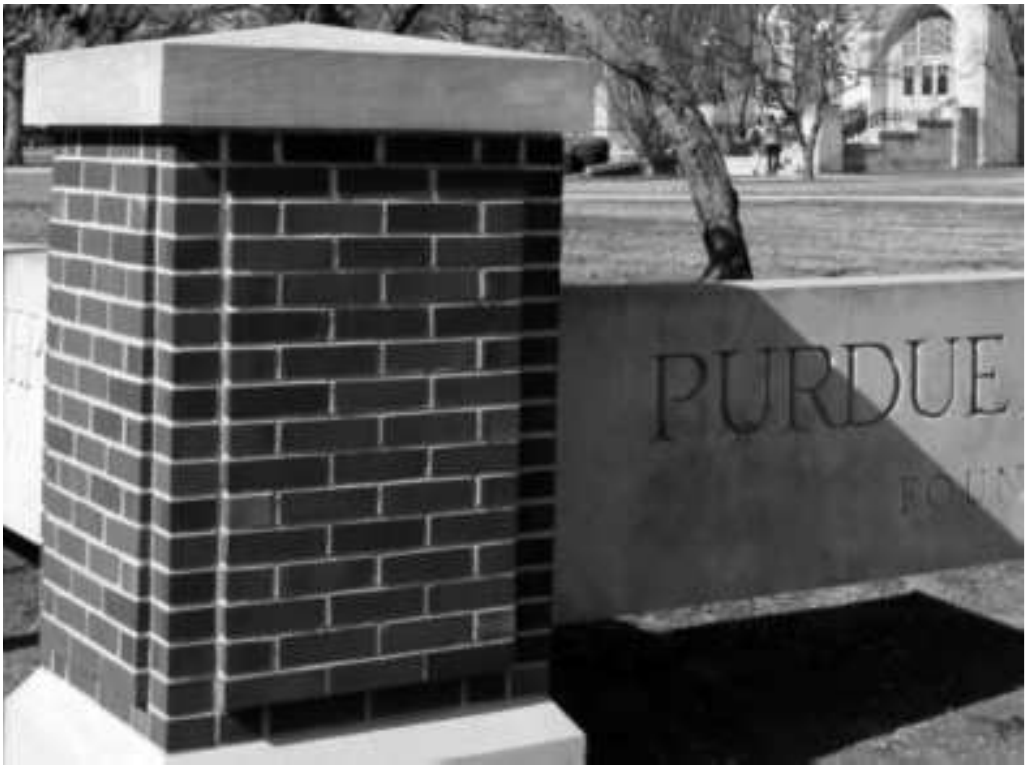}&
\includegraphics[width=0.23\linewidth]{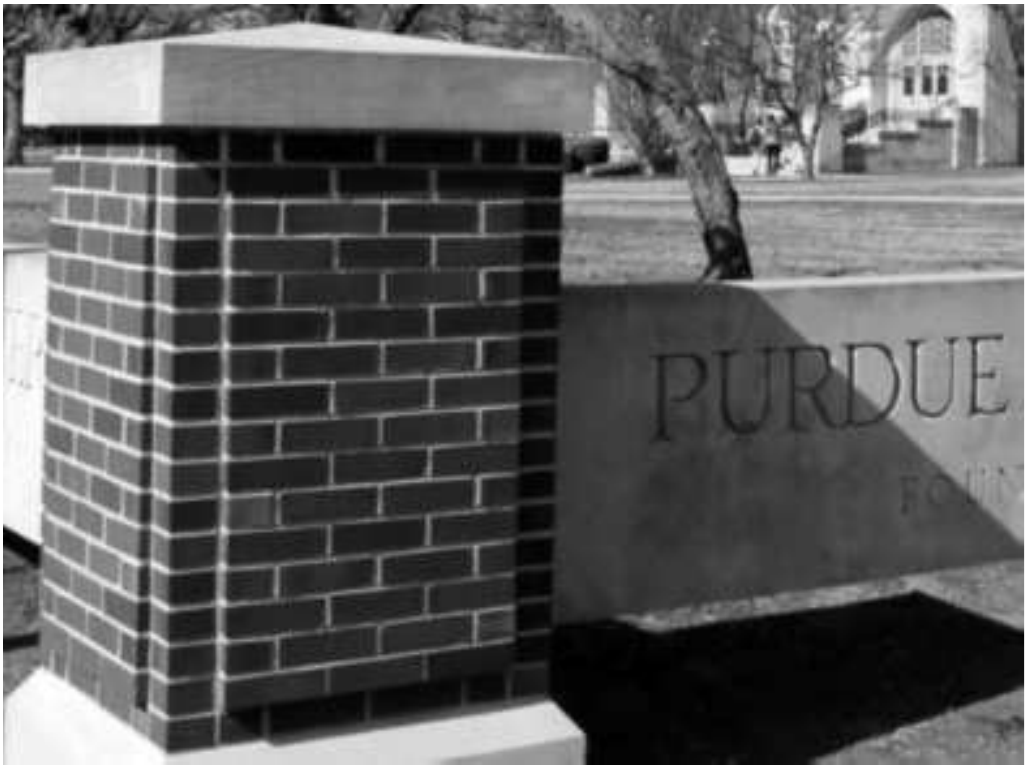}\\
\footnotesize (a) \begin{tabular}{@{}c@{}}Boxcar \\ Ground Truth\end{tabular}&
\footnotesize (b) \begin{tabular}{@{}c@{}}Linear B-spline \\ Ground Truth\end{tabular}&
\footnotesize (c) \begin{tabular}{@{}c@{}}Quadratic B-spline \\ Ground Truth\end{tabular}&
\footnotesize (d) \begin{tabular}{@{}c@{}}Cubic B-spline \\ Ground Truth\end{tabular}\\
\includegraphics[width=0.23\linewidth]{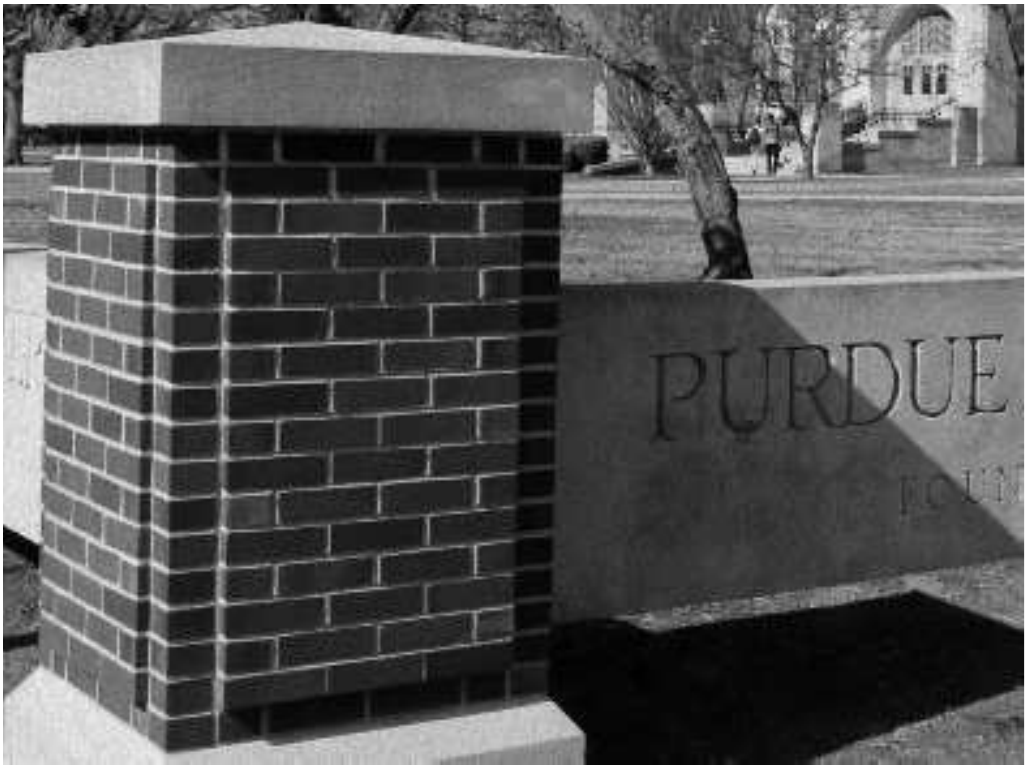}&
\includegraphics[width=0.23\linewidth]{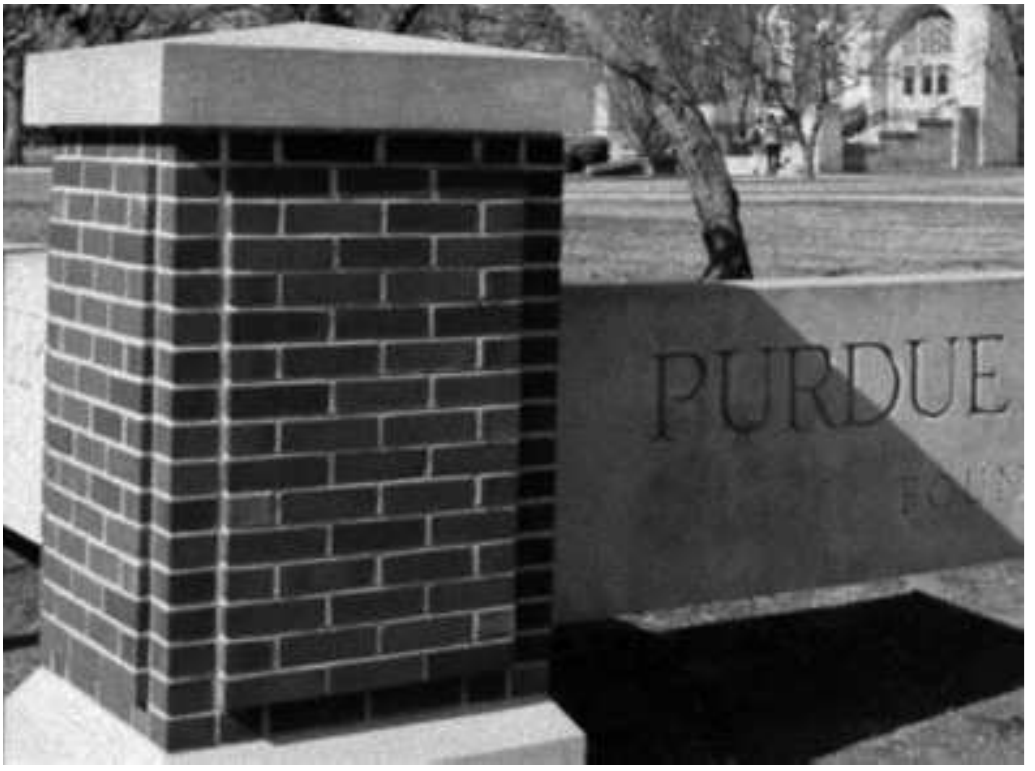}&
\includegraphics[width=0.23\linewidth]{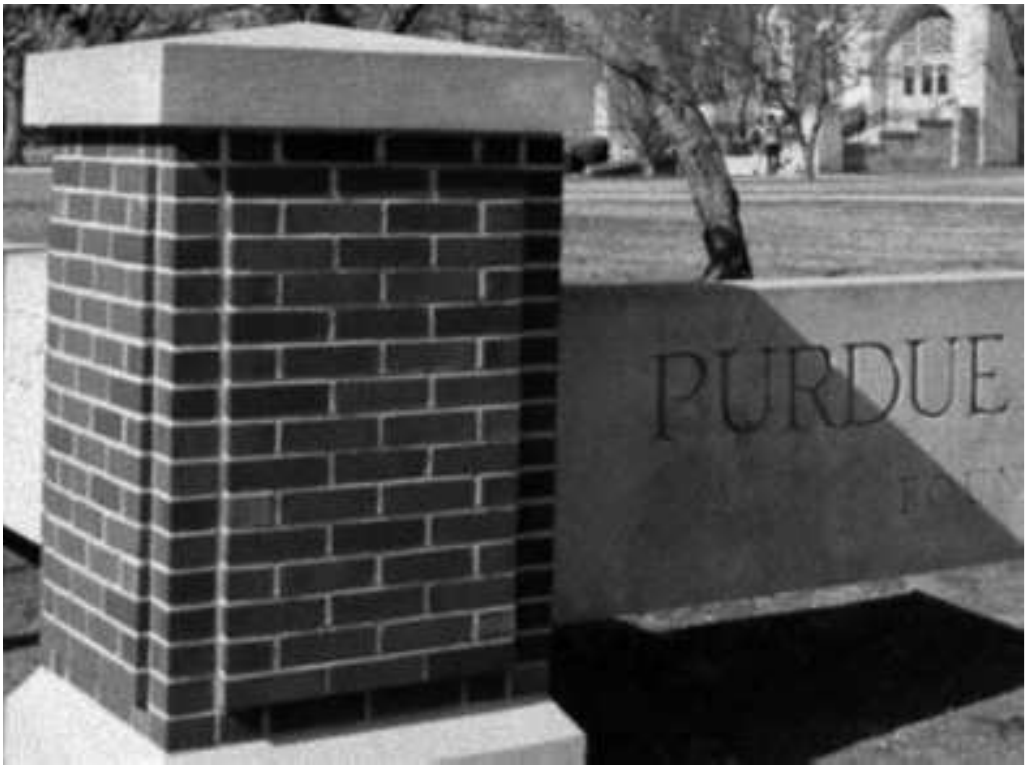}&
\includegraphics[width=0.23\linewidth]{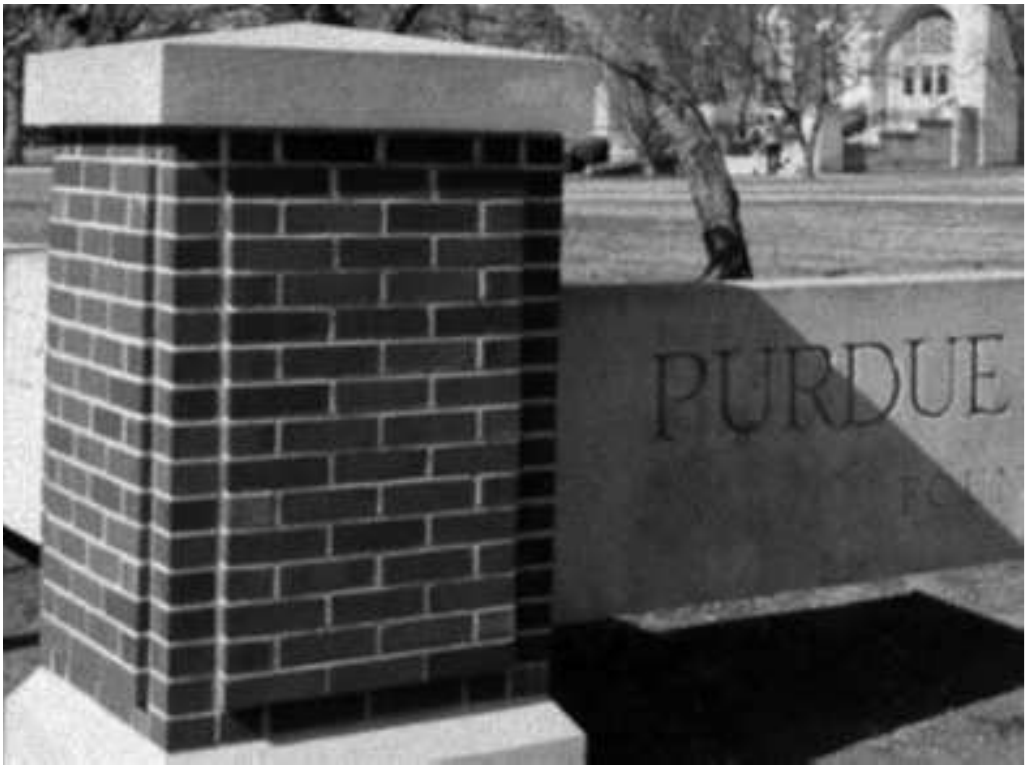}\\
\footnotesize (e) \begin{tabular}{@{}c@{}}Reconstruction \\ PSNR$ = 33.33$\end{tabular}&
\footnotesize (f) \begin{tabular}{@{}c@{}}Reconstruction \\ PSNR$ = 30.67$\end{tabular}&
\footnotesize (g) \begin{tabular}{@{}c@{}}Reconstruction \\ PSNR$ = 32.54$\end{tabular}&
\footnotesize (h) \begin{tabular}{@{}c@{}}Reconstruction \\ PSNR$ = 33.74$\end{tabular}
\end{tabular}
\caption{Ground truth and reconstructed images using simulated binary measurements synthesized by (a)(e) Boxcar, (b)(f) Linear B-spline, (c)(g) Quadratic B-spline, and (d)(h) Cubic B-spline kernels. In this experiment, we spatially oversample each pixel by $K = 4 \times 4$ binary bits and we use $T = 15$ independent temporal measurements. We use $8$ frames for learning the threshold map using bisection method, and the remaining $7$ frames are used for image reconstruction using the ML closed-form by the boxcar kernel assumption.}
\label{fig:result3}
\vspace{-2ex}
\end{figure}

\clearpage
\section{Supplementary HDR results}

In this section, we show more results for HDR image reconstruction using our method compared to the fixed threshold approach.  Figure~\ref{fig:HDR18} show reconstructed HDR images using adapted Q-map by the bisection algorithm, and fixed Q-maps with low threshold ($q=1$) and high threshold ($q_{\max}=16$). The spatial and temporal oversampling factors are $K=4$, and $T=13$, respectively. Sensor gain is $\alpha=K^2/(q_{\max}-1)$.

\begin{figure}[h]
\centering
\begin{tabular}{cccc}
\includegraphics[width=0.23\linewidth]{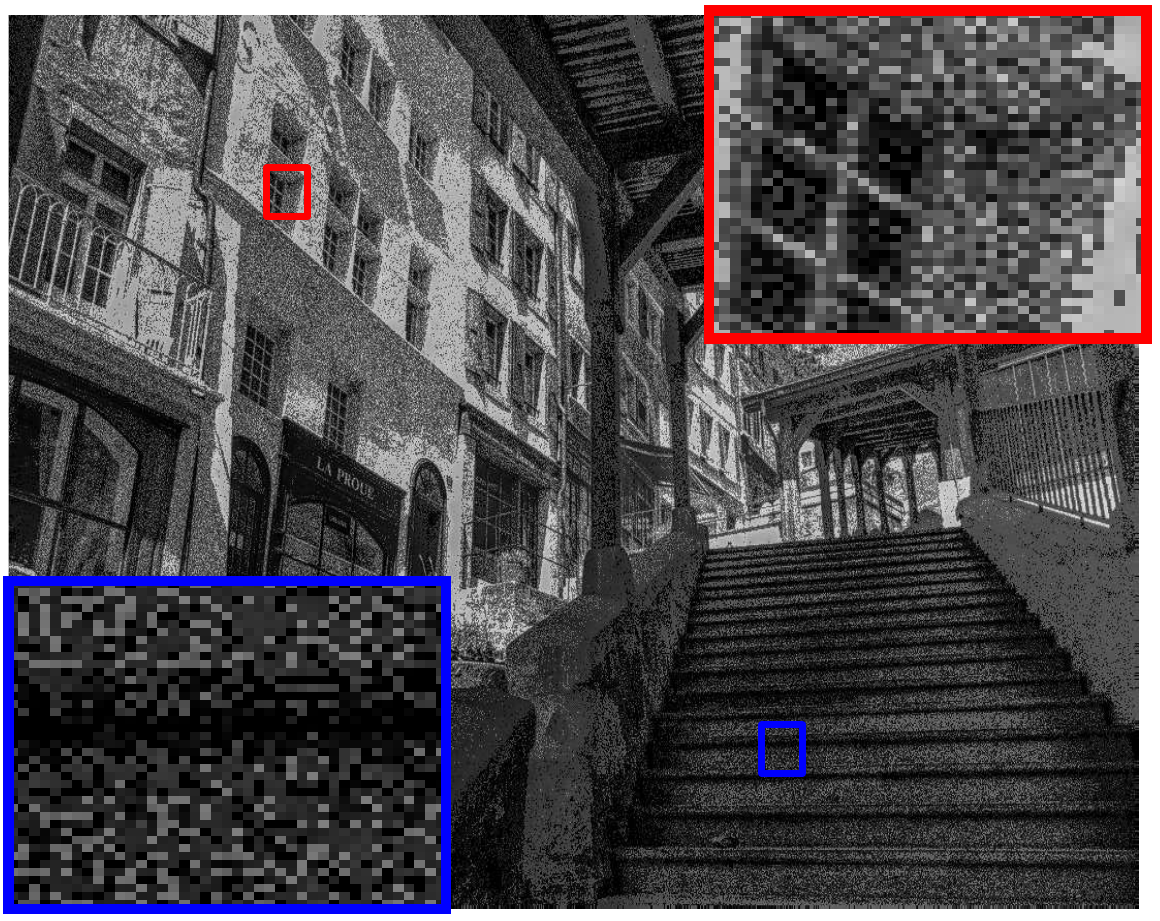}&
\includegraphics[width=0.23\linewidth]{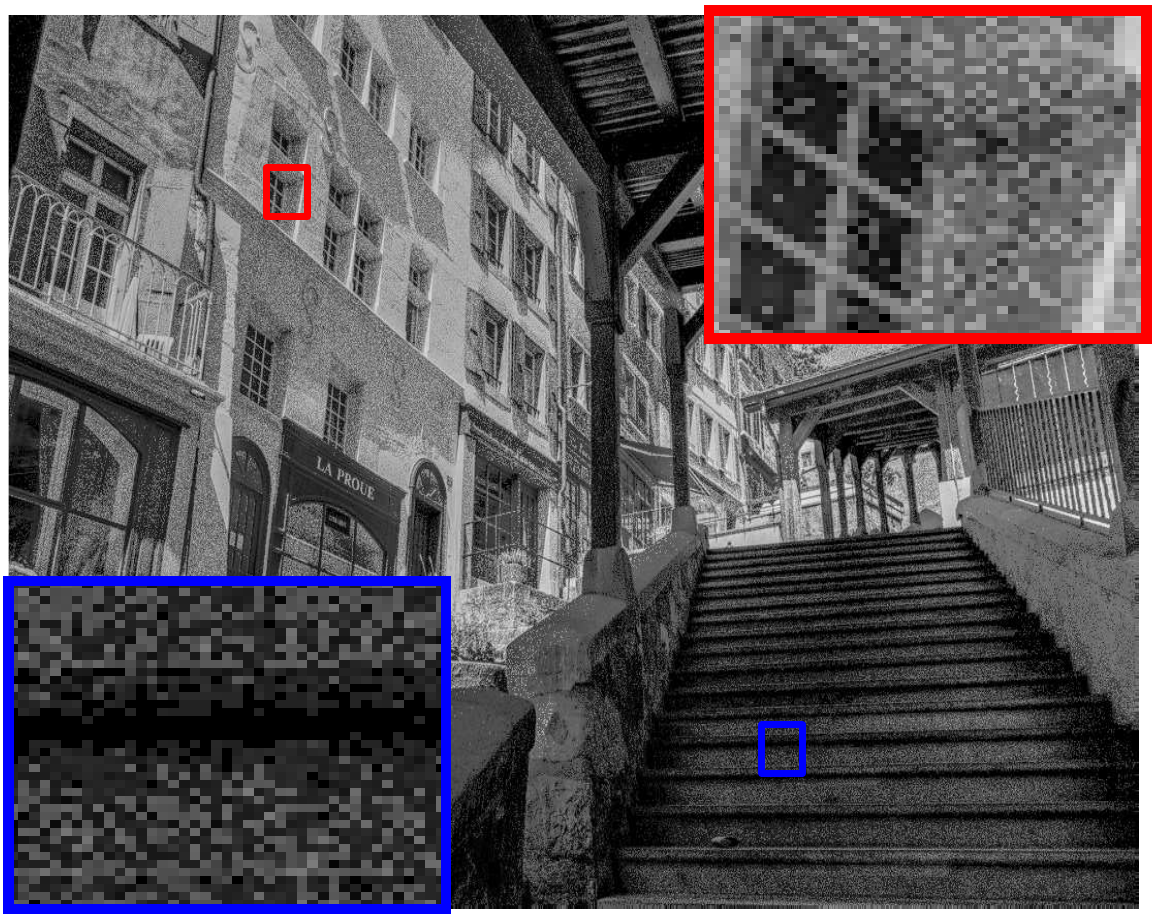}&
\includegraphics[width=0.23\linewidth]{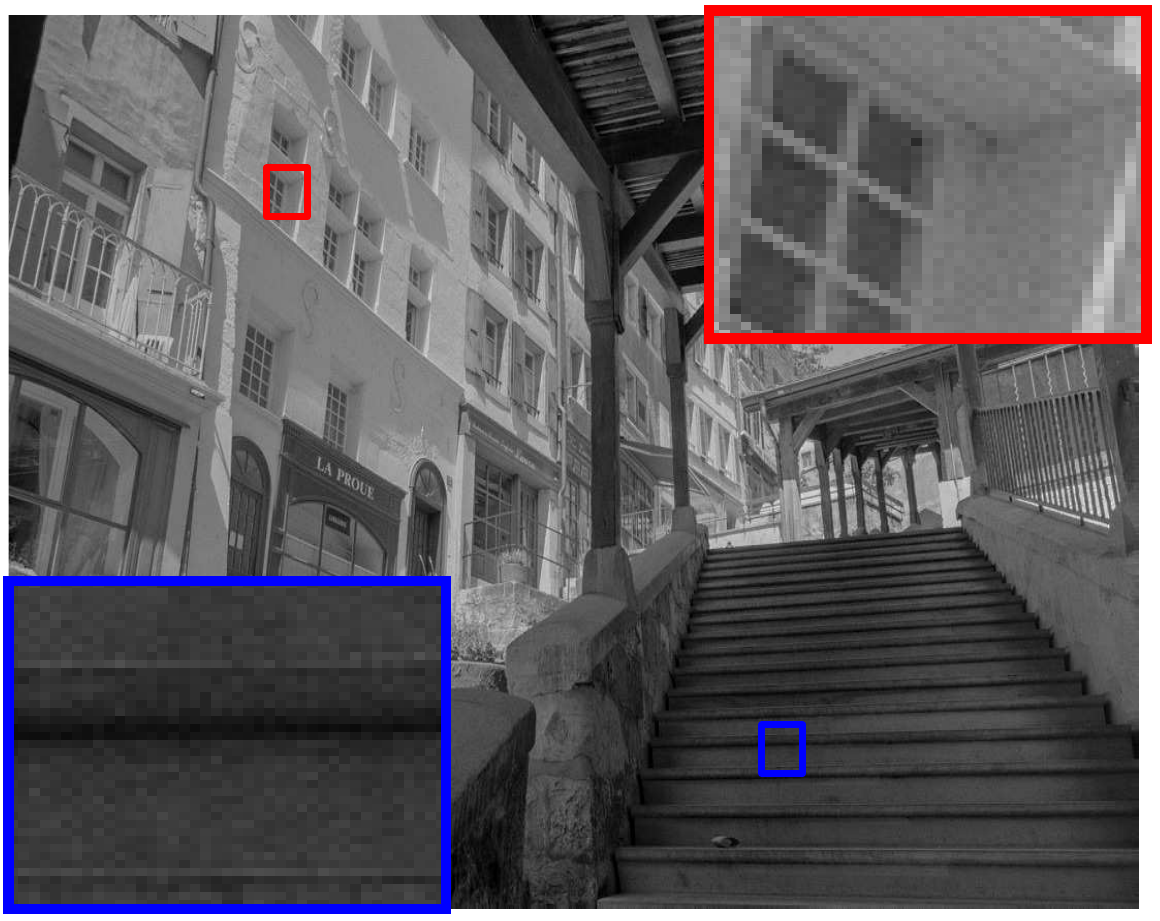}&
\includegraphics[width=0.23\linewidth]{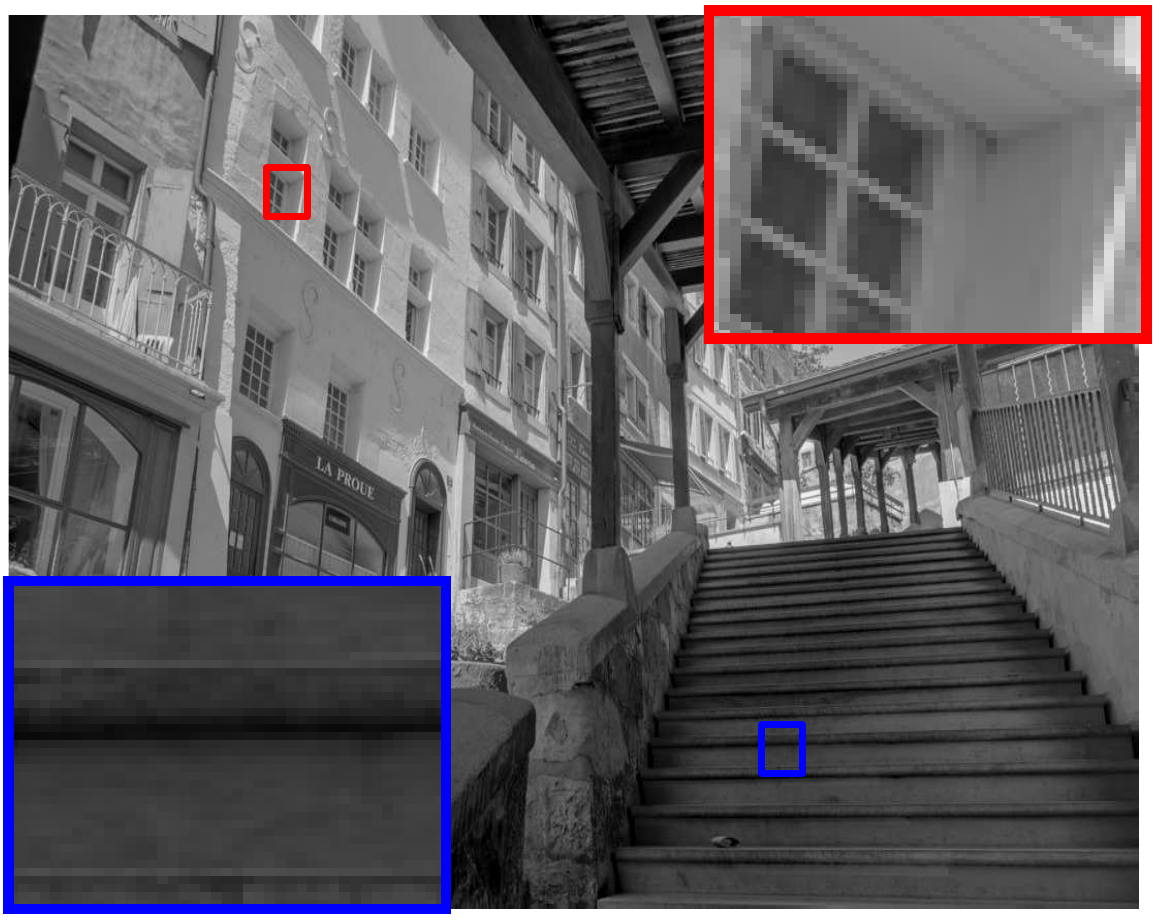}\\
\footnotesize $q = 1$, $15.94$ dB  &
\footnotesize $q = 16$, $20.77$ dB &
\footnotesize Proposed, $29.97$ dB &
\footnotesize Ground Truth\\
\includegraphics[width=0.23\linewidth]{C20_ql.pdf}&
\includegraphics[width=0.23\linewidth]{C20_qh.pdf}&
\includegraphics[width=0.23\linewidth]{C20_qs.pdf}&
\includegraphics[width=0.23\linewidth]{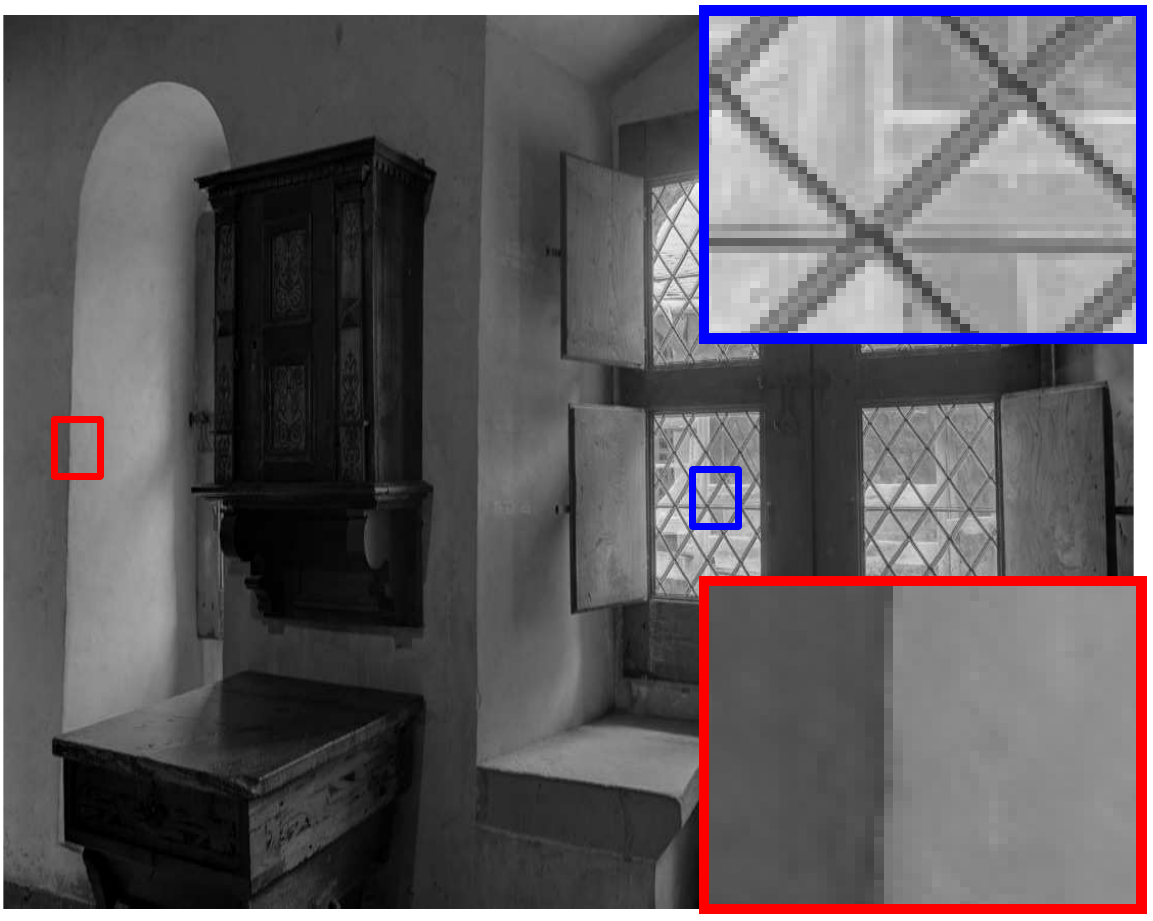}\\
\footnotesize $q = 1$, $17.94$ dB  &
\footnotesize $q = 16$, $20.77$ dB &
\footnotesize Proposed, $31.46$ dB &
\footnotesize  Ground Truth\\
\includegraphics[width=0.23\linewidth]{C31_ql.pdf}&
\includegraphics[width=0.23\linewidth]{C31_qh.pdf}&
\includegraphics[width=0.23\linewidth]{C31_qs.pdf}&
\includegraphics[width=0.23\linewidth]{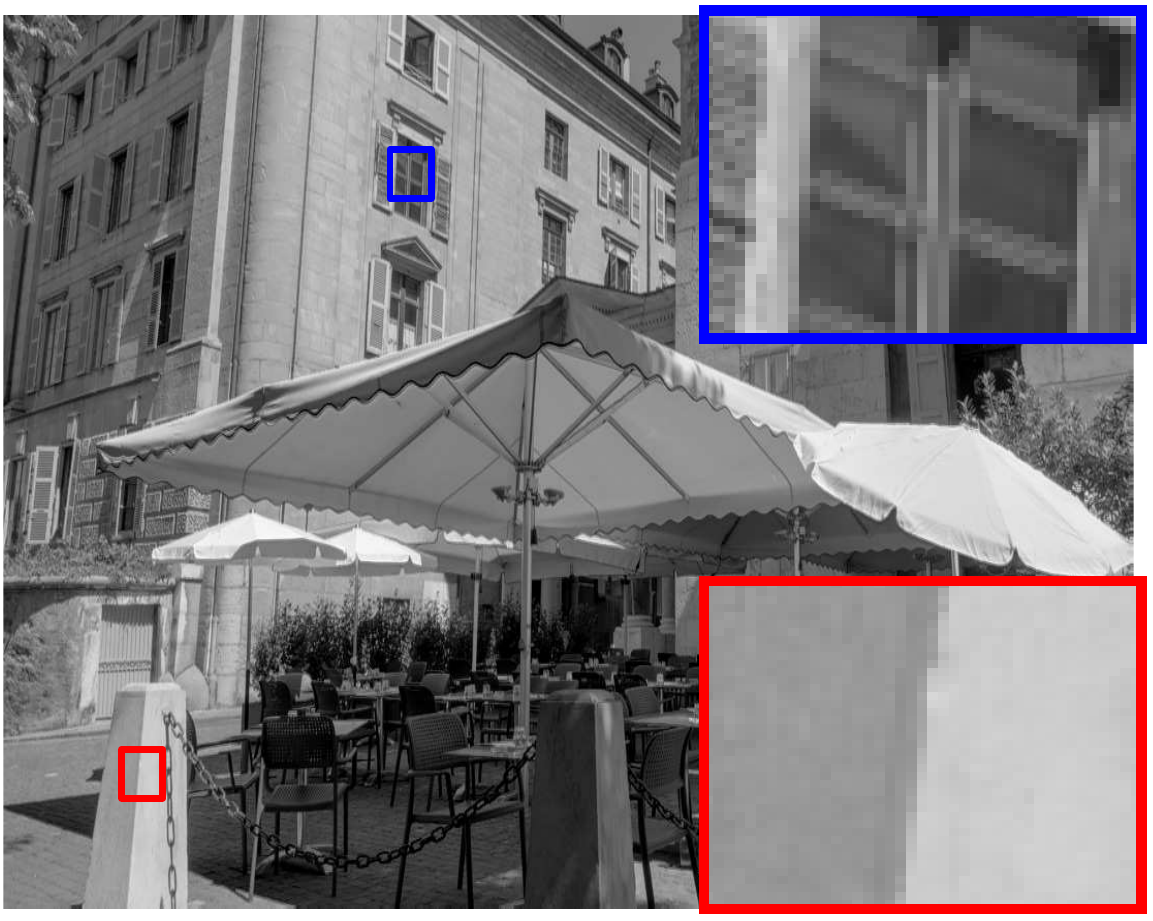}\\
\footnotesize $q = 1$, $15.74$ dB  &
\footnotesize $q = 16$, $20.01$ dB &
\footnotesize Proposed, $31.65$ dB &
\footnotesize Ground Truth
\end{tabular}
\caption{Reconstructed HDR images using different threshold maps}
\label{fig:HDR18}
\end{figure}

\bibliographystyle{IEEEbib}
\bibliography{refs}